\definecolor{pired}{rgb}{0.8745098039215686, 0.5411764705882353, 0.5411764705882353}
\newcommand{\pired}[1]{\textcolor{pired}{#1}}
\def\eqref#1{equation~\ref{#1}}
\def\1{\bm{1}}
\def\rd{{\textnormal{d}}}
\DeclareMathAlphabet{\mathsfit}{\encodingdefault}{\sfdefault}{m}{sl}
\SetMathAlphabet{\mathsfit}{bold}{\encodingdefault}{\sfdefault}{bx}{n}
\newcommand{\R}{\mathbb{R}}
\newcommand{\conditioning}{c}
\newcommand{\Pgt}{P_*} 
\newcommand{\Pmodel}{P_\theta} 
\newcommand{\pgt}{p_*(x)} 
\newcommand{\pgtcond}[1][\conditioning]{p_*(x\mid #1)}
\newcommand{\pmodel}{p_\theta(x)} 
\newcommand{\pmodelcond}[1][\conditioning]{p_\theta(x\mid #1)} 
\newcommand{\M}{\mathcal{M}} 
\newcommand{\Mgt}{\M_*} 
\newcommand{\Mmodel}{\M_\theta} 
\newcommand{\Mmodelcond}[1][\conditioning]{\M_{\theta | c}} 
\newcommand{\lid}{\text{LID}}
\newcommand{\lidgt}{\lid_*} 
\newcommand{\lidmodel}{\lid_\theta} 
\newcommand{\numdatapoints}{n}
\newcommand{\dataset}{\{x_i\}_{i=1}^\numdatapoints}
\newcommand{\aDim}{d}
\newcommand{\cDim}{k}
\newcommand{\dataspace}{\R^\aDim}
\newcommand{\condspace}{\R^{\cDim}}
\newcommand{\jointdataspace}{\R^{\aDim + \cDim}}
\newcommand{\mugt}{\mu_{*}}
\newcommand{\mumodel}{\mu_{\theta}}
\newcommand{\ggt}{g_*}
\newcommand{\gmodel}{g_\theta}
\newcommand{\scorecfg}{s^\text{CFG}_\theta(x; t, c)}
\newcounter{boxcounter}
\theoremstyle{plain}
\theoremstyle{definition}
\theoremstyle{remark}
\newcommand{\mathrescale}[1]{\scalebox{0.90}{$#1$}}
\newcommand{\LID}{\textrm{LID}}
\definecolor{redsecond}{HTML}{DF8A8A}
\definecolor{bluefirst}{HTML}{1F78B4}
\definecolor{bluesecond}{HTML}{A6CBE3}
\colorlet{redsecond}{redsecond!75!red}
\colorlet{bluefirst}{bluefirst!75!blue}
\colorlet{bluesecond}{bluesecond!75!blue}
\newcommand{\aref}[1]{\hyperref[#1]{Appendix~\ref*{#1}}}
\title{A Geometric Framework for Understanding Memorization in Generative Models}
\author{%
Brendan Leigh Ross, Hamidreza Kamkari, Tongzi Wu, Rasa Hosseinzadeh,\\
\textbf{Zhaoyan Liu, George Stein, Jesse C. Cresswell, Gabriel Loaiza-Ganem}\\
Layer 6 AI\\
\small{\{\texttt{brendan,hamid,tongzi,rasa,zhaoyan,george,jesse,gabriel}\}\texttt{@layer6.ai}}
}
\begin{document}

\maketitle

\begin{abstract}
{\looseness=-1}As deep generative models have progressed, recent work has shown them to be capable of memorizing and reproducing training datapoints when deployed. These findings call into question the usability of generative models, especially in light of the legal and privacy risks brought about by memorization. To better understand this phenomenon, we propose the \emph{manifold memorization hypothesis} (MMH), a geometric framework which leverages the manifold hypothesis into a clear language in which to reason about memorization. We propose to analyze memorization in terms of the relationship between the dimensionalities of $(i)$ the ground truth data manifold and $(ii)$ the manifold learned by the model. This framework provides a formal standard for ``how memorized'' a datapoint is and systematically categorizes memorized data into two types: memorization driven by overfitting and memorization driven by the underlying data distribution. By analyzing prior work in the context of the MMH, we explain and unify assorted observations in the literature. We empirically validate the MMH using synthetic data and  image datasets up to the scale of Stable Diffusion, developing new tools for detecting and preventing generation of memorized samples in the process.

\end{abstract}

\vspace{-4pt}
\section{Introduction} \label{sec:introduction}
\vspace{-4pt}
Suppose $\dataset$ is a dataset in $\dataspace$ drawn independently from a ground truth probability distribution $\pgt$. A deep generative model (DGM) is a probability distribution $\pmodel$ designed to capture $\pgt$ only from knowledge of $\dataset$. DGMs, and most famously, diffusion models \citep[DMs;][]{sohl2015deep, ho2020denoising}, have led the ``generative AI'' boom with their ability to generate realistic images from text prompts \citep{karras2019style, rombach2022high}. DMs are thus likely to be deployed in an increasing number of public-facing or safety-critical applications. However, with sufficient model capacity, DGMs are known to memorize some of their training data. Memorization occurs at various degrees of specificity, including identities of brands, layouts of specific scenes, or exact copies of images \citep{webster2021person,somepalli2023diffusion,carlini2023extracting}.

Memorization is undesirable for myriad reasons. 
Simply put, the more a model reproduces its training data, the less useful it becomes.  
Memorization is a modelling failure under the DGM definition provided above; if the underlying ground truth $\pgt$ does not place positive probability mass on individual datapoints, then a $\pmodel$ that memorizes any datapoint must be failing to generalize \citep{yoon2023diffusion}. But memorization's risks go beyond mere utility. Training datasets may contain private information which, if memorized, might be exposed in downstream applications. 
Copyright law includes ``substantial similarity'' between generated and training data as a criterion in its definition of infringement, meaning that reproduced training samples can open up model builders or users to legal liability. 
For instance, the recent legal decision by \citet{andersen2023case} hinged on this criterion.

\begin{figure*}[t]\captionsetup{font=footnotesize, labelformat=simple, labelsep=colon}
    \centering
    
    \begin{subfigure}{0.3\textwidth}
        \centering
        \includegraphics[width=\linewidth,trim={0 0 0 0},clip]{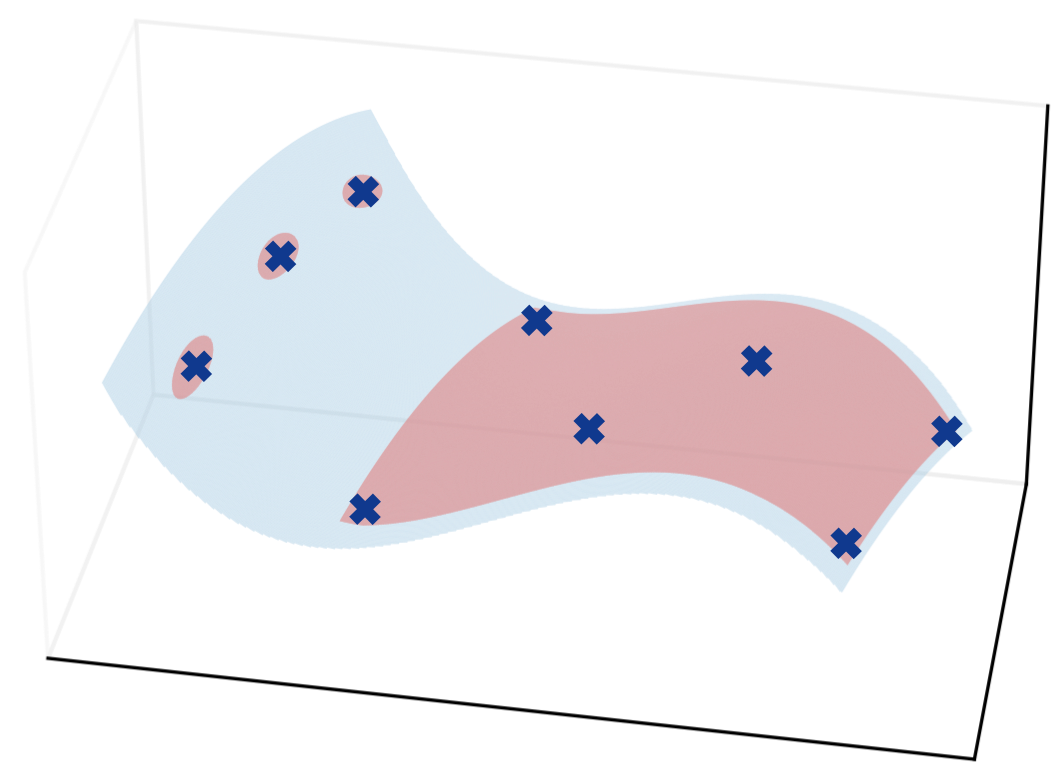}      
        \subcaption{OD-Mem with $\lidmodel(x) = 0$}
        \label{fig:main_dim0}
    \end{subfigure}%
    \begin{subfigure}{0.3\textwidth}
        \centering
        \includegraphics[width=\linewidth,trim={0 0 0 0},clip]{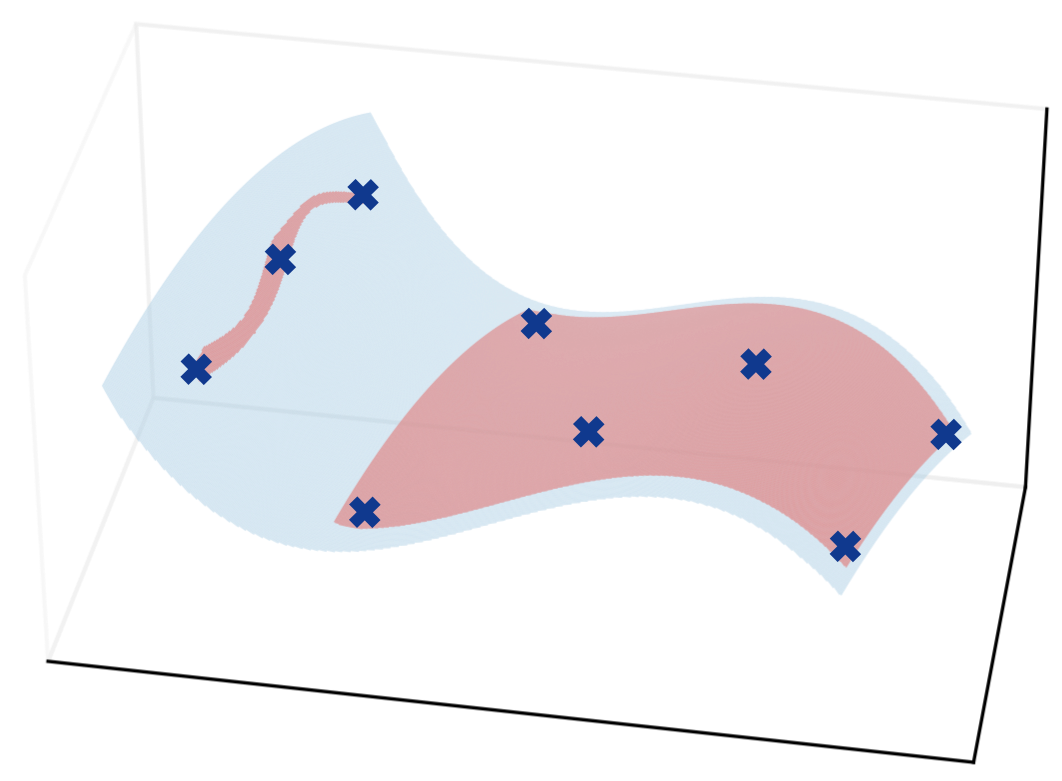}
        \subcaption{OD-Mem with $\lidmodel(x) = 1$}
        \label{fig:main_dim3}
    \end{subfigure}%
    \begin{subfigure}{0.3\textwidth}
        \centering
        \includegraphics[width=\linewidth,trim={0 0 0 0},clip]{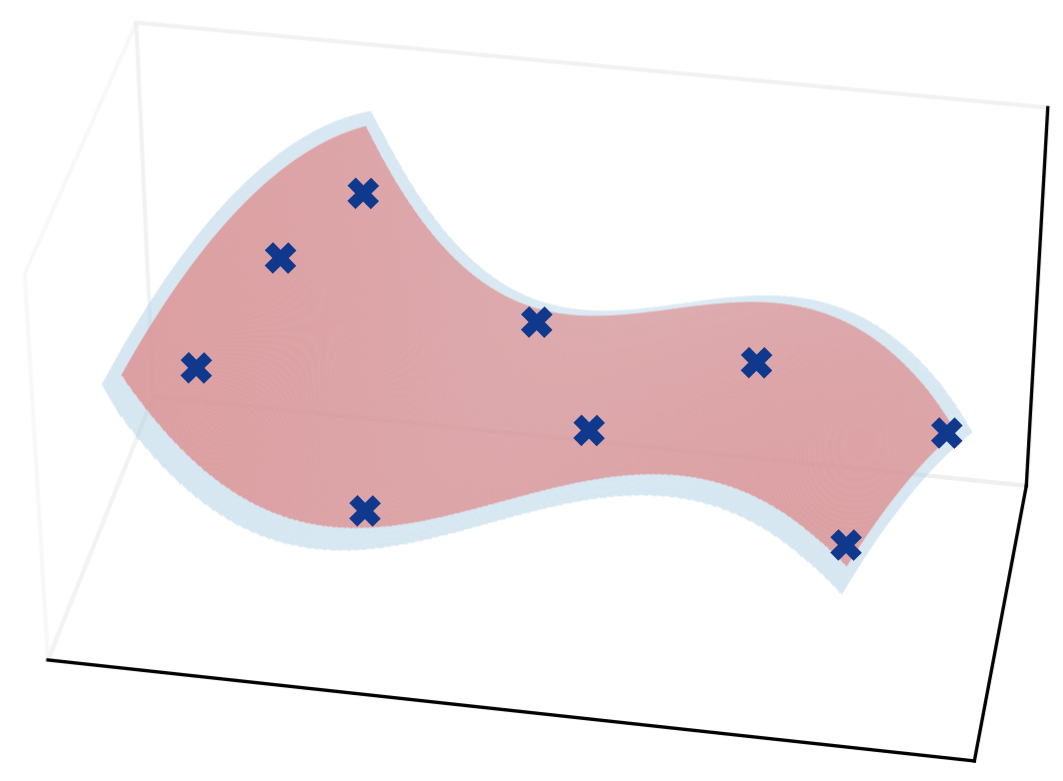}
        \subcaption{No memorization with $\lidmodel(x) = 2$}
        \label{fig:main_dim2}
    \end{subfigure}%
    \\
    \begin{subfigure}{0.3\textwidth}
        \centering
        \includegraphics[width=\linewidth,trim={0 0 0 0},clip]{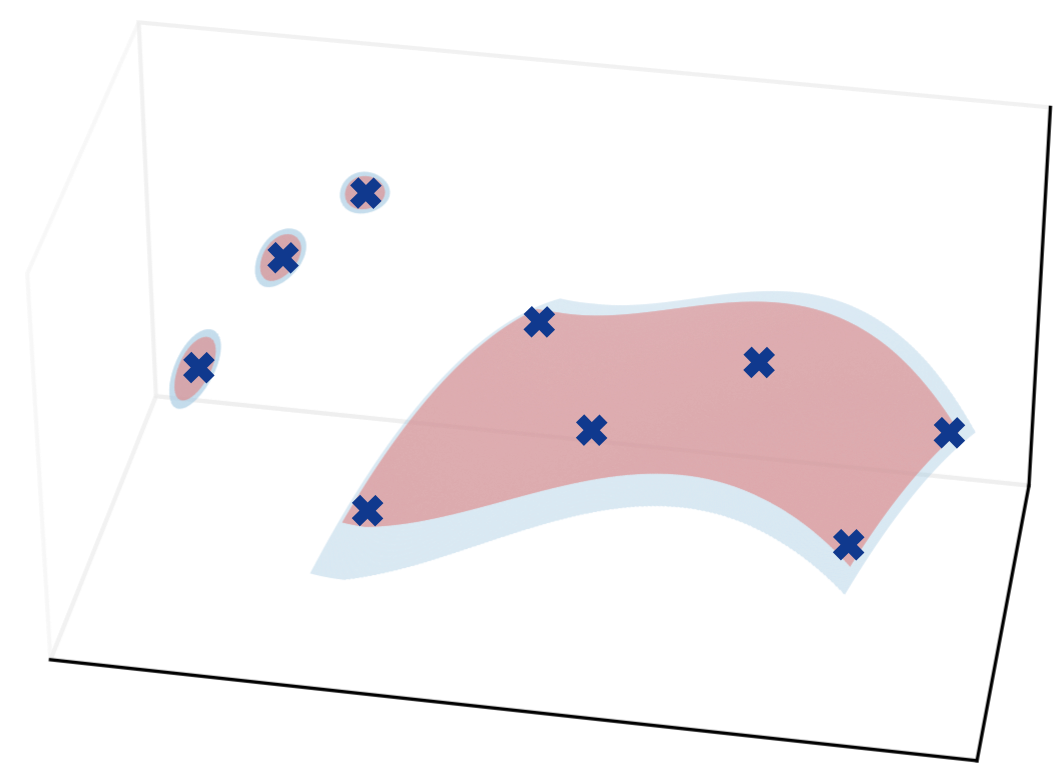}
        \subcaption{DD-Mem with $\lidmodel(x) = 0$}
        \label{fig:main_dim1}
    \end{subfigure}%
    \begin{subfigure}{0.3\textwidth}
        \centering
        \includegraphics[width=\linewidth,trim={0 0 0 0},clip]{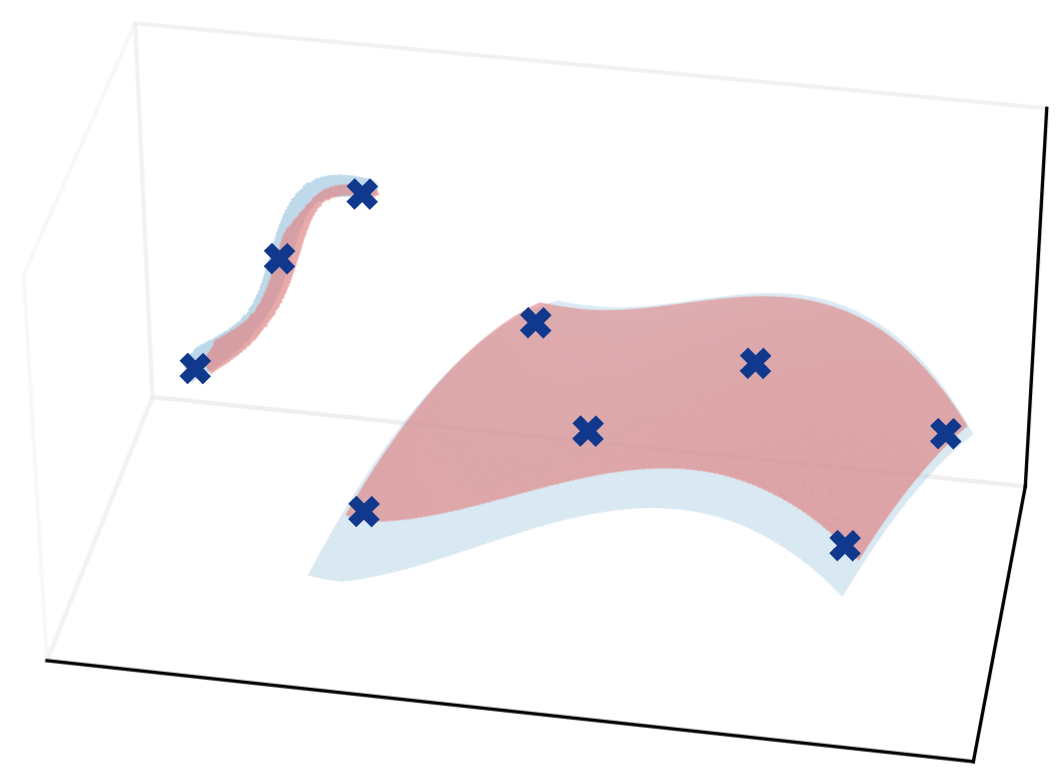}
        \subcaption{DD-Mem with $\lidmodel(x) = 1$}
        \label{fig:main_dim4}
    \end{subfigure}
    \begin{subfigure}{0.3\textwidth}
        \centering
        \includegraphics[width=\linewidth,trim={0 0 0 0},clip]{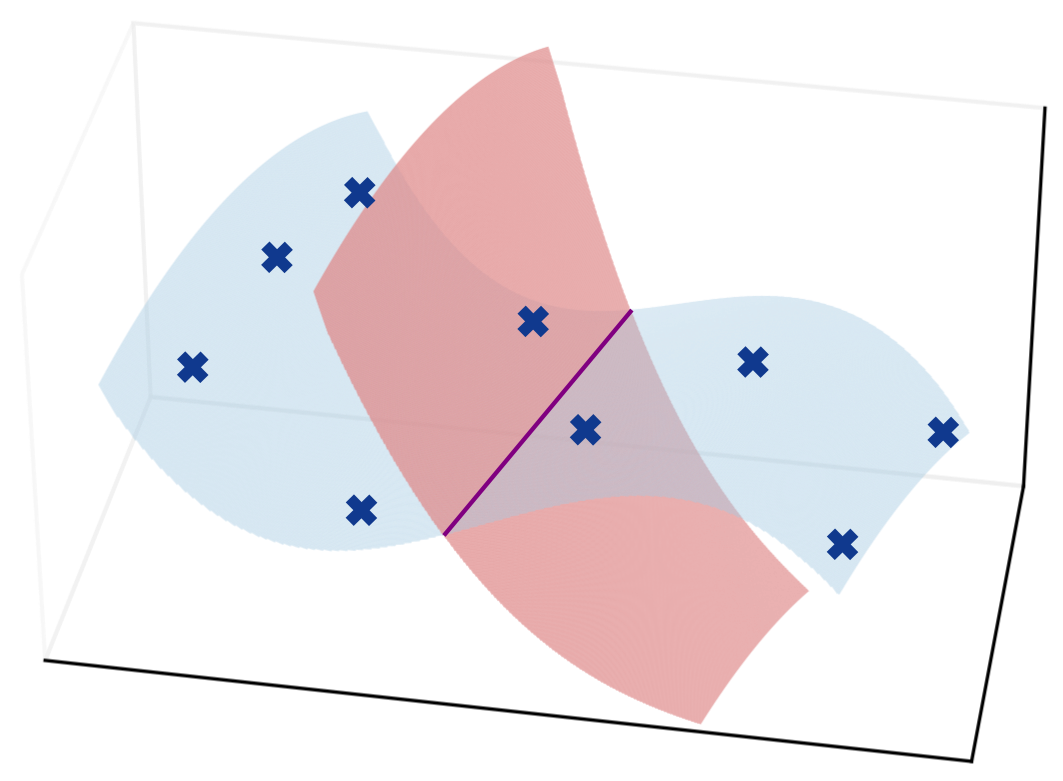}
        \subcaption{Poorly fit model, LIDs are irrelevant}
        \label{fig:main_dim5}
    \end{subfigure}
    \caption{An illustrative example of $\lid$ values for models with different quality of fit and degrees of memorization. In these plots, the ground truth manifold $\Mgt$ is depicted in \textcolor{bluesecond}{light blue}, training samples $\dataset \subset \Mgt$ are depicted as \textcolor{bluefirst}{crosses}, and the model manifolds $\Mmodel$ are depicted in \textcolor{redsecond}{red}. In \textbf{(a)} and \textbf{(d)}, the model assigns \textcolor{redsecond}{0-dimensional point masses} around the \textcolor{bluefirst}{three leftmost datapoints}, indicating that it will reproduce them directly at test time; however in the former case this is caused by overfitting ($\lidmodel(x)<\lidgt(x)$), while in the latter case it is caused by the ground truth data having small LID. The models in \textbf{(b)} and \textbf{(e)} are analoguous to \textbf{(a)} and \textbf{(b)}, respectively, and still memorize, but with an extra degree of freedom in the form of a \textcolor{redsecond}{1-dimensional submanifold} containing the \textcolor{bluefirst}{three points}. Only the model in \textbf{(c)}, which has learned a \textcolor{redsecond}{2-dimensional manifold} through its full support, has generalized well enough and has learned a manifold of high enough dimension to avoid both types of memorization. Finally, \textbf{(f)} shows a poorly fit model where LID and memorization are not meaningfully related.}
    \label{fig:main}
    \vspace{-5pt}
\end{figure*}

\begin{figure}[htbp]\captionsetup{font=footnotesize, labelformat=simple, labelsep=colon}
    \centering
    \begin{minipage}[b]{0.12\textwidth}
        \centering
        \includegraphics[width=\textwidth]{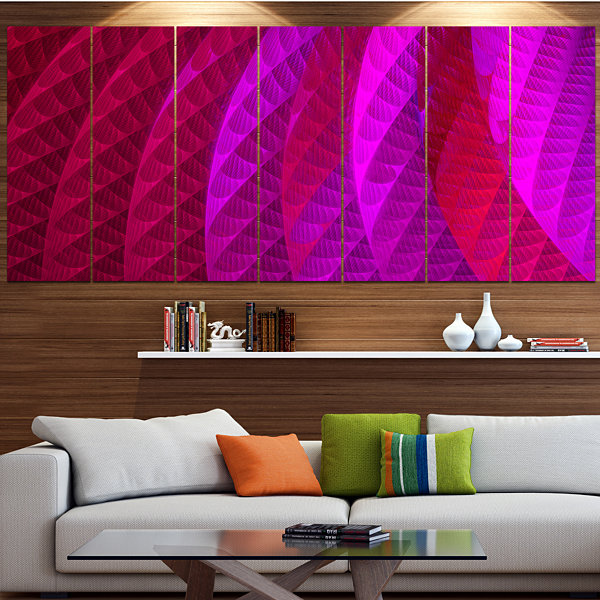}
    \end{minipage}\hfill
    \begin{minipage}[b]{0.12\textwidth}
        \centering
        \includegraphics[width=\textwidth]{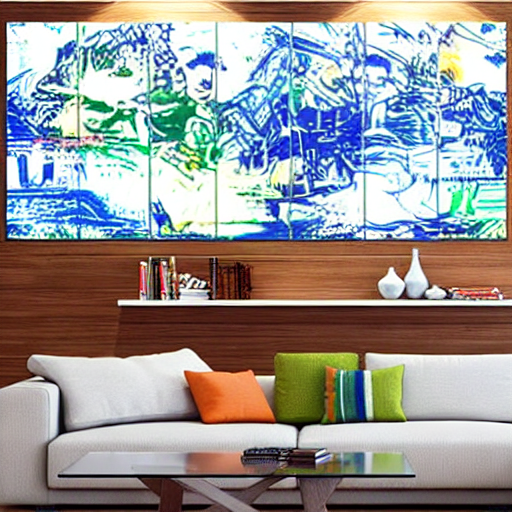}
    \end{minipage}\hfill
    \begin{minipage}[b]{0.12\textwidth}
        \centering
        \includegraphics[width=\textwidth]{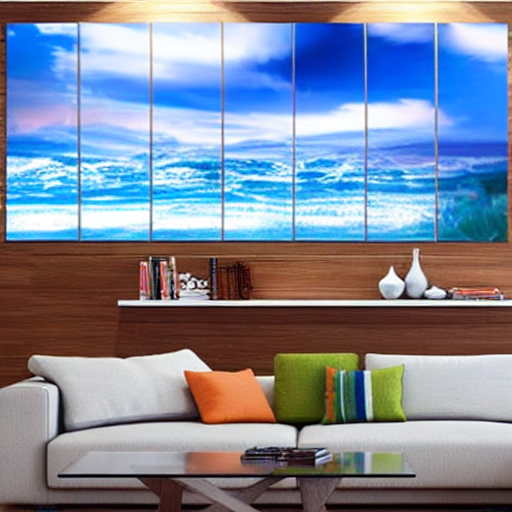}
    \end{minipage}\hfill
    \begin{minipage}[b]{0.12\textwidth}
        \centering
        \includegraphics[width=\textwidth]{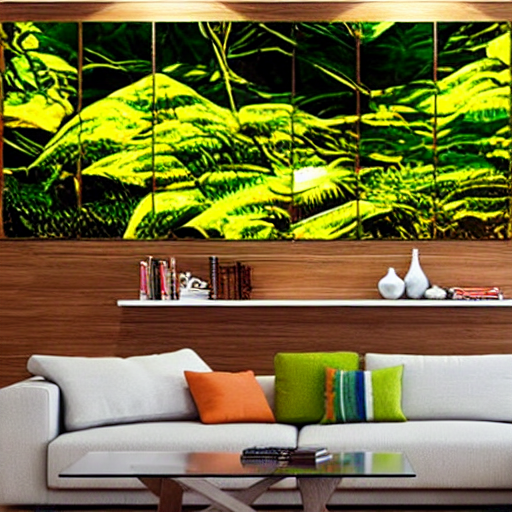}
    \end{minipage}\hfill
    \begin{minipage}[b]{0.12\textwidth}
        \centering
        \includegraphics[width=\textwidth]{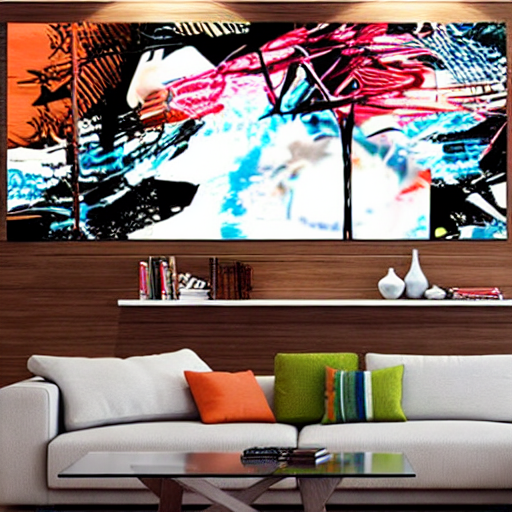}
    \end{minipage}\hfill
    \begin{minipage}[b]{0.12\textwidth}
        \centering
        \includegraphics[width=\textwidth]{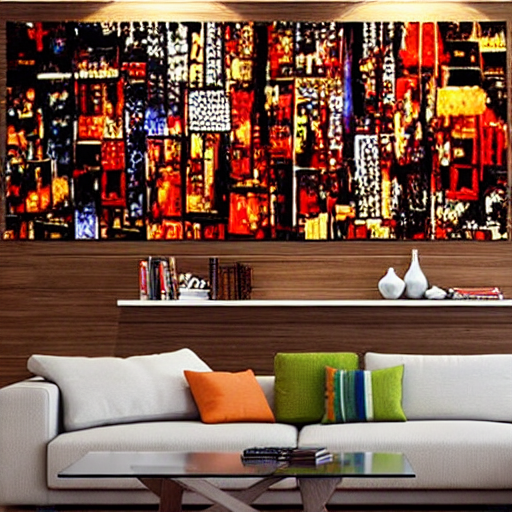}
    \end{minipage}\hfill
    \begin{minipage}[b]{0.12\textwidth}
        \centering
        \includegraphics[width=\textwidth]{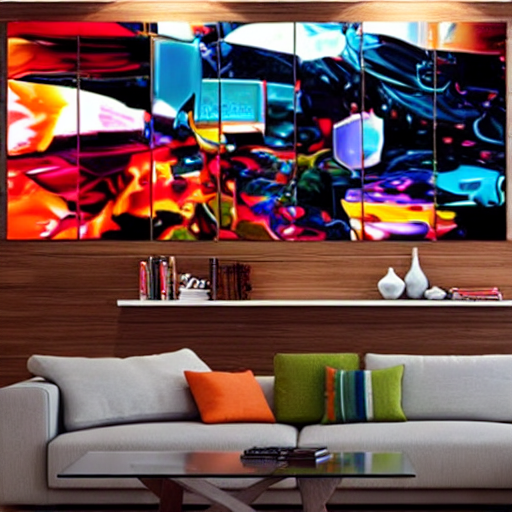}
    \end{minipage}\hfill
    \begin{minipage}[b]{0.12\textwidth}
        \centering
        \includegraphics[width=\textwidth]{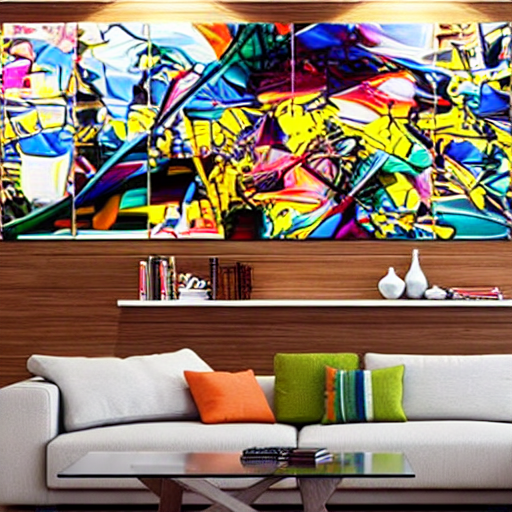}
    \end{minipage}
    \caption{8 images along a relatively low-dimensional manifold learned by Stable Diffusion v1.5. The first is a real image from LAION (flagged as memorized by \citet{webster2023reproducible}), and the remainder were generated by the model.}
    \label{fig:low_dim_images}
    \vspace{-5pt}
\end{figure}

The increasing dependence of society on generative models and resulting risks call for work to better understand memorization. Recent empirical work has identified mechanistic causes of memorization including but not limited to data complexity, duplication of training points, and highly specific labels \citep{somepalli2023understanding,gu2023memorization}. We group these insights under the umbrella of ``memorization phenomena'', a catch-all term for the various interesting memorization-related observations we would like to understand better. Though useful in practice, these memorization phenomena have yet to be unified and interpreted under a single theoretical framework. Meanwhile, formal treatments of memorization have led to isolated usecases such as detection \citep{meehan2020non,bhattacharjee2023data} and prevention on a model level \citep{vyas2023provable}, but have provided little explanatory power for memorization phenomena. In addition to providing theoretical insights, a unifying framework could yield more capabilities such as identifying whether a training image has been memorized, altering the sampling process to reduce memorization, and detecting memorized generations post hoc. 

In this work, we introduce the \emph{manifold memorization hypothesis (MMH)}, a geometric framework to explain memorization. In short, we propose that \emph{memorization occurs at a point $x \in \dataspace$ when the manifold learned by the generative model contains $x$ but has too small a dimensionality at $x$}. As we will see, this understudied perspective is a natural take on memorization that leads to practical insights and effectively explains memorization phenomena like those mentioned above. Although we mainly focus on DMs, the most notorious memorizers, our geometric framework applies to any DGM on a continuous data space $\dataspace$; indeed, we empirically validate it on generative adversarial networks \citep[GANs;][]{goodfellow2014generative, karras2019style} as well. \citet{pidstrigach2022score} was the first to show that DMs are capable of learning low-dimensional structure in $\dataspace$ and that this manifold learning capability is a driver of memorization; in this sense, our work extends this connection into a general framework, grounds it in empirical findings, and connects it to recent work on memorization.

This paper is organized according to the following contributions.
\begin{enumerate}
    \item We advance the MMH in \autoref{sec:mem_lid}. After defining the key notions of the data manifold and local intrinsic dimension ($\lid$), we describe how $\lid$s correspond to memorization.
    \item We demonstrate the explanatory power of the MMH in \autoref{sec:phenomena} by grounding it in prior observations about the behaviour of models that memorize. As this section will show, memorization phenomena observed in past work can be predicted and explained by the MMH.
    \item In \autoref{subsec:verification}, we empirically test the MMH, showing that it both accurately describes reality and is useful in practice. As predicted by the MMH, estimates of $\lid$ are strongly predictive of memorization at scales ranging from 2-dimensional synthetic data to Stable Diffusion \citep{rombach2022high}.
    \item Finally, inspired by the MMH, in \autoref{subsec:mitigation} we devise scalable approaches to avert memorization during sampling from Stable Diffusion and to identify tokens in the text conditioning that contribute to memorization.
\end{enumerate}

\vspace{-4pt}
\section{Understanding Memorization through LID} \label{sec:mem_lid}
\vspace{-4pt}

\paragraph{Preliminaries}

Here we presume the manifold hypothesis: that data of interest lies on a manifold $\M \subset \dataspace$ \citep{bengio2013representation}. We take a generalized definition of manifold in which $\M$ is allowed to have different dimensionalities in different regions,\footnote{Most authors define a manifold to have a constant dimension over the entire set. Under this common definition, our assumption is referred to as the union of manifolds hypothesis \citep{brown2022verifying}. We use a more general definition of manifold for brevity.} which is appropriate for realistic, heterogeneous data with varying degrees of structure and complexity. In particular, we assume that both our ground truth distribution $\pgt$ and our model $\pmodel$ produce samples on manifolds, which we refer to as $\Mgt$ and $\Mmodel$ respectively. We direct readers to \citet{loaiza2024deep} for a justification and formal mathematical treatment of both of these assumptions, which are especially valid when the data is high-dimensional and the models are high-performing ones such as DMs and GANs.

Our framework for understanding memorization revolves around the notion of a point's \emph{local intrinsic dimension} ($\lid$). Given a manifold $\M$ and a point $x \in \M$, we define the $\lid$ of $x$, $\lid(x)$, with respect to $\M$ as the dimensionality of $\M$ at $x$. In this work, we will mainly consider the $\lid$s of points $x \in \dataspace$ with respect to two specific manifolds: $\Mgt$ and $\Mmodel$. We will refer to these quantities as $\lidgt(x)$ and $\lidmodel(x)$, respectively.

\textbf{Intuition and the Manifold Hypothesis}\quad Before discussing our framework, we review some intuition relating the manifold hypothesis to practical datasets. Manifold structure $\M \subset \dataspace$ arises from sets of constraints. These can range from very simple, like a set of linear constraints ($\M = \{x \mid Ax = b\}$), to highly complex ($\M = \{x \mid x\text{ is an image of a face}\}$). Locally at a point $x \in \M$, each constraint determines a direction one cannot move without leaving the manifold and violating the structure of the dataset.\footnote{This statement is captured formally by the regular level set theorem of differential geometry, and manifolds can be modelled as such \citep{lee2012smooth,ross2023neural}.} Hence, a region governed by $\ell$ independent and active constraints will have dimensionality $\lid(x) = \aDim - \ell$. The value of $\lid(x)$ can be intuited as the number of degrees of freedom -- valid independent directions of movement in which the characteristics of the dataset are preserved. Another connection is to complexity. For example, estimates of $\lid$ from algorithms like FLIPD \citep{kamkari2024geometric2} or the normal bundle (NB) method of \citet{stanczuk2022diffusion} (which we use in our experiments; see \autoref{appx:lid} for details)  have been shown to correspond closely with the complexity of an image; it is reasonable to expect that images with more complex features can endure more changes (such as morphing, moving, or changing the colours of different parts of the image) without losing coherence. The notions of constraints, degrees of freedom, and complexity along with their relationship to $\lid$ will help us understand its connection to memorization in later sections.

\textbf{A Geometric Framework for Understanding Memorization}\quad In this section we formulate a framework for understanding memorization based on comparisons between $\lidmodel(x)$ and $\lidgt(x)$. As a motivating example, consider \autoref{fig:main}, which depicts six possible models $\pmodel$ trained on datasets $\dataset$ that each lie on a ground truth manifold $\Mgt$. In the first scenario, \autoref{fig:main_dim0}, the model $\pmodel$ has precisely memorized some of the training data. This is a well-understood mode of memorization; training datapoints are exactly reproduced. To achieve this, the model has learned a 0-dimensional manifold around these datapoints. To our knowledge, \citet{pidstrigach2022score} was the first to point out that a model capable of learning 0-dimensional manifolds can memorize the training data. From this example, we infer that $x$ can be perfectly reproduced when $\lidmodel(x) = 0$. This indicates suboptimality in the model at the datapoints shown, for which $\lidgt(x) = 2$.

However, memorization can be more complex than simply reproducing a datapoint. For example, \citet{somepalli2023diffusion} identify instances where layouts, styles, or foreground or background objects in training images are copied without copying the entire image, a phenomenon they refer to as \emph{reconstructive memory}. \citet{webster2023reproducible} surfaces more instances of the same phenomenon and refers to them as \emph{template verbatims}. See \autoref{fig:low_dim_images} for an example. In the region of these points $x \in \Mmodel$, the model is able to generate images with degrees of freedom in some attributes (e.g., colour or texture), but is too constrained in other attributes (e.g., layout, style, or content). Geometrically, $\Mmodel$ is too constrained compared to the idealized ground truth manifold $\Mgt$; i.e., $\lidmodel(x) < \lidgt(x)$. We depict this situation in \autoref{fig:main_dim3}, wherein the model has erroneously assigned $\lidmodel(x) = 1$ for some of the training datapoints.

\textbf{Two Types of Memorization}\quad We expect two types of memorization to be of interest. An academic interested in designing DGMs that learn the ground truth distribution correctly will chiefly be interested in avoiding the memorization scenario $\lidmodel(x) < \lidgt(x)$. We refer to this first scenario as \emph{overfitting-driven memorization} (OD-Mem). This situation represents a modelling failure in that $\pmodel$ is not generalizing correctly to $\pgt$, and is illustrated in \autoref{fig:main_dim0} and \autoref{fig:main_dim3}. 

However, an industry practitioner deploying a consumer-facing model might be more interested in hypothetical values of $\lidmodel$ \textit{per se}, irrespective of the values of $\lidgt$. For any points $x \in \Mgt$ containing trademarked or private information, low values of $\lidmodel(x)$ will be of concern even if $\lidmodel(x) = \lidgt(x)$, as this information is likely to be revealed in samples generated from this region. A practitioner would rightly refer to this situation as memorization despite the model generalizing correctly. We refer to this second scenario as \emph{data-driven memorization} (DD-Mem), and illustrate it in \autoref{fig:main_dim1} and \autoref{fig:main_dim4}. This certainly happens in practice; for example, conditioning on the title of a specific artwork (e.g. ``\emph{The Great Wave off Kanagawa}'' by Katsushika Hokusai \citep{somepalli2023diffusion}) is a very strong constraint, leaving few degrees of freedom in the ground truth manifold $\Mgt$, but reproducing specific artworks may be undesirable in a production model. Unlike OD-Mem, DD-Mem is not overfitting in the classical sense, and a notable consequence is that it cannot be detected by comparing training and test likelihoods. We refer to the conceptualization of how LIDs relate to memorization through OD-Mem and DD-Mem as the \emph{manifold memorization hypothesis}.

No memorization is present in \autoref{fig:main_dim2}, in which the model manifold $\Mmodel$ matches the ground truth manifold $\Mgt$ and both have sufficiently high $\lid$. We highlight that the MMH assumes high-performing models whose manifold $\Mmodel$ is roughly aligned with the data manifold $\Mgt$; when this is not the case, as in \autoref{fig:main_dim5}, $\lidmodel$ and its relationship to $\lidgt$ become irrelevant to memorization.

\paragraph{Why is the MMH Useful?} The MMH is a hypothesis about how memorization occurs in practice for high-dimensional data. Its utility is best framed in contrast to past treatments of memorization. First, while past theoretical frameworks for memorization have focused on probability mass, our geometric perspective leads to more practical tools.
For example, \citet{bhattacharjee2023data} propose a purely probabilistic definition of memorization that can be detected only with access to the training dataset and the ability to generate large numbers of samples, which are intractable requirements at the scale of LAION-2B \citep{schuhmann2022laion} and Stable Diffusion. In contrast, the MMH suggests that memorization can be detected through $\lidmodel(x)$, for which tractable estimators exist at scale. We explore these estimators in \autoref{sec:experiments}.

Second, the MMH explains and quantifies the phenomenon depicted in \autoref{fig:low_dim_images}: reconstructive memorization. While it has been studied in the past \citep{somepalli2023diffusion,webster2023reproducible,wen2023detecting}, it has been resistant to theoretical explanation in part because past work has defined memorization based on distance to the memorized training point (see \autoref{appx:definitions} for more discussion on definitions). It is clear from \autoref{fig:low_dim_images} that distance cannot capture reconstructive memorization; the training datapoint on the left is far in pixel space from the Stable Diffusion-generated samples to its right. Our framework overcomes this challenge by interpreting memorization in relation to the model and data manifolds without reference to distances or any specific training datapoint.

Third, the MMH distinguishes between OD-Mem and DD-Mem, while past analyses have not. \citet{bhattacharjee2023data} would allow for OD-Mem but not DD-Mem under their definition of memorization, while empirical work tends to ignore the effect of $\pgt$ on $\pmodel$, thus subsuming both OD-Mem and DD-Mem in spirit if not formally \citep{carlini2023extracting,yoon2023diffusion,gu2023memorization}. For further details, please see \autoref{appx:definitions}, where we formally develop the relationship between the MMH and definitions of memorization in related work.

Defining and distinguishing between OD-Mem and DD-Mem suggests immediate solutions to each. OD-Mem is overfitting and can be addressed accordingly, such as by collecting more data or improving a model's inductive biases. On the other hand, DD-Mem indicates that the training distribution $\pgt$ does not actually match the desired distribution at inference time, and hence is a misalignment of data and objectives. It can be addressed by changing $\pgt$ itself, such as by altering the data collection, cleaning, and augmentation procedures. We explore this point further in \autoref{sec:phenomena}. Unlike OD-Mem, DD-Mem cannot be addressed by improving the model to generalize better. Both OD-Mem and DD-Mem can also in principle be addressed by augmenting $\pmodel$ to generate higher-LID samples. In \autoref{sec:experiments}, we propose solutions to alter the data-generating process with precisely this goal.
\vspace{-4pt}
\section{Explaining Memorization Phenomena} \label{sec:phenomena}
\vspace{-4pt}
In this section, we demonstrate the explanatory power of the MMH by showing how it explains memorization phenomena in related work. In the process, this section demonstrates two advantages of our geometric framework. First, it provides a unifying perspective on seemingly disparate observations throughout the literature (nevertheless, this is not meant as a related work section --- for that see \autoref{sec:related}). Second, the MMH links memorization to the rich theoretical toolboxes of measure theory and geometry, which we use in this section to establish formal connections to past work. Propositions, theorems, and proofs in this section are presented informally for clarity. For full theorem statements and proofs, please see \autoref{appx:proofs}.

\textbf{Duplicated Data and $\lid$}\quad 
It has been broadly observed that memorization occurs when training points are duplicated \citep{nichol_dalle_2022,carlini2022quantifying,somepalli2023diffusion}.  In \autoref{thm:duplication}, we show that duplicated datapoints lead to DD-Mem; duplicated points $x_0$ indicate $\lidgt(x_0) = 0$, so even a correctly fitted model will have $\lidmodel(x_0) = 0$ (as in \autoref{fig:main_dim1}). 

\begin{restatable}[Informal]{proposition}{duplication}\label{thm:duplication}
Let $\dataset$ be a training dataset drawn independently from $\pgt$. Under some regularity conditions, the following hold:
\begin{enumerate}
    \item If duplicates occur in $\dataset$ with positive probability, then they occur at a point $x_0$ such that $\lidgt(x_0) = 0$.
    \item If $\lidgt(x_0) = 0$ and $\numdatapoints$ is sufficiently large, then duplication will occur in $\dataset$ with near-certainty.
\end{enumerate}
\end{restatable}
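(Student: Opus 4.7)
The plan is to reduce both statements to the elementary fact that $\lidgt(x_0) = 0$ should be equivalent, under the MMH framework's regularity conditions, to $\pgt$ having an atom at $x_0$, i.e., $\pgt(\{x_0\}) > 0$. This correspondence is the conceptual bridge between the geometric language of the MMH and the measure-theoretic language needed to reason about i.i.d. sampling. Once this equivalence is pinned down, both parts reduce to routine probability arguments.

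For Part 1, I would argue by contrapositive. Suppose $\lidgt(x) > 0$ for every point in the support of $\pgt$. Then $\pgt$ is a non-atomic distribution on $\Mgt$: every single-point set has probability zero, since the restriction of $\pgt$ to any positive-dimensional component of $\Mgt$ is absolutely continuous with respect to the induced volume measure, and singletons have zero volume on manifolds of positive dimension. For a non-atomic product measure, Fubini's theorem gives $\mathbb{P}(X_i = X_j) = \int \pgt(\{x\})\, d\pgt(x) = 0$ for any $i \ne j$, and a union bound over the $\binom{n}{2}$ pairs yields $\mathbb{P}(\text{duplicates in } \dataset) = 0$. Contrapositively, if duplicates occur with positive probability, there must exist an $x_0$ with $\pgt(\{x_0\}) > 0$, and by the equivalence above $\lidgt(x_0) = 0$.

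For Part 2, once $\lidgt(x_0) = 0$ gives $p \coloneqq \pgt(\{x_0\}) > 0$, the count $K$ of samples in $\dataset$ equal to $x_0$ is $\mathrm{Binomial}(n, p)$. Then $\mathbb{P}(\text{duplication at } x_0) \ge \mathbb{P}(K \ge 2) = 1 - (1-p)^n - n p(1-p)^{n-1}$, and both correction terms vanish as $n \to \infty$ (the second by $n(1-p)^{n-1} \to 0$ for fixed $p \in (0,1]$). Thus for any $\delta > 0$ there exists $n$ large enough that duplication occurs with probability at least $1 - \delta$.

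The main obstacle is not the probability arguments themselves, which are essentially one line each, but rather stating the regularity conditions crisply enough to justify the equivalence $\lidgt(x_0) = 0 \iff \pgt(\{x_0\}) > 0$. In the MMH framework, $\Mgt$ is a union of components of varying dimension, and one needs to posit that $\pgt$ restricted to any positive-dimensional component is absolutely continuous with respect to the associated Hausdorff/volume measure (so that those components contribute no atoms), while the $0$-dimensional components of $\Mgt$ are precisely the atoms of $\pgt$. This is the only place where the paper's geometric assumptions really do work, and I would state it explicitly as the regularity condition rather than try to derive it from the manifold hypothesis alone.
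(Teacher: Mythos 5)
Your proposal follows essentially the same route as the paper's proof: establish the equivalence $\lidgt(x_0)=0 \iff \Pgt(\{x_0\})>0$ as a lemma (the paper's Lemma~\ref{lem:duplication}, proved via the Riemannian measure $\mugt$ and the openness of singletons in a zero-dimensional component), then run Fubini for Part~1 by contrapositive and a limiting argument for Part~2, exactly as you outline. One small remark on Part~2: your binomial computation $\mathbb{P}(K\ge 2)=1-(1-p)^n-np(1-p)^{n-1}$ is actually tighter than what appears in the paper, which lower-bounds $\Pgt(x_i=x_j=x_0\text{ for some }i<j)$ by $1-(1-\Pgt(\{x_0\}))^{n-1}$ --- that is, by the probability that \emph{at least one} of $x_2,\dots,x_n$ equals $x_0$; since one hit does not yield a duplicate, that intermediate inequality goes the wrong way even though the final limit is correct, and your two-term bound is the clean fix.
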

\vspace{-10pt}
\begin{proof} See \autoref{appx:proofs} for the formal statement of the theorem and proof. To understand both conditions, it suffices to note first that duplicate samples are intuitively equivalent to $\pgt$ assigning positive probability to a point. Under mild regularity conditions on the nature of the $\pgt$ and $\Mgt$, positive probability at a point is equivalent to a 0-dimensional manifold at that point.
\end{proof}
\vspace{-10pt}
From this result, we gather that improving model generalization is not the solution to duplication. Instead, one may need to add inductive biases that prevent $\pmodel$ from learning 0-dimensional points. Of course, the more straightforward path is to change the data distribution $\pgt$ by de-duplicating the training dataset. We carry the same intuition forward to ``near-duplicated content'', where similar but non-identical points occur together in the dataset, in which case $\lidgt$ would be low but nonzero in the region of the near-duplicated content (as in \autoref{fig:main_dim4}). 

\textbf{Conditioning and $\lid$}\label{sec:conditioning} \quad \citet{somepalli2023understanding} and \citet{yoon2023diffusion} observe that conditioning on highly specific prompts $c$ encourages the generation of memorized samples. Here, we point out that conditioning decreases $\lid$, making models more likely to generate memorized samples.

\begin{restatable}[Informal]{proposition}{duplication}\label{thm:conditioning}
Let $x_0 \in \Mgt$, and let us denote by $\lidgt(x_0 \mid \conditioning)$ the $\lid$ of $x_0$ with respect to the support of the conditional distribution $\pgtcond$. We then have
\begin{equation}
    \lidgt(x_0 \mid \conditioning) \leq \lidgt(x_0).
\end{equation}
\end{restatable}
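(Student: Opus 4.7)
The plan is to reduce the inequality to two facts: (i) the support of $\pgtcond$ is contained in $\Mgt$, at least locally around $x_0$; and (ii) local intrinsic dimension is monotone under set inclusion. Once these are in hand, the proposition becomes a one-line consequence.

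For (i), write $\mathcal{M}_{*\mid c}$ for the support of $\pgtcond$ and recall that $x$ lies in this support iff every neighborhood of $x$ carries positive $\pgtcond$-mass. Since $\pgtcond \propto p_*(x, c)$ whenever $p_*(c) > 0$, positive conditional mass around $x$ forces positive marginal mass around $x$, so $\mathcal{M}_{*\mid c} \subseteq \Mgt$. Equivalently, and perhaps more cleanly, one can work with the joint support $\mathcal{M}^{\mathrm{joint}}_* \subset \R^{\aDim + \cDim}$: the conditional support at $c$ is the image under projection onto $\dataspace$ of the slice $\mathcal{M}^{\mathrm{joint}}_* \cap (\dataspace \times \{c\})$, which is manifestly contained in the full projection $\Mgt$.

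For (ii), observe that if $N \subseteq M$ are (generalized) manifolds in $\dataspace$ and $x_0 \in N$, then any local chart of $N$ around $x_0$ has dimension at most that of a local chart of $M$ around $x_0$, so $\lid_N(x_0) \leq \lid_M(x_0)$. Applying this with $N = \mathcal{M}_{*\mid c}$ and $M = \Mgt$ yields exactly $\lidgt(x_0 \mid c) \leq \lidgt(x_0)$.

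The main obstacle I anticipate is the measure-theoretic subtlety that conditional distributions, and hence their supports, are only determined up to a $p_*(c)$-null set of conditioning values, so $\mathcal{M}_{*\mid c}$ as a point set is not a priori well-defined. Under the regularity hypotheses already operative in \autoref{sec:mem_lid}, namely a smooth joint density together with a regular-value assumption on the projection $\mathcal{M}^{\mathrm{joint}}_* \to \R^\cDim$ at $c$, the implicit function theorem guarantees that $\mathcal{M}_{*\mid c}$ is an embedded submanifold near $x_0$ and the inclusion in (i) holds unambiguously. Beyond this technical point, the argument is essentially a chart-dimension comparison, which matches the informal character of the statement.
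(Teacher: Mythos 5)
Your proof is correct and follows essentially the same route as the paper: both identify the conditional support as the slice of the joint support, assume it is a submanifold, and then invoke monotonicity of dimension under the resulting inclusion into $\Mgt$. Your additional discussion of the measure-theoretic well-definedness of the conditional support is a reasonable caveat but does not change the core argument.
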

\begin{proof} See \autoref{appx:proofs} for the formal statement of the theorem and proof. Intuitively, conditioning can be interpreted as adding additional constraints to $\Mgt$, which cannot increase its dimension. 
\end{proof}
\vspace{-4pt}

Conditioning on highly specific $\conditioning$ can be linked to both DD-Mem and OD-Mem. Introducing strong constraints greatly decreases $\lidgt$, leading to DD-Mem. However, if a relatively low number of training examples satisfy $c$, the model could overfit, leading to OD-Mem as well.

\textbf{Complexity and $\lid$}\quad For images, \citet{somepalli2023understanding} also highlight low complexity as a factor causing memorization. Using the understanding that $\lid$ corresponds to complexity as discussed in \autoref{sec:mem_lid}, we infer that low-complexity datapoints $x \in \Mgt$ have low $\lidgt(x)$. This fact suggests that, like with duplication, memorization of low-complexity datapoints is an example of DD-Mem.

\begin{wrapfigure}{r}{0.4\textwidth}\captionsetup{font=scriptsize, labelformat=simple, labelsep=colon}
    \centering
    \vspace{-20pt}
        \includegraphics[width=\linewidth, trim={0 0 20 40}, clip]{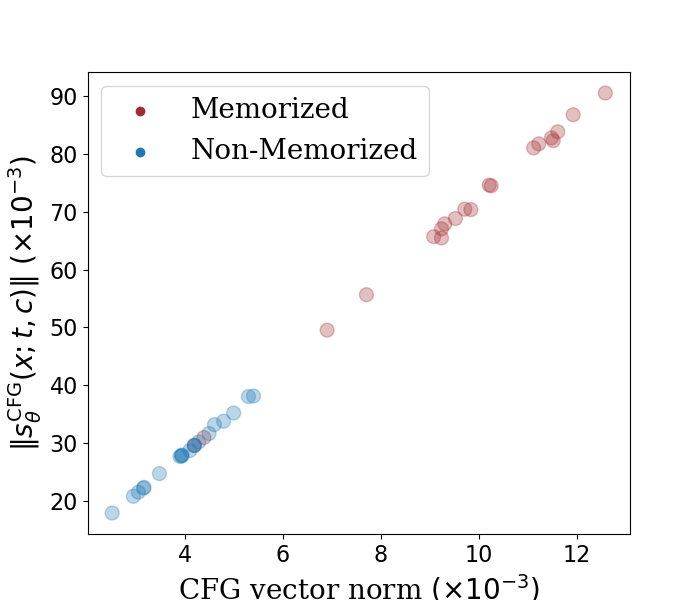}
        \vspace{-15pt}        
        \caption{CFG-adjusted scores vs CFG vectors for Stable Diffusion with $\lambda=7.5$ and $t=0.02$ on $20$ memorized and $20$ non-memorized images from LAION.}
        \label{fig:cfg_correlation}
\end{wrapfigure}
\textbf{The Classifier-Free Guidance Norm and $\lid$} \quad Classifier-free guidance (CFG) is a way to improve the quality of conditional generation in DMs. Whereas standard conditional generation employs the score function $s_\theta(x; t, \conditioning)$, which refers to a neural estimate at time $t$ of the conditional score, CFG increases the strength of conditioning by using the following modified score:
\begin{equation} \label{eq:guidance_def}
    \mathrescale{\underbrace{s_\theta^\text{CFG}(x; t, c)}_\text{CFG-adjusted score} = s_\theta(x; t, \emptyset) + \lambda (\underbrace{s_\theta(x; t, \conditioning) - s_\theta(x; t, \emptyset)}_\text{CFG vector}),}
\end{equation}
where $\lambda$ is a hyperparameter for ``guidance strength'' and $s_\theta(x; t, \emptyset)$ refers to conditioning on the empty string (here we formulate DMs using stochastic differential equations \citep{song2020score}). 

\citet{wen2023detecting} identify that specific conditioning inputs $\conditioning$ lead to memorized samples when the CFG vector has a large magnitude. We explain this observation using the MMH as follows. First, we observe that a large CFG magnitude will generally result in a large magnitude of the CFG-adjusted score $\scorecfg$. We demonstrate this empirically in \autoref{fig:cfg_correlation}. Furthermore, it is understood in the literature that a large $\lVert\scorecfg\rVert$, and its explosion as $t\to 0$, is common for high-dimensional data \citep{vahdat2021score} and is necessary to generate samples from low-dimensional manifolds \citep{pidstrigach2022score, lu2023mathematical}. It has been empirically observed that this explosion occurs faster as the dimensionality gap increases between the data manifold and the ambient data space \citep{loaiza2024deep}, which is one reason that generative modelling on lower-dimensional latent space tends to improves performance \citep{loaiza2022diagnosing}. 
The largest $\lVert \scorecfg \rVert$ values should thus generate points with the largest dimensionality difference from $\dataspace$; i.e., points $x$ with the smallest $\lidmodel(x \mid \conditioning)$. \footnote{Here we understand $\lidmodel(x\mid \conditioning)$ as the LID with respect to the support of the conditional distribution being sampled from by $\scorecfg$.} 
Hence we infer that reducing the CFG-adjusted score norm -- or equivalently the CFG vector norm -- should increase $\lidmodel(x \mid \conditioning)$ and lessen memorization, a fact confirmed empirically by \citet{wen2023detecting}. Since this phenomenon corresponds to any $x$ with small $\lidmodel(x \mid \conditioning)$, it can indicate both OD-Mem and DD-Mem under the MMH.

\section{Experiments} \label{sec:experiments}
\vspace{-4pt}
\subsection{Verifying the Manifold Memorization Hypothesis}\label{subsec:verification}

In this section, we empirically verify the geometric framework which underpins the MMH. We analyze both $\lidgt$ and $\lidmodel$ to study DD-Mem and OD-Mem. Several algorithms exist to estimate $\lidmodel(x)$ for diffusion models, including the normal bundle (NB) method \citep{stanczuk2022diffusion}, and more recently FLIPD \citep{kamkari2024geometric2}. For GANs, we approximate $\lidmodel(x)$ of generated data by computing the rank of the Jacobian of the generator. Additionally, we use LPCA \citep{fukunaga1971algorithm} to estimate $\lidgt$ where applicable; see \autoref{appx:lid} for details on these methods and \autoref{appx:experiment_details} for their hyperparameter configurations. In general $\lidmodel$ and $\lidgt$ are unknown quantities that are approximated with the aforementioned estimators, throughout this section we write their respective estimates as $\widehat{\text{LID}}_\theta$ and $\widehat{\text{LID}}_*$.

\begin{wrapfigure}[21]{r}{0.3\linewidth}\captionsetup{belowskip=-50pt, font=scriptsize, labelformat=simple, labelsep=colon}
    \vspace{-3pt}
    \centering
    \begin{subfigure}{\linewidth}
        \centering
        \includegraphics[width=\linewidth, trim={30 10 20 15}, clip]{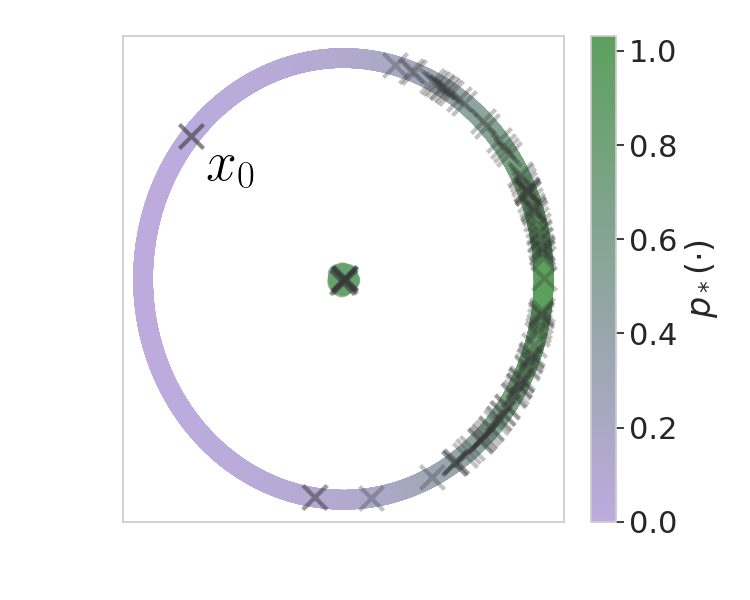}
        \vspace{-20pt}
        \label{fig:circle_gt}
    \end{subfigure}
    \begin{subfigure}{\linewidth}
        \centering
        \includegraphics[width=\linewidth, trim={40 40 27 25}, clip]{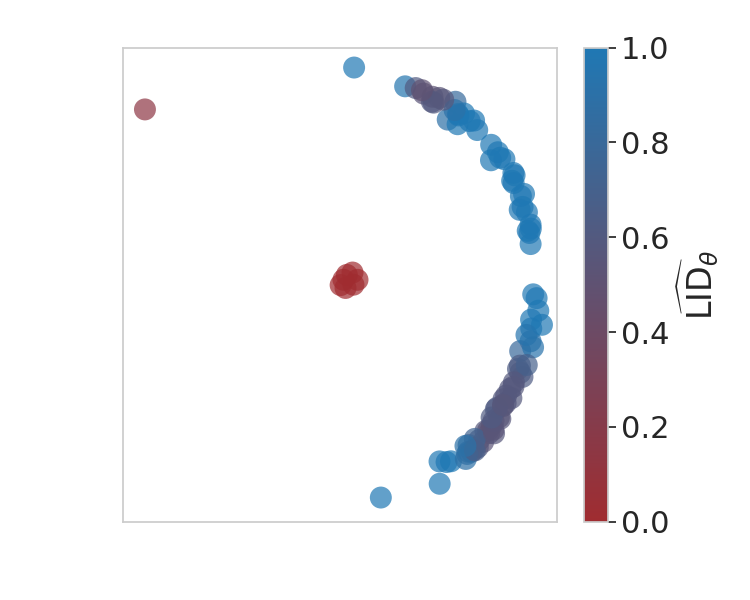}
        \vspace{-15pt}
        \label{fig:circle_dm}
    \end{subfigure}
    \caption{Training a diffusion model on a von Mises mixture. (Top) Ground truth manifold and the associated distribution. (Bottom) Model-generated samples with their LID estimates.}
    \vspace{-50pt}
    \label{fig:circle}
\end{wrapfigure}

\textbf{Diffusion Model on a von Mises Mixture}\quad In an illustrative experiment, we study a mixture of a von Mises distribution, which sits on a 1-dimensional circle, and a 0-dimensional point mass at the origin in 2-dimensional ambient space, as depicted in \autoref{fig:circle}; every point $x \in \Mgt$ has either $\lidgt(x)=0$ or $\lidgt(x)=1$. 
From this distribution we sample 100 training points, and by chance a single point $x_0$ sits isolated in a low-density region of the circle. Next, we train a DM on this data. In \autoref{fig:circle} we depict 100 generated samples, colour-coded by their LID estimates, as estimated by FLIPD. Here, we see OD-Mem and DD-Mem in action: the model overfits at $x_0$, producing near-exact copies, with $0\approx \widehat{\text{LID}}_\theta(x_0) < \lidgt(x_0)=1$ (OD-Mem). The model also faithfully produces copies of the circle's center, but this is caused by low ground truth LIDs (DD-Mem), not modelling error.

\textbf{CIFAR10 Memorization}\quad We analyze the higher-dimensional CIFAR10 dataset \citep{krizhevsky2009learning} and use two pre-trained generative models: iDDPM \citep{nichol2021improved} and StyleGAN2-ADA \citep{karras2019stylegan2ada}. We generate 50,000 images from each model, and for each, we identify the most similar training image according to two distance metrics, $(i)$ SSCD distance \citep{pizzi2022self} and $(ii)$ calibrated $\ell_2$ distance \citep{carlini2023extracting}. By thresholding on these metrics, we arrive at a small subset of potentially memorized examples, which we manually label as either exactly memorized, reconstructively memorized \citep{somepalli2023diffusion}, or not memorized. All other images are not labelled and have a low chance of being memorized.
Further details and all images deemed memorized are reported in \autoref{appx:cifar10}. The first two panels in \autoref{fig:cifar10_matching} show our labels distinguish different types of memorization as we display the generated images vs. the closest SSCD match in the training dataset. 

\begin{figure*}[b]\captionsetup{font=scriptsize, labelformat=simple, labelsep=colon}
    \vspace{-10pt}
    \centering
    \begin{subfigure}{\textwidth}
    \includegraphics[width=0.32\linewidth]{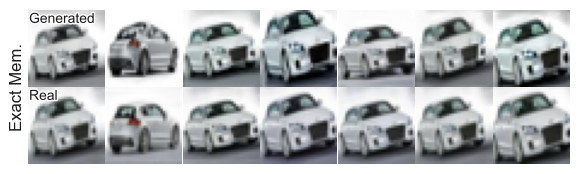}
    \includegraphics[width=0.32\linewidth]{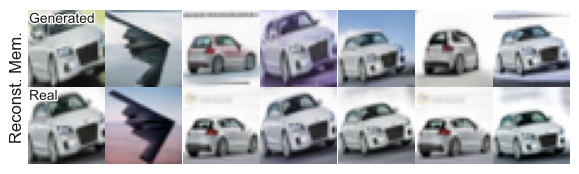}
    \includegraphics[width=0.32\linewidth]{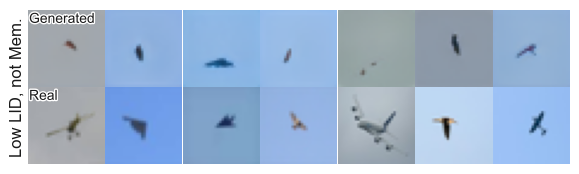}
    \subcaption{(Left) Exact and (Middle) reconstructively memorized samples (Top) with their matched CIFAR10 datapoints (Bottom).  (Right) Non-memorized samples with low $\widehat{\text{LID}}_\theta$, showing $\widehat{\text{LID}}_\theta$ can be partially confounded by complexity.}
    \label{fig:cifar10_matching}
    \end{subfigure}
    \hfill
    \begin{subfigure}{0.32\textwidth}
        \centering
        \includegraphics[width=\linewidth]{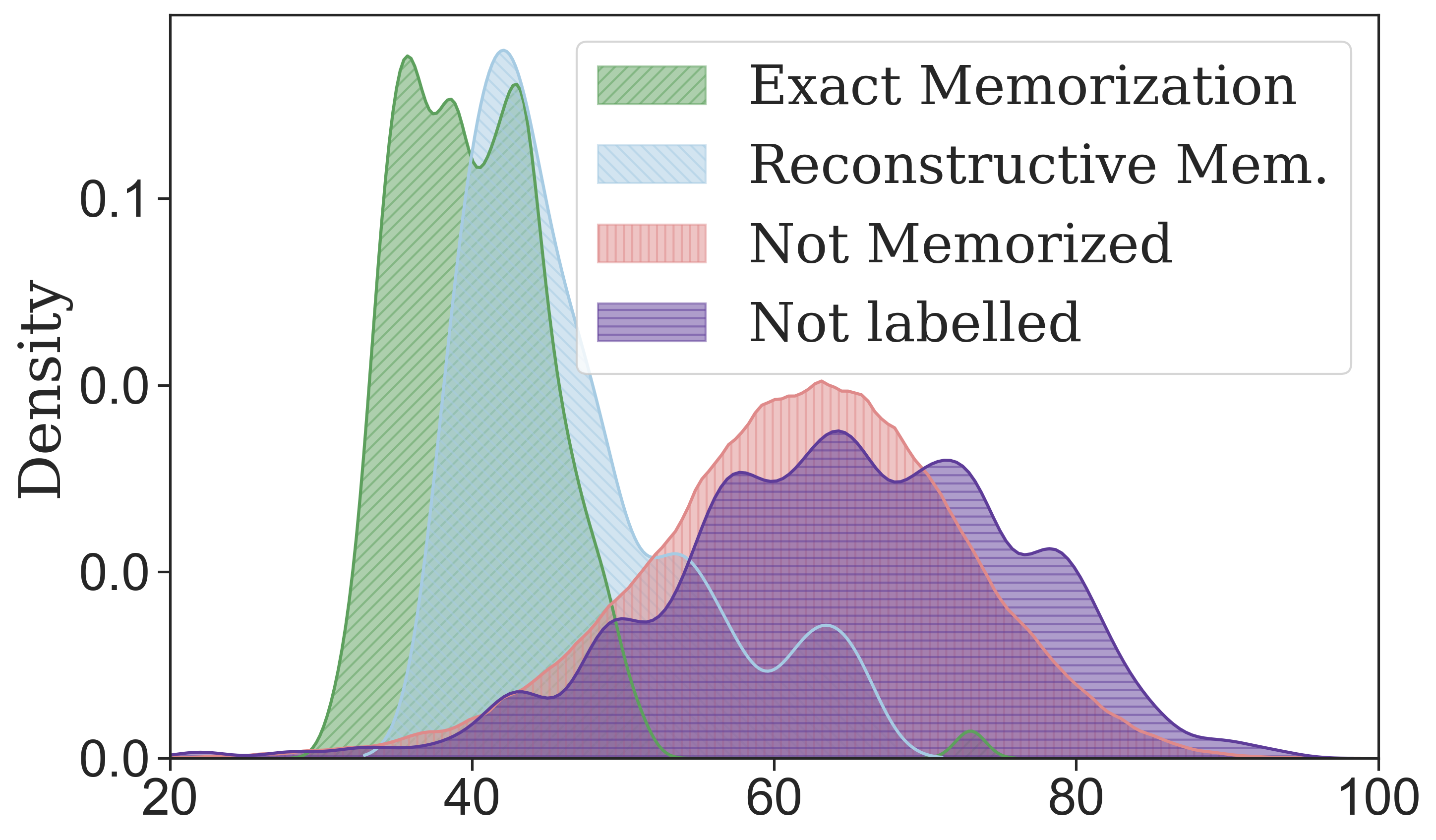}
        \subcaption{$\widehat{\text{LID}}_\theta$ for StyleGAN samples.}
        \label{fig:stylegan_hist}
    \end{subfigure}
        \begin{subfigure}{0.32\textwidth}
        \centering\vspace{6pt}
        \includegraphics[width=\linewidth]{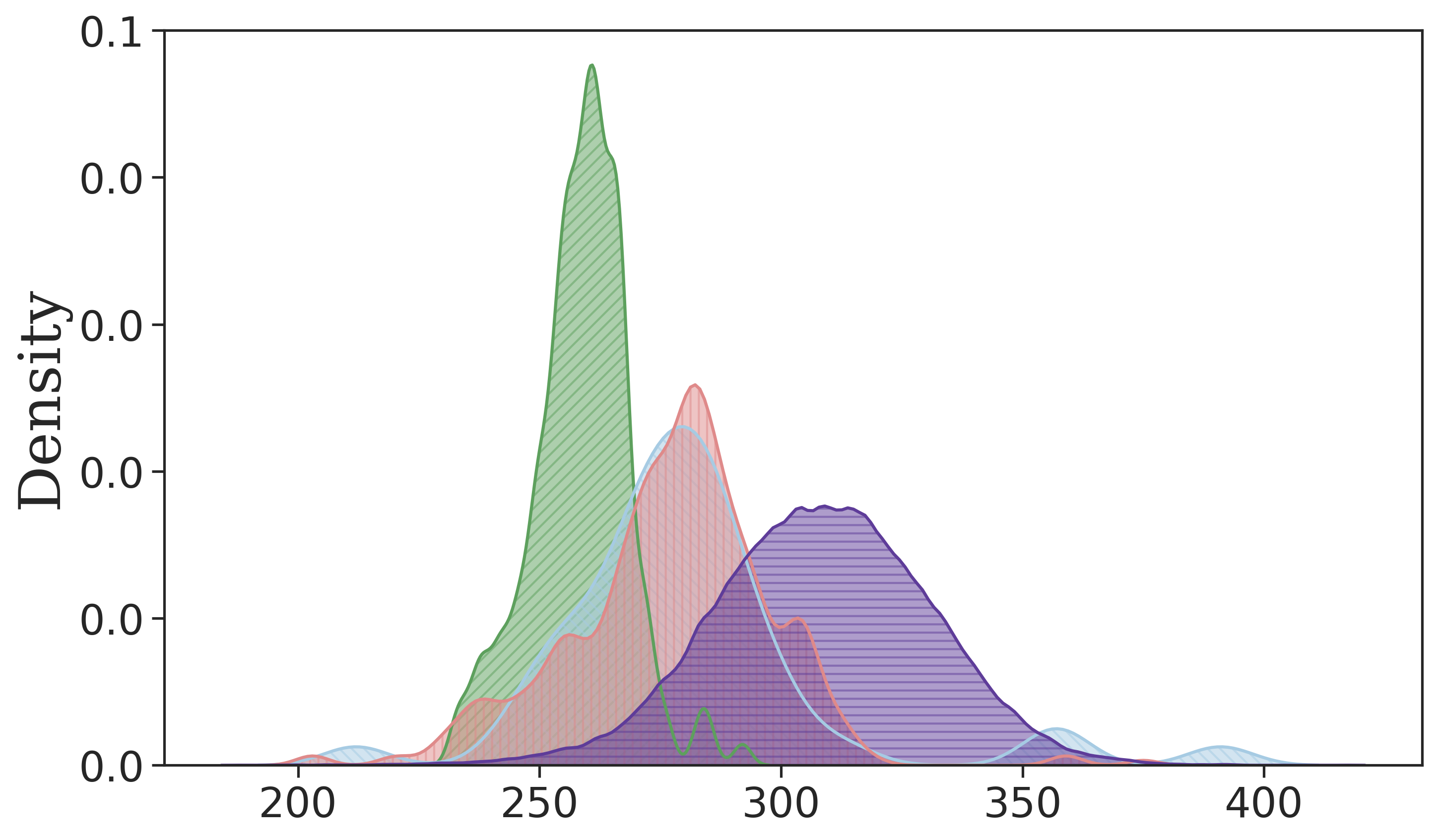}
        \subcaption{ $\widehat{\text{LID}}_\theta$ for iDDPM samples.}
        \label{fig:iddpm_hist}
    \end{subfigure}
        \begin{subfigure}{0.32\textwidth}
        \centering
        \includegraphics[width=\linewidth]{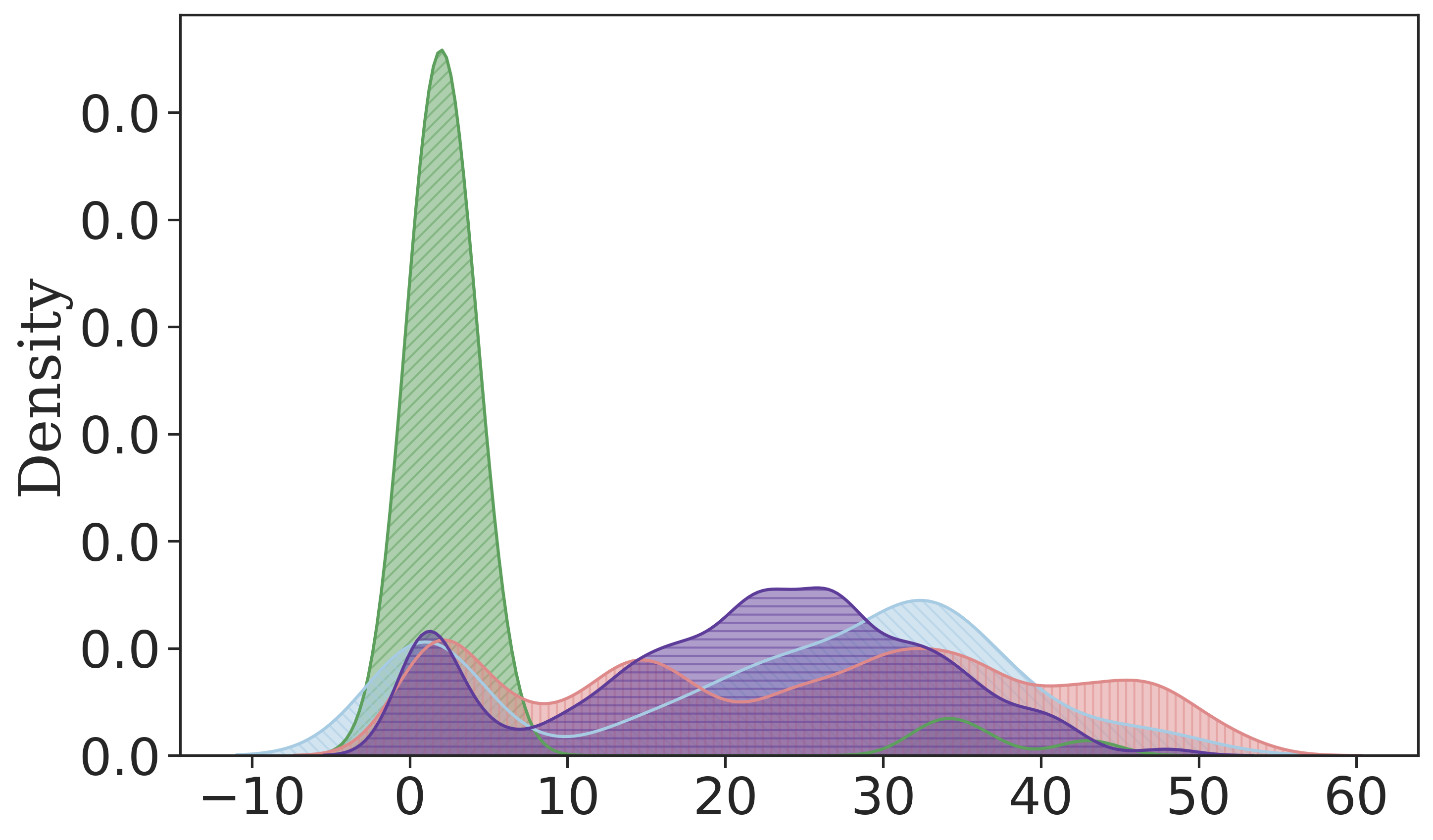}
        \subcaption{$\widehat{\text{LID}}_*$ for CIFAR10 datapoints.}
        \label{fig:lpca_hist}
    \end{subfigure}
    \caption{Visualizing OD-Mem and DD-Mem on StyleGAN2-ADA and iDDPM trained on CIFAR10.}
    \label{fig:cifar10_mem}
\end{figure*}

Next, we estimate $\lidmodel$ for each iDDPM and StyleGAN2-ADA sample. For iDDPM, we use the NB estimator. 
\autoref{fig:stylegan_hist} and \autoref{fig:iddpm_hist} show that $\widehat{\text{LID}}_\theta$ is generally smaller for memorized images compared to non-memorized ones. As shown in \autoref{fig:lpca_hist},  $\widehat{\text{LID}}_*$ is considerably lower for exact memorization cases within the training dataset, suggesting that exact memorization for both models corresponds to DD-Mem. We also observe that in \autoref{fig:stylegan_hist}, reconstructively memorized samples exhibit lower values of $\widehat{\text{LID}}_\theta$ as compared to not memorized samples, despite the corresponding training data having comparable $\widehat{\text{LID}}_*$ (\autoref{fig:lpca_hist}): the $\lidmodel$ estimates enable us to still classify these samples as memorized, showing a clear example of detecting OD-Mem.

We have shown that LID estimates are effective at detecting both OD-Mem and DD-Mem, supporting the MMH hypothesis. However, while simpler images tend to be memorized more frequently, they are not always memorized (see \autoref{fig:cifar10_matching}, right panel), leading to some overlap in estimated $\lidmodel$ between memorized and not memorized samples in \autoref{fig:stylegan_hist} and \autoref{fig:iddpm_hist}. This overlap occurs because image complexity serves as a confounding factor: images with simple backgrounds and textures may be assigned low $\lidmodel$ values, not due to memorization, but simply because of their inherent simplicity. 
We discuss this issue further, along with a partial solution, in \aref{appx:complexity_confounding}.

\textbf{Stable Diffusion on Large-Scale Image Datasets}\quad 
Here, we set $\pmodel$ to Stable Diffusion v1.5 \citep{rombach2022high}. Taking inspiration from the benchmark of \citet{wen2023detecting}, we retrieve memorized LAION \citep{schuhmann2022laion} training images identified by  \citet{webster2023reproducible}. We focus on the 86 memorized images categorized as ``matching verbatim'', noting that the other categories of \citet{webster2023reproducible} consist of large numbers of captions that generate samples matching a small set of training images. For non-memorized images, we use a mix of 2000 images sampled from LAION Aesthetics 6.5+, 2000 sampled from COCO \citep{lin2014microsoft}, and all 251 images from the Tuxemon dataset \citep{tuxemon,huggingface_tuxemon}. 

To our knowledge, no estimator of $\lidgt$ scales to images at the size of Stable Diffusion; we thus omit these from our analysis. FLIPD is the only $\lidmodel$ estimator that remains tractable at this scale, so we use it for this analysis. 
Note that Stable Diffusion provides two model distributions: the unconditional distribution $\pmodel$ and the conditional distribution $\pmodelcond$, where $\conditioning$ is the image's caption. 
Hence, we compute both  $\widehat{\text{LID}}_\theta$ and  $\widehat{\text{LID}}_\theta(\cdot\mid c)$ for each of the aforementioned images. Additionally, we compute the norm of the CFG vector, which was proposed as a memorization detection method by \citet{wen2023detecting} and which we argued varies inversely to $\lidmodel$ in \autoref{sec:phenomena}. Our experiments thus cover three proxies for local intrinsic dimension: $\widehat{\text{LID}}_\theta$,  $\widehat{\text{LID}}_\theta(\cdot\mid c)$, and the CFG vector norm (see \autoref{appx:experiment_details} for details). The density histograms of all these values are depicted in \autoref{fig:hist_comparison}.\footnote{The $\lid$ estimates provided by FLIPD are sometimes negative in value; \citet{kamkari2024geometric2} justify this as an artifact of estimating the $\lid$ using a UNet. Despite underestimating $\lid$ in absolute terms, \citet{kamkari2024geometric2} confirm that FLIPD ranks $\lidmodel$ estimates correctly, which is sufficient for the purpose of distinguishing memorized from non-memorized examples.}

We see that all proxies for $\lidmodel$ assign relatively small $\lid$ values to memorized images, further validating the MMH. Due to the unavailability of $\lidgt$ estimates, it is hard to distinguish between DD-Mem and OD-Mem here. In \autoref{fig:hist_comparison}, low conditional or unconditional LID as well as high CFG vector norms are all signals of memorization, strengthening our argument in \autoref{sec:phenomena}.
While the CFG vector norm seemingly provides the strongest signal, the unconditional $\lid$ detects memorization well despite lacking caption information. Detecting memorized training images without the corresponding captions is a novel capability, and notably cannot be done with the CFG vector norm technique.

\begin{figure*}[t]\captionsetup{font=scriptsize, labelformat=simple, labelsep=colon}
    \centering
    
    \begin{subfigure}{0.32\textwidth}
        \centering
        \includegraphics[width=\linewidth]{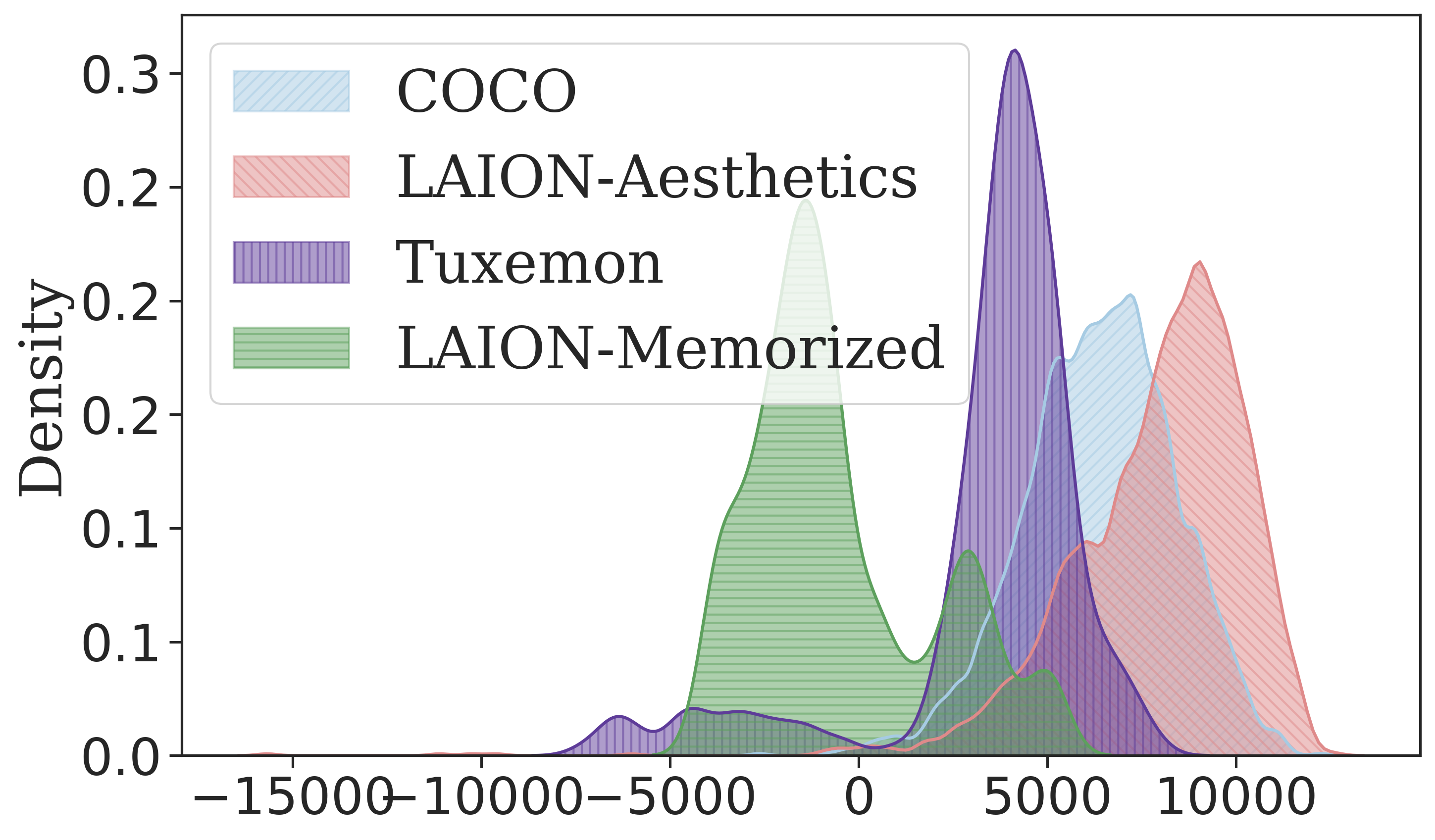}
        \subcaption{\tiny $\widehat{\text{LID}}_\theta$.}
        \label{fig:hist_uncond_lid}
    \end{subfigure}%
    \hfill
    \begin{subfigure}{0.32\textwidth}
        \centering
        \includegraphics[width=\linewidth]{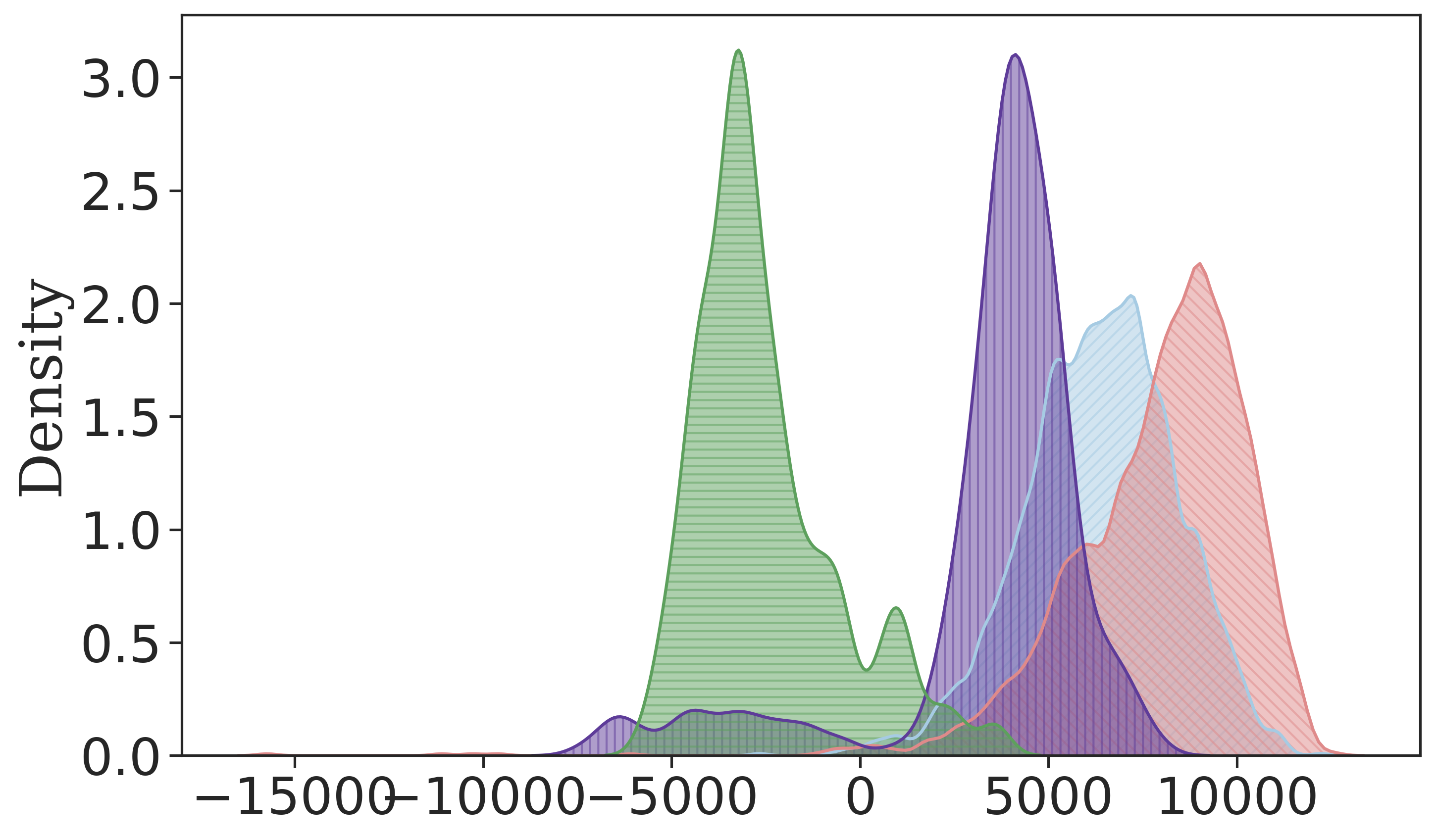}
        \subcaption{\tiny $\widehat{\text{LID}}_\theta(\cdot\mid c)$.}
        \label{fig:hist_cond_lid}
    \end{subfigure}%
    \hfill
    \begin{subfigure}{0.32\textwidth}
        \centering
        \includegraphics[width=\linewidth]{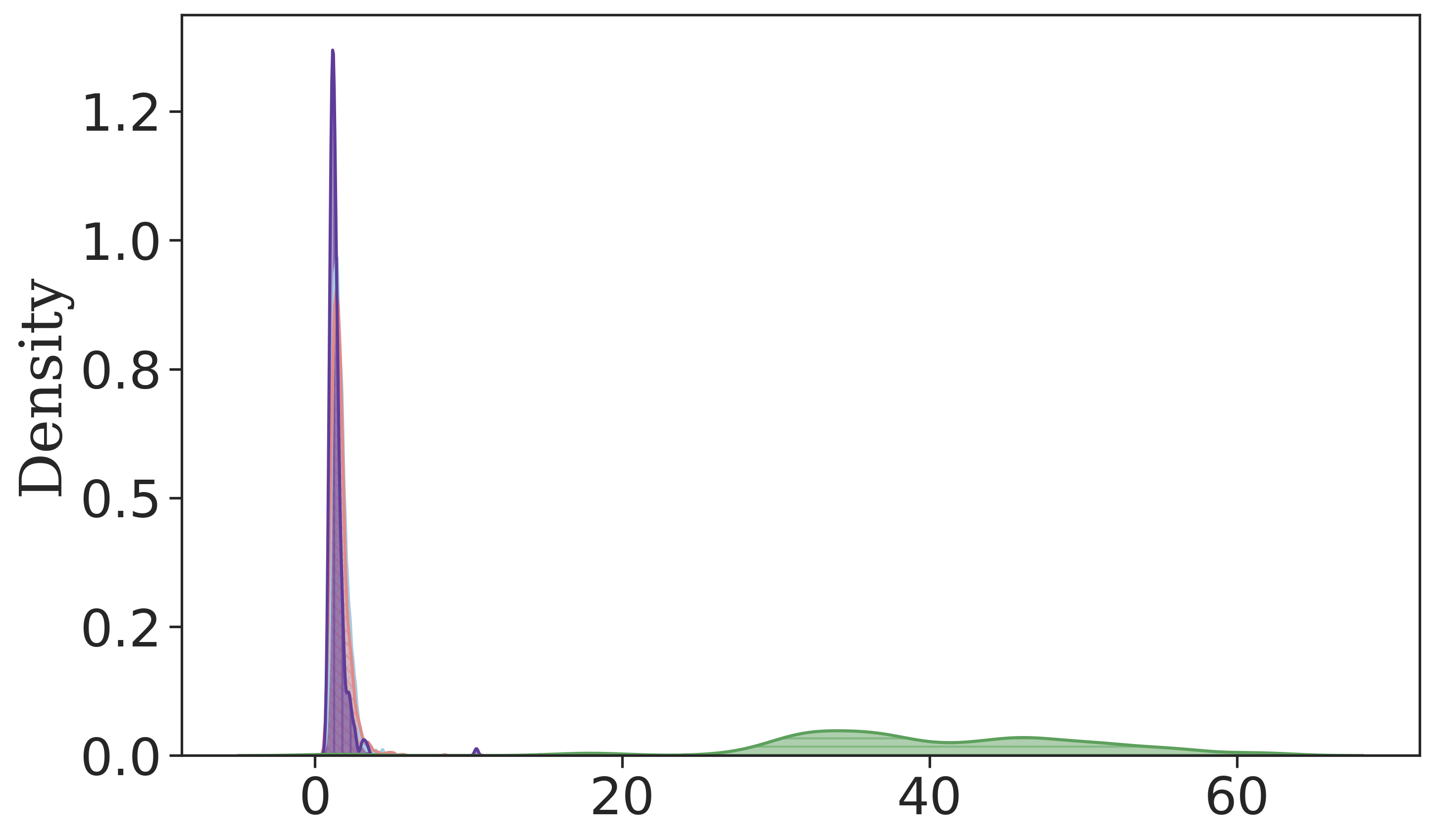}
        \subcaption{\tiny CFG vector norm.}
        \label{fig:hist_cfg_norm}
    \end{subfigure}%
        \vspace{-3pt}
    \caption{Density histograms for each memorization metric across different datasets.}
    \label{fig:hist_comparison}
    \vspace{-12pt}
\end{figure*}

\subsection{Mitigating Memorization by Controlling $\lidmodel$}\label{subsec:mitigation}
In this section we study the problem of sample-time mitigation through the lens of the MMH.\quad
\cite{somepalli2023understanding} establish text-conditioning as a crucial driver of memorization in Stable Diffusion, where specific tokens in the prompt often cause the model to generate replicas of training images. 
\cite{wen2023detecting} introduce a differentiable metric, which we denote as $\mathcal{A}^\text{CFG}(\conditioning)$ (formally defined in \autoref{appx:stable_diffusion}), which is based on the accumulated CFG vector norm while sampling an image. \cite{wen2023detecting} observe that this metric shows a sharp increase when the prompt $\conditioning$ leads to the generation of memorized images. 
Since $\mathcal{A}^\text{CFG}(\conditioning)$  is differentiable with respect to $c$, \citet{wen2023detecting} backpropagate through this metric and find the tokens with the largest gradient magnitude, essentially providing token attributions for memorization.

Here we make two contributions. First, we propose two additional metrics, $\mathcal{A}^{s_\theta^{\text{CFG}}}(\conditioning)$ and $\mathcal{A}^{\text{FLIPD}}(\conditioning)$, which are modifications of $\mathcal{A}^\text{CFG}(\conditioning)$ to use $\Vert \scorecfg \Vert$ or FLIPD, respectively, instead of the norm of the CFG vector. We define these metrics fully in \autoref{appx:stable_diffusion} due to space limitations. 
Since both of these new metrics are also differentiable with respect to $c$, $\mathcal{A}^\text{CFG}(\conditioning)$ can be trivially replaced by either of them in the method of \citet{wen2023detecting}. 
Second, we propose an automated way to use the token attributions from this method into a sample-time mitigation scheme. We start by normalizing the attributions across the tokens, and sample $k$ tokens based on a categorical distribution parameterized by these normalized attributions. 
We then use GPT-4 \citep{openai2023gpt4} to rephrase the caption, keeping it semantically similar but perturbing the selected $k$ tokens that are highly contributing to the memorization metric (see \aref{appx:gpt_details} for details).

The bottom panel in \autoref{fig:token_attr} shows four images: a training image corresponding to the prompt ``\textit{The Great Wave off Kanagawa by Katsushika Hokusai}'', a generated image using the same prompt showing clear memorization, a generated image obtained with our mitigation scheme with $\mathcal{A}^\text{CFG}(\conditioning)$, and another generated image using $\mathcal{A}^\text{FLIPD}(\conditioning)$ instead. Qualitatively, using FLIPD or the norm of the CFG vector perform on par with each other. The top panel of \autoref{fig:token_attr} shows the token attributions obtained from $\mathcal{A}^\text{FLIPD}(\conditioning)$ are sensible. See \aref{appx:stable_diffusion_attributions_full} for additional results.

We present quantitative comparisons in \autoref{fig:quantitative_token_perturb} by analyzing the average CLIP score (higher is better) \citep{radford2021learning} and SSCD similarity (lower is better) over matching prompts, varying $k\in \{1,2,3,4,6,8\}$, with $5$ repetitions for each prompt. As $k$ increases, both SSCD and CLIP scores decrease across methods. We also include an ablation where the modified tokens are selected uniformly at random, ignoring attributions. 
All attribution-based methods achieve lower similarity while maintaining a relatively higher CLIP score than uniform token selection. 

Overall, the results in \autoref{fig:tokens_stable_diffusion} provide further evidence supporting the MMH, both by showing that encouraging samples to have higher LID can help prevent memorization, and by further confirming the relationship between the CFG vector norm, the CFG-adjusted score norm, and LID established in \autoref{sec:phenomena}. We hypothesize that our results can likely be improved by more efficiently guiding generated samples towards regions of high LID, but highlight that doing so is not trivial. For example, in \aref{appx:direct_optimization} we find that optimizing $c$ for large values of $\mathcal{A}^\text{FLIPD}(\conditioning)$ (inspired by the inference-time mitigation method of \citep{wen2023detecting}) during sampling can fail by producing samples with chaotic textures that have artificially high $\lidmodel$.

\begin{figure*}[t]\captionsetup{font=scriptsize, labelformat=simple, labelsep=colon}
    \centering
    
    \begin{subfigure}{0.4\textwidth}
        \centering
        \includegraphics[width=\textwidth]{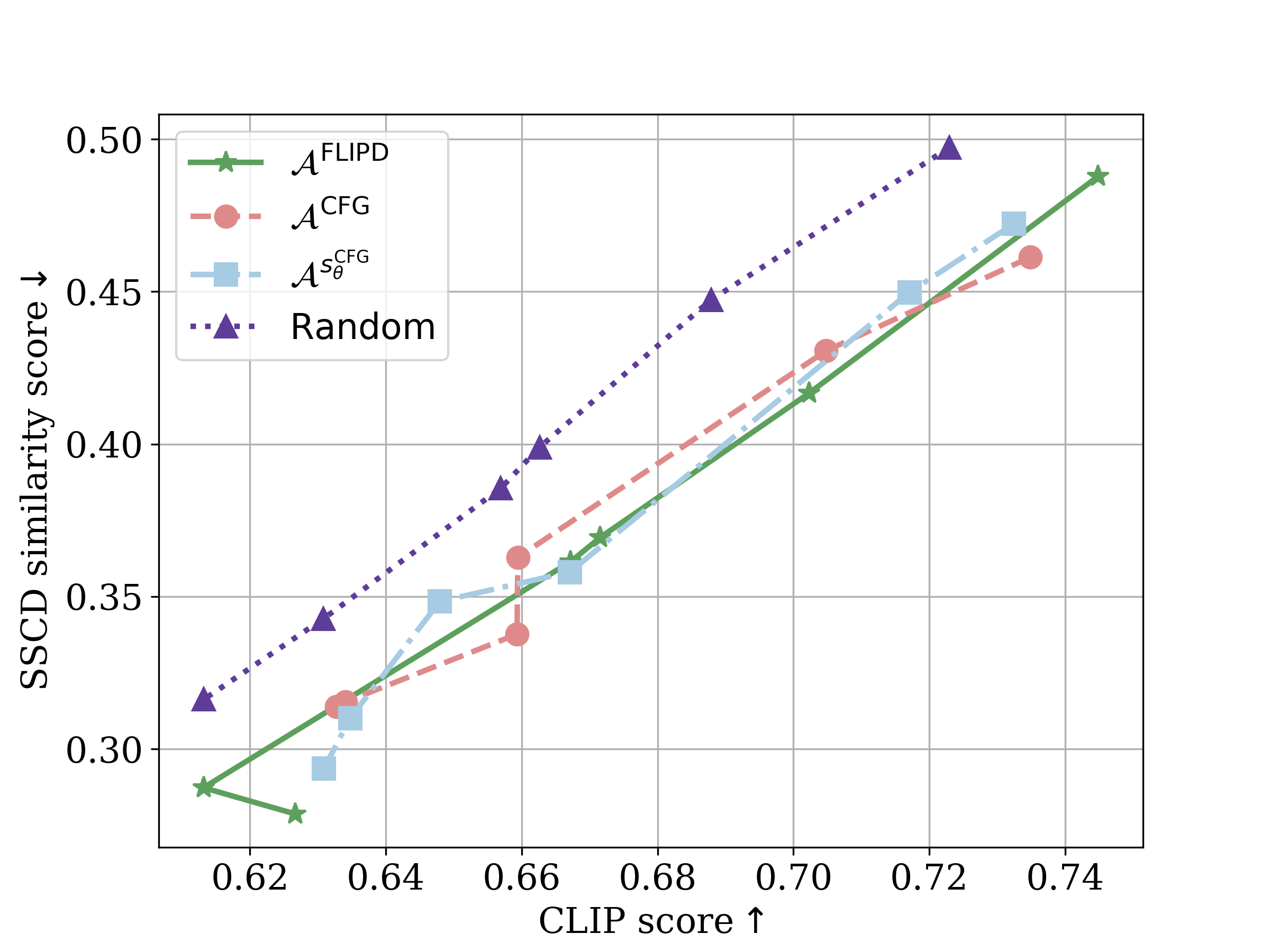}
        \subcaption{Analysis of the mitigation approach.}
        \label{fig:quantitative_token_perturb}
    \end{subfigure}%
    \hfill
    \begin{subfigure}{0.6\textwidth}
        \centering
        \includegraphics[width=\textwidth]{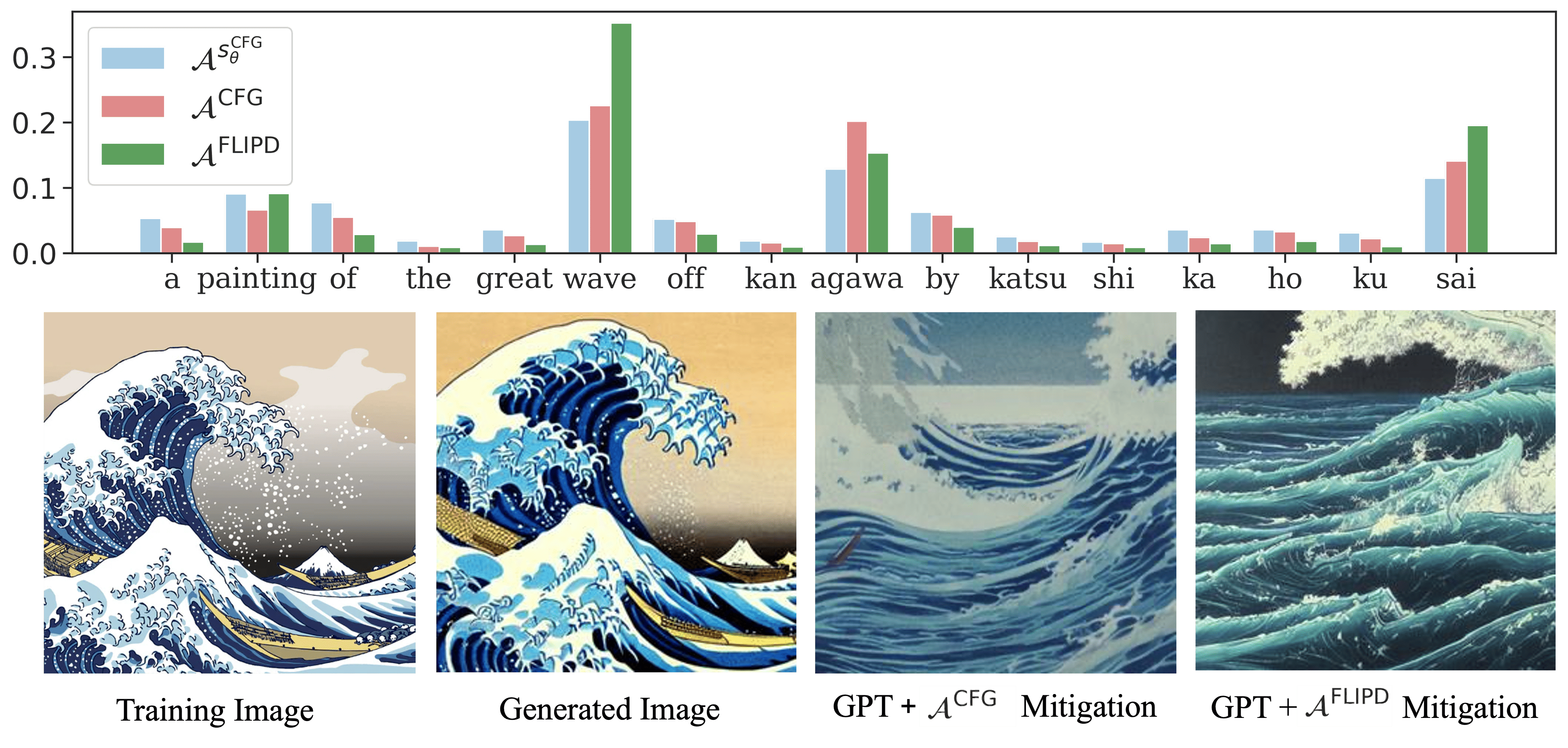}
        \subcaption{Comparing (normalized) token attributions for a memorized prompt using three methods.}
        \label{fig:token_attr}
    \end{subfigure}%
    \vspace{-2pt}
    \caption{Using token attributions to detect drivers of memorization and to mitigate it at sample time.}
    \vspace{-12pt}\label{fig:tokens_stable_diffusion}
\end{figure*}
\vspace{-4pt}
\section{Related Work}\label{sec:related}
\vspace{-4pt}
\paragraph{Detecting and Preventing Memorization for Image Models} The task of surfacing memorized samples is well-studied. Consensus in the literature is that $\ell_2$ distance to the nearest training sample in pixel space is a poor detector of memorized samples \citep{carlini2023extracting}, but that recalibrating the $\ell_2$ distance according to the local concentration of the dataset works better for smaller datasets \citep{yoon2023diffusion,stein2023exposing}, and that using retrieval techniques such as distance in SSCD feature space \citep{pizzi2022self} works better still, especially for more complex, higher-resolution images \citep{somepalli2023diffusion}. However, all of these retrieval techniques are too expensive to be used to withhold samples from a live model. To more efficiently prevent memorized samples from being generated, past and concurrent works have altered the sampling procedure, training procedure, or the model itself \citep{wen2023detecting,daras2024consistent,chen2024towards,hintersdorf2024finding}.

\paragraph{Explaining Memorization} There is an active community effort attempting to explain why and how memorization occurs in DGMs. Early studies focused on GANs, and have taken both theoretical \citep{nagarajan2018theoretical} and empirical \citep{bai2021training} perspectives. However, GANs are thought to be less prone to memorization than DMs \citep{akbar2023beware}, except on small datasets \citep{feng2021gans}. Several works on DMs \citep{pidstrigach2022score,yi2023generalization,gu2023memorization,li2024good} have pointed out that, given sufficient capacity, DMs at optimality are capable of learning the empirical training distribution, which is complete memorization. Others have focused on generalization, showing that DMs are capable of generalizing well in theory \citep{li2023generalization}, have inductive biases towards generating photorealistic images \citep{kadkhodaie2023generalization}, and will generalize when their capacity is insufficient to memorize \citep{yoon2023diffusion}.

\paragraph{DGM-Based LID Estimation}
As opposed to statistical LID estimators (e.g., \citet{levina2004maximum}), which are constructed to estimate the dimension of $\Mgt$, DGM-based ones estimate the dimensionality of $\Mmodel$, the manifold learned by a DGM. These types of estimators are available for many types of DGMs, and in addition to being useful for memorization, have found utility in out-of-distribution detection \citep{kamkari2024geometric}. In the literature, LID estimators for normalizing flows \citep{dinh2014} have been proposed using the singular values of their Jacobians \citep{horvat2022,kamkari2024geometric} or their density estimates \citep{tempczyk2022lidl}. In \autoref{sec:experiments} we applied the singular value method to obtain LID estimates for GANs. \citet{dai2019diagnosing} and \citet{zheng2022learning} proposed estimators for VAEs \citep{kingma2014auto, rezende2014stochastic} using the structure of their posterior distribution. Several authors have proposed estimators for DMs as well \citep{stanczuk2022diffusion, kamkari2024geometric2,horvat2024gauge}; 
we focus on those of \citet{stanczuk2022diffusion} and \citet{kamkari2024geometric2} because they work with off-the-shelf DMs and do not require modifying the training procedure.
\vspace{-4pt}
\section{Conclusions, Limitations, and Future Work} \label{sec:conclusion}
\vspace{-4pt}
Throughout this work, we have drawn connections between the geometry of a DGM and its propensity to memorize through the MMH. First, we showed that the notion of $\lid$ provides a systematic way of understanding different types of memorization. Second, we explained how memorization phenomena described by prior work can be understood from the perspective of $\lid$. Third, we verified the MMH empirically across scales of data and classes of models. Fourth, we showed that controlling $\lidmodel$ is a promising way to mitigate memorization. We offered several connections, including the insight that some instances of memorization in DMs are due to the DM's inability to generalize (OD-Mem), whereas others are due to low-$\lid$ ground truth (DD-Mem). 

Despite having shown that the MMH is a principled avenue to detect and alleviate memorization, our current approaches can be improved: estimates of $\lidmodel$ have some overlap between memorized and non-memorized samples and our sample-time scheme for mitigating memorization using $\mathcal{A}^\text{FLIPD}(\conditioning)$ performs on par with, but does not outperform, its more ad-hoc variant using $\mathcal{A}^\text{CFG}(\conditioning)$. We expect future work to find even better ways of leveraging the MMH and LID towards these goals; e.g.\ by improving LID estimation or more efficiently controlling LID during sampling. Finally, although the manifold hypothesis does not apply directly to discrete data such as language, some intuitions described in this work carry over, and generalizations or parallels to the concepts here may offer insights for the language-modelling space.

\paragraph{Reproducibility Statement} To ensure the reproducibility of our experiments, we provide two codebase links. The first codebase, accessible at \href{https://github.com/layer6ai-labs/dgm_geometry}{\texttt{github.com/layer6ai-labs/dgm\_geometry}}, 
contains our small-scale synthetic experiments and our CIFAR10 experiments. 
The second, accessible at \href{https://github.com/layer6ai-labs/diffusion_memorization/}{\texttt{github.com/layer6ai-labs/diffusion\_memorization/}}, 
extends the work of \cite{wen2023detecting} to use the MMH to detect and mitigate memorization. Comprehensive details of our experimental setup are provided across \autoref{sec:experiments}, \autoref{appx:experiment_details}, and \autoref{appx:stable_diffusion}. All datasets used in our experiments are freely available from the referenced sources and are utilized in compliance with their respective licenses.

\paragraph{Ethics Statement} We do not foresee any ethical concerns with this research. The overarching topic, memorization in generative models, is widely studied to better understand safety concerns associated with using and deploying such models. Our goal is to theoretically explain and to empirically detect and alleviate this phenomenon; we do not promote the use of these models for harmful practices.

\subsubsection*{Acknowledgments}
We thank Kin Kwan Leung for his valuable help in proving \autoref{lem:submanifold_balls}.


\bibliography{refs}
\bibliographystyle{iclr2025_conference}

\newpage
\appendix
\onecolumn
\section{Contextualizing the MMH within Definitions of Memorization}\label{appx:definitions}
\paragraph{An Overview of Definitions} The MMH describes the mechanism through which memorization occurs. How does this mechanism fit into prior definitions of memorization from the literature? Formal definitions of memorization generally follow the same template: a point $x_0$ is memorized when the model's probability measure $\Pmodel$ places too much mass within some distance of $x_0$. Some of these definitions define memorization globally on the level of an entire model \citep{meehan2020non,yoon2023diffusion,gu2023memorization}, while others define memorization locally for individual datapoints \citep{carlini2023extracting,bhattacharjee2023data}. The identical definitions of \citet{yoon2023diffusion} and \citet{gu2023memorization} consider a point to be memorized based purely on a distance threshold; in practice, however, distances alone have been unsuccessful at consistently surfacing what would be perceived by humans as memorized \citep{somepalli2023diffusion,stein2023exposing}. We postulate this is also due to manifold structure; semantically memorized images such as \autoref{fig:low_dim_images} will sit on the same manifold, but may not necessarily be close to each other as measured by distance, even when taken in the latent space of an encoder. Meanwhile, \citet{carlini2023extracting} take a privacy perspective; their definition considers images memorized if they can be extracted from a model by any means, not just generated by the model.
In this work we take the perspective that memorized samples are most likely to be problematic when they are \emph{generated} by a production model, which are often treated as a black-box, so we focus on generation rather than extraction.

\paragraph{Links Between Formal Memorization and the MMH} 
For the reasons above, we use here the definition of memorization by \citet{bhattacharjee2023data}, who define a point $x_0$ as memorized by comparing $\Pmodel$ to the ground truth $\Pgt$ in a neighbourhood of $x_0$. We present their definition here:

\begin{restatable}{definition}{duplication}\label{def:copy}
Let $\Pgt$ and $\Pmodel$ be the ground truth and model probability measures, respectively. Let $\lambda > 1$ and $0 < \gamma < 1$. A point $x \in \dataspace$ is a $(\lambda, \gamma)$-copy of a training datapoint $x_0$ if there exists a radius $r>0$ such that the $d$-dimensional ball $B_r^d(x_0)$ of radius $r$ centred at $x_0$ satisfies $(i)\ x \in B_r^d(x_0)$, $(ii)\ \Pmodel(B_r^d(x_0)) \geq \lambda \Pgt(B_r^d(x_0))$, and $(iii)\ \Pgt(B_r^d(x_0)) \leq \gamma$.
\end{restatable}

The first and third conditions imply that $x$ is sufficiently close to $x_0$ relative to the amount of probability mass in the region ($\Pgt(B_r(x_0))$), while the second condition implies that the model $\Pmodel$ places much more mass in the region compared to the ground truth $\Pgt$. A natural question about the MMH is whether points satisfying it are also formally memorized by the above definition. 
The answer is in the negative for DD-Mem (\autoref{thm:dd_mem_copy}) and the affirmative for OD-Mem (\autoref{thm:od_mem_copy}). 

\begin{restatable}{proposition}{duplication}\label{thm:dd_mem_copy}
There exist models $\pmodel$ that exhibit DD-Mem at $x_0\in \dataspace$, but do not generate $(\lambda, \gamma)$-copies of $x_0$. 
\end{restatable}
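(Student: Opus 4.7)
The plan is to exhibit a concrete counterexample in which the model coincides with the ground truth distribution, so that DD-Mem holds by construction while the ratio condition in \autoref{def:copy} fails everywhere. The central observation is that DD-Mem is defined purely in terms of $\lidmodel(x_0)$ being small (and matching $\lidgt(x_0)$), whereas being a $(\lambda,\gamma)$-copy requires $\Pmodel$ to place strictly more mass than $\Pgt$ around $x_0$. These two conditions are decoupled, and a perfectly generalizing model that happens to sit on a low-dimensional manifold will satisfy the first but not the second.

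Concretely, I would take $\dataspace = \R^\aDim$ with $\aDim \geq 1$, fix any $x_0 \in \dataspace$, and let $\Pgt = \delta_{x_0}$, the Dirac measure at $x_0$ (or, if a nontrivial continuous component is desired for flavor, the mixture $\tfrac12 \delta_{x_0} + \tfrac12 \nu$ for some absolutely continuous $\nu$ supported away from a neighborhood of $x_0$). Then $\Mgt$ contains the isolated point $\{x_0\}$, so $\lidgt(x_0) = 0$. Next, set $\Pmodel := \Pgt$ exactly. This gives $\lidmodel(x_0) = \lidgt(x_0) = 0$, which the MMH classifies as DD-Mem since $\lidmodel(x_0)$ is small and equal to $\lidgt(x_0)$ (consistent with \autoref{fig:main_dim1} and the discussion in \autoref{sec:mem_lid}).

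To check the second half of the claim, fix any $\lambda > 1$ and $0 < \gamma < 1$. For every $r>0$ we have
\begin{equation}
    \Pmodel\bigl(B_r^\aDim(x_0)\bigr) = \Pgt\bigl(B_r^\aDim(x_0)\bigr),
\end{equation}
so $\Pmodel(B_r^\aDim(x_0)) / \Pgt(B_r^\aDim(x_0)) = 1 < \lambda$ whenever the denominator is positive (and the ratio condition is vacuously unverifiable when the denominator is zero, since then the condition $\Pmodel(B_r^\aDim(x_0)) \geq \lambda \Pgt(B_r^\aDim(x_0))$ forces $\Pmodel(B_r^\aDim(x_0)) = 0$, contradicting $x \in B_r^\aDim(x_0)$ being generated with positive probability in any neighborhood). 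Therefore no $x$ can be a $(\lambda,\gamma)$-copy of $x_0$, which is exactly what is required.

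The only potentially delicate step is handling the boundary case $\Pgt(B_r^\aDim(x_0)) = 0$ in \autoref{def:copy}, which one must read carefully to confirm that equality of measures precludes the existence of copies. Once this corner is dispatched, the argument is essentially immediate: the construction provides a model that is simultaneously a perfect generalizer (and hence memorization-free under the ratio-based definition) and an instance of DD-Mem under the MMH, which is precisely the decoupling the proposition asserts.
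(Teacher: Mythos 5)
Your construction is exactly the paper's: take a perfect model $\Pmodel = \Pgt$ at a low-LID point $x_0$, so that $\Pmodel(B_r^\aDim(x_0)) = \Pgt(B_r^\aDim(x_0))$ for every $r$ and condition $(ii)$ of \autoref{def:copy} can never hold for $\lambda>1$; your instantiation with $\Pgt = \delta_{x_0}$ just makes the example concrete. One small correction that does not affect the result: your parenthetical about the case $\Pgt(B_r^\aDim(x_0))=0$ is wrong — there the inequality $\Pmodel(B_r^\aDim(x_0)) \geq \lambda\cdot 0$ is \emph{trivially satisfied}, not forcing $\Pmodel(B_r^\aDim(x_0))=0$ — but since $x_0 \in \Mgt = \mathrm{supp}(\Pgt)$ guarantees $\Pgt(B_r^\aDim(x_0))>0$ for all $r>0$, that case never arises in your construction.
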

\vspace{-10pt}
\begin{proof}
 Choose any ground truth distribution $\Pgt$ on a manifold $\Mgt$ with a low-LID point $x_0 \in \Mgt$ (for example, set $\lidgt(x_0)=0$). A perfect model will exhibit DD-Mem at $x_0$, but for any ball $B_r^d(x_0)$ containing $x_0$, $\Pmodel(B_r^d(x_0)) = \Pgt(B_r^d(x_0))$, violating the second condition of \autoref{def:copy}.
\end{proof}
\vspace{-5pt}
Since DD-Mem is a consequence of the data distribution having points $x_0$ with inherently low $\lidgt(x_0)$, these memorized points are likely to be generated even when there is no excess probability mass assigned near $x_0$ by $\Pmodel$, as required in the definition of $(\lambda, \gamma)$-copies.
\vspace{5pt}
\begin{restatable}[Informal]{theorem}{duplication}\label{thm:od_mem_copy}
Suppose $x_0 \in \dataspace$ is such that $\pmodel$ exhibits OD-Mem at $x_0$. Then, for every $\lambda>1$ and $0 < \gamma < 1$, $\pmodel$ will generate $(\lambda, \gamma)$-copies of $x_0$ with near-certainty.
\end{restatable}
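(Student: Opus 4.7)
The plan is to exploit the fact that for a sufficiently regular probability measure supported on a manifold, the mass of a small $d$-dimensional ball scales polynomially with the radius at a rate determined by the local manifold dimension. More precisely, under mild assumptions (a Radon--Nikodym density with respect to the appropriate Hausdorff measure on $\Mmodel$ that is bounded away from $0$ at $x_0$, and analogous conditions on $\Pgt$), one expects
\begin{equation}
    \Pmodel\bigl(B_r^d(x_0)\bigr) \;=\; \Theta\bigl(r^{\lidmodel(x_0)}\bigr), \qquad \Pgt\bigl(B_r^d(x_0)\bigr) \;=\; \Theta\bigl(r^{\lidgt(x_0)}\bigr),
\end{equation}
as $r \downarrow 0$. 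Since OD-Mem at $x_0$ means $\lidmodel(x_0) < \lidgt(x_0)$, taking the ratio yields $\Pmodel(B_r^d(x_0)) / \Pgt(B_r^d(x_0)) = \Omega(r^{-(\lidgt(x_0)-\lidmodel(x_0))}) \to \infty$ and $\Pgt(B_r^d(x_0)) \to 0$ as $r \to 0$.

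Given this scaling, I would then, for any fixed $\lambda > 1$ and $\gamma \in (0,1)$, choose a radius $r^\star > 0$ small enough that simultaneously $\Pmodel(B_{r^\star}^d(x_0)) \geq \lambda\, \Pgt(B_{r^\star}^d(x_0))$ and $\Pgt(B_{r^\star}^d(x_0)) \leq \gamma$. This verifies conditions $(ii)$ and $(iii)$ of Definition~\ref{def:copy} with $r = r^\star$. For condition $(i)$, note that because $x_0 \in \Mmodel$ with $\lidmodel(x_0)$ well-defined, the lower bound $\Pmodel(B_{r^\star}^d(x_0)) > 0$ is automatic from the same scaling. Hence each independent sample drawn from $\Pmodel$ has a fixed positive probability $p \coloneqq \Pmodel(B_{r^\star}^d(x_0))$ of being a $(\lambda,\gamma)$-copy of $x_0$, and over $N$ i.i.d.\ draws the probability that none is a copy is at most $(1-p)^N \to 0$, giving the ``near-certainty'' statement.

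The main obstacle is formalising the scaling law that underlies the whole argument, since the paper works with a generalised notion of manifold that permits varying local dimension. One clean route is to use a local chart around $x_0$ (guaranteed because $\lidmodel(x_0)$ and $\lidgt(x_0)$ are defined) to push $\Pmodel$ and $\Pgt$ forward to absolutely continuous measures on Euclidean balls of dimensions $\lidmodel(x_0)$ and $\lidgt(x_0)$ respectively, and then invoke a Lebesgue-differentiation-style comparison between $B_r^d(x_0) \cap \Mmodel$ and a Euclidean ball of the corresponding dimension. The edge case $\lidmodel(x_0) = 0$ is handled separately and is essentially trivial: the model places an atom at $x_0$, so $\Pmodel(B_r^d(x_0))$ is bounded below by a positive constant while $\Pgt(B_r^d(x_0)) \to 0$, making both inequalities easy. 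I would expect the formal statement in the appendix to simply list regularity hypotheses (smoothness of $\Mgt$ and $\Mmodel$ near $x_0$, continuous positive densities there) that make the local scaling rigorous, after which the three-step argument above goes through unchanged.
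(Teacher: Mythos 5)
Your proposal follows essentially the same route as the paper: establish that the probability masses of small ambient balls scale like $r^{\lidmodel(x_0)}$ and $r^{\lidgt(x_0)}$ respectively (the paper formalizes this via Lemma~\ref{lem:submanifold_balls}, which bounds the Riemannian measure of $B_r^d(x_0) \cap \M$ between $C_1 r^{d_0}$ and $C_2 r^{d_0}$ using a local graph chart, and then sandwiches the probability between these volume bounds times $\inf$/$\sup$ of the continuous density), conclude $\Pgt(B_r^d)/\Pmodel(B_r^d) \to 0$ and $\Pgt(B_r^d) \to 0$, pick $r_0$ small enough for conditions $(ii)$ and $(iii)$, and finish with the $1-(1-p)^m \to 1$ Bernoulli argument. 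The only cosmetic difference is that the paper does not isolate $\lidmodel(x_0)=0$ as a separate case, since the volume bound $C_1 r^0 \leq \mumodel(B_r^d \cap \Mmodel)$ already gives the needed constant lower bound uniformly.
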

\vspace{-10pt}
\begin{proof}
See \autoref{appx:proofs} for the formal statement of the theorem and proof.
\end{proof}
\vspace{-5pt}
The MMH thus provides two important pieces of context for the definition of $(\lambda, \gamma)$-copies. The first is that \autoref{def:copy} is in some sense incomplete; it does not cover DD-Mem. The second is that OD-Mem can be considered a useful refinement of \autoref{def:copy}. While the algorithm given by \citet{bhattacharjee2023data} is intractable at scale, we show in \autoref{sec:experiments} that the added  strength (in the mathematical sense) of the MMH allows us to flag memorized data more efficiently using only estimators of $\LID$.
\section{LID Estimation}\label{appx:lid}
\subsection{LID Estimation with Diffusion Models}

As mentioned in the main manuscript, we follow the SDE framework of \citet{song2020score} for DMs, where the so-called forward SDE is given by
\begin{equation}\label{eq:sde}
    \rd x_t = f(x_t, t)\rd t + g(t) \rd W_t,\quad x_0 \sim \pgt,
\end{equation}
where $f: \R^d \times [0,1] \rightarrow \R^d$ and $g: [0,1] \rightarrow \R$ are pre-specified functions and $W_t$ is a Brownian motion on $\R^d$. This process progressively adds noise to data from $\pgt$, and we denote the distribution of $x_t$ as $p_t(x_t)$. This process can be reversed in time in the sense that if $y_t \coloneqq x_{1-t}$, then $y_t$ obeys the so-called backward SDE,
\begin{equation}\label{eq:backward_sde}
    \rd y_t = \left[g^2(1-t) \nabla \log p_{1-t}(y_t) - f(y_t, 1-t)\right]\rd t + g(1-t) \rd \tilde{W}_t,\quad y_0 \sim p_1,
\end{equation}
where $\tilde{W}_t$ is another Brownian motion. DMs aim to learn the (Stein) score function, $\nabla \log p_t(x_t)$, by approximating it with a neural network $s_\theta: \R^d \times (0,1] \rightarrow \R^d$. Once the network is trained, $s_\theta(x_t, t) \approx \nabla \log p_t(x_t)$ is plugged in \autoref{eq:backward_sde}, and solving the resulting SDE transforms noise into model samples. Below we briefly summarize two existing methods, FLIPD and NB, for approximating $\lidmodel(x)$ for a DM.

\textbf{FLIPD}\quad \citet{kamkari2024geometric2} proposed FLIPD, an estimator of $\lidmodel(x)$ for DMs. Commonly $f$ is linear in $x_t$, in which case the transition kernel corresponding to the forward SDE is given by
\begin{equation}
    p_{t\mid 0}(x_t \mid x_0) = \mathcal{N}(x_t; \psi(t)x_0, \sigma^2(t)I_d),
\end{equation}
where $\psi, \sigma: [0, 1] \rightarrow \R$ are known functions which depend on the choices of $f$ and $g$, and which can be easily evaluated. For a DM with such a transition kernel, FLIPD is defined as
\begin{equation} \label{eq:flipd_formula}
    \text{FLIPD}(x, t_0) = d + \sigma^2(t_0)\Big(\text{tr}\left(\nabla s_\theta \big(\psi(t_0)x, t_0\big)\right) + \lVert s_\theta \big(\psi(t_0)x, t_0\big) \rVert^2 \Big),
\end{equation}
where $t_0 \in [0,1]$ is a hyperparameter. \citet{kamkari2024geometric2} proved that, when $t_0 \approx 0$ and $x \in \Mmodel$, $\text{FLIPD}(x, t_0)$ is a valid approximation of $\lidmodel(x)$. The reason for this is that the rate of change of the log density of the convolution between $\pmodel$ and a Gaussian evaluated at $x_0$ with respect to the amount of added Gaussian noise approximates $\lidmodel(x_0)$, and \citet{kamkari2024geometric2} showed that FLIPD computes this rate of change. In practice, computing the trace of the Jacobian of $s_\theta$ is the only expensive operation needed to compute FLIPD, and this is easily approximated by using the Hutchinson stochastic trace estimator \citep{hutchinson1989stochastic}. 

\textbf{NB}\quad \label{lid:nb} \citet{stanczuk2022diffusion} proposed another estimator of $\lidmodel(x)$ for DMs. Following \citet{kamkari2024geometric2}, we refer to this estimator as the normal bundle (NB) estimator. \citet{stanczuk2022diffusion} proved that when $f(x_t, t)\equiv 0$, $s_\theta(x_t, t)$ points orthogonally towards $\Mmodel$ as $t \rightarrow 0$. They leverage this observation as follows: for a given $x$, \autoref{eq:sde} is started at $x$ and run forward until time $t_0$; this is done $k$ times, resulting in $x_{t_0}^{(1)}, \dots, x_{t_0}^{(k)}$. The matrix $S_\theta(x, t_0) \in \R^{d \times k}$ is then constructed as
\begin{equation}
    S_\theta(x, t_0) = \left[s_\theta\left(x_{t_0}^{(1)}, t_0\right) \middle| \cdots \middle| s_\theta\left(x_{t_0}^{(k)}, t_0\right) \right],
\end{equation}
and thanks to the previous observation, the columns of $S_\theta(x, t_0)$ approximately span the normal space of $\Mmodel$ at $x$ when $t_0 \approx 0$, meaning that $\text{rank } S_\theta(x, t_0) \approx d - \lidmodel(x)$. The NB estimator is given by
\begin{equation}
    \text{NB}(x, t_0) = d - \text{rank } S_\theta(x, t_0).
\end{equation}
In practice the rank is numerically computed by setting a threshold, carrying out a singular value decomposition of $S_\theta(x, t_0)$, and counting the number of singular values above the threshold. 
\citet{stanczuk2022diffusion} recommend setting $k=4d$, and we follow this recommendation. Computing the NB estimator is much more expensive than FLIPD, since $4d$ forward calls have to be made to construct $S_\theta(x, t_0)$, and then the singular value decomposition has a cost which is cubic in $d$. Finally, we point out that when $f$ is not identically equal to $0$, the NB method can be easily adapted to still provide a valid approximation of $\lidmodel(x)$ \citep{kamkari2024geometric}.

We highlight that both FLIPD and NB were originally developed as estimators of $\lidgt(x)$ under the view that if the learned score function is a good approximation of the true score function, then $\lidmodel(x) \approx \lidgt(x)$. In our work, we see these methods as approximating $\lidmodel(x)$. Note that these views are not contradictory: when the DM properly approximates the true score function, it will indeed be the case that $\lidmodel(x) \approx \lidgt(x)$; importantly though, when this approximation fails, we interpret $\text{FLIPD}(x, t_0)$ and $\text{NB}(x, t_0)$ as still providing a valid approximation of $\lidmodel(x)$ rather than a poor estimate of $\lidgt(x)$.

\subsection{Local Principal Component Analysis} \label{lid:lpca_theory}

Local PCA \citep{fukunaga1971algorithm} offers a straightforward method for estimating the $\lidgt$ of a datapoint by using linear local approximations to the data manifold.  
Given $x$, local PCA first identifies a set of nearby points in the dataset, representing a neighbourhood; this is typically done through a $k$-nearest neighbours algorithm. Next, the algorithm performs a principal component analysis (PCA) on this neighbourhood to get $(i)$ principal components and $(ii)$ explained variances for each component; the resulting principal components capture the directions of data variation, with the explained variance showing the amount of variation along each direction. Off-manifold directions are expected to have negligible explained variance. 
Hence, local PCA determines the number of components with non-zero (or non-negligible) explained variance as an estimate for $\lidgt(x)$.

\subsection{LID Estimation with GANs} \label{lid:gan_theory}

We assume the GAN is given by a generator $G_\theta: \mathbb{R}^{d'} \to \mathbb{R}^d$ which transforms latent variables from a distribution in $\mathbb{R}^{d'}$ to the ambient space $\mathbb{R}^d$. For a generated sample $x = G_\theta(z)$, we estimate $\lidmodel(x)$ as the rank of the Jacobian of the generator, i.e. $\text{rank }\nabla G_\theta(z)$. As for the NB estimator with DMs, the rank is numerically computed by thresholding singular values. We highlight that this is a standard approach to estimate $\lidmodel$ in decoder-based DGMs \citep{horvat2022, kamkari2024geometric, imtiaz2024understanding}.

\newpage
\section{Experimental Details}\label{appx:experiment_details}

\subsection{Hyper-parameter Setup for LID Estimation Methods} \label{appx:hyperparameters_lid}

\paragraph{$\widehat{\text{LID}}_*$ with Local PCA} 
As established in Appendix \ref{lid:lpca_theory}, local PCA estimates the intrinsic dimensionality of a datapoint by counting the number of significant explained variances from PCA performed on the datapoint's local neighbourhood, determined by its $k$ nearest neighbours ($k=100$ in our experiments). Finding the significant explained variances is done through a threshold hyperparameter, $\tau$, where explained variances above $\tau$ are considered significant. For our approach in \autoref{fig:lpca_hist}, we introduce two key modifications to better adapt the original Local PCA algorithm for detecting DD-Mem: $(i)$ instead of selecting $\tau$ individually for each datapoint, we define it globally as the $10$th percentile of all explained variances across the entire dataset; $(ii)$ if a datapoint has neighbours within the $10$th percentile of all pairwise distances, we restrict the neighbourhood to those points. The second modification allows us to avoid including distant points in the neighbourhood if closer ones already exist and especially helps us detect zero-dimensional point masses.

\paragraph{$\widehat{\text{LID}}_\theta$ for GANs} 
As detailed in Appendix \ref{lid:gan_theory}, the rank of the Jacobian $\nabla G_\theta(z)$ can be used to estimate $\widehat{\text{LID}}_\theta$. However, in practice, the rank --- or, equivalently, the number of non-zero singular values --- tends to equal the latent dimension; this is because singular values are typically close to zero but rarely exactly zero. To account for this, we apply a thresholding approach: a singular value is considered significant (non-zero) if it exceeds a hyperparameter $\tau$. We define $\tau$ as the 10th percentile of all singular values computed from the generated images in \autoref{fig:stylegan_hist} and \autoref{fig:hist_lid_normalized_gan}.

\paragraph{$\widehat{\text{LID}}_\theta$ with NB} We used $t_0=0.1$ and thresholded the singular values of $S_\theta(x, t_0)$ by 10th percentile; the results are presented in \autoref{fig:iddpm_hist} and \autoref{fig:hist_lid_normalized_iddpm}. The choice of $t_0$ is empirically determined by observing how the NB score correlates with the memorization behavior with a fixed subset of 1000 randomly generated samples.

\paragraph{$\widehat{\text{LID}}_\theta$ with FLIPD} Unless stated otherwise, we set $t_0 = 0.05$ for FLIPD and use the Hutchinson trace estimator to approximate the trace of the score gradient in \autoref{eq:flipd_formula}. In line with \hbox{\cite{kamkari2024geometric2}}, we apply this in the latent space of Stable diffusion and use a single Hutchinson sample to estimate \autoref{eq:flipd_formula} for all of our large-scale experiments.

\paragraph{CFG Norm for Detecting Memorized Samples in the Training Set} Note that while \citet{wen2023detecting} use the generation process to measure whether a synthesized image has been memorized, we were interested in detecting whether real, training-set images have been memorized in \autoref{fig:hist_comparison}, 
which requires some methodological changes. To compute a memorization score, we take $k$ Euler steps forward using the conditional score $s_\theta(x; t, \conditioning)$ with the probability flow ODE \citep{song2020score} until time $t_0$ to get a point at $x_0 \in \dataspace$. We then compute the CFG norm $\lVert s_\theta(x_0; t_0, c) - s_\theta(x; t_0, \emptyset)\rVert$. We use timestep $t_0 = 0.01$ and 3 Euler steps.

\subsection{The Confounding Effect of Complexity for Detecting Memorization} \label{appx:complexity_confounding}

\begin{figure*}[t]\captionsetup{font=footnotesize, labelformat=simple, labelsep=colon}
    \centering
    
    \begin{subfigure}{0.32\textwidth}
        \centering
        \includegraphics[width=\linewidth]{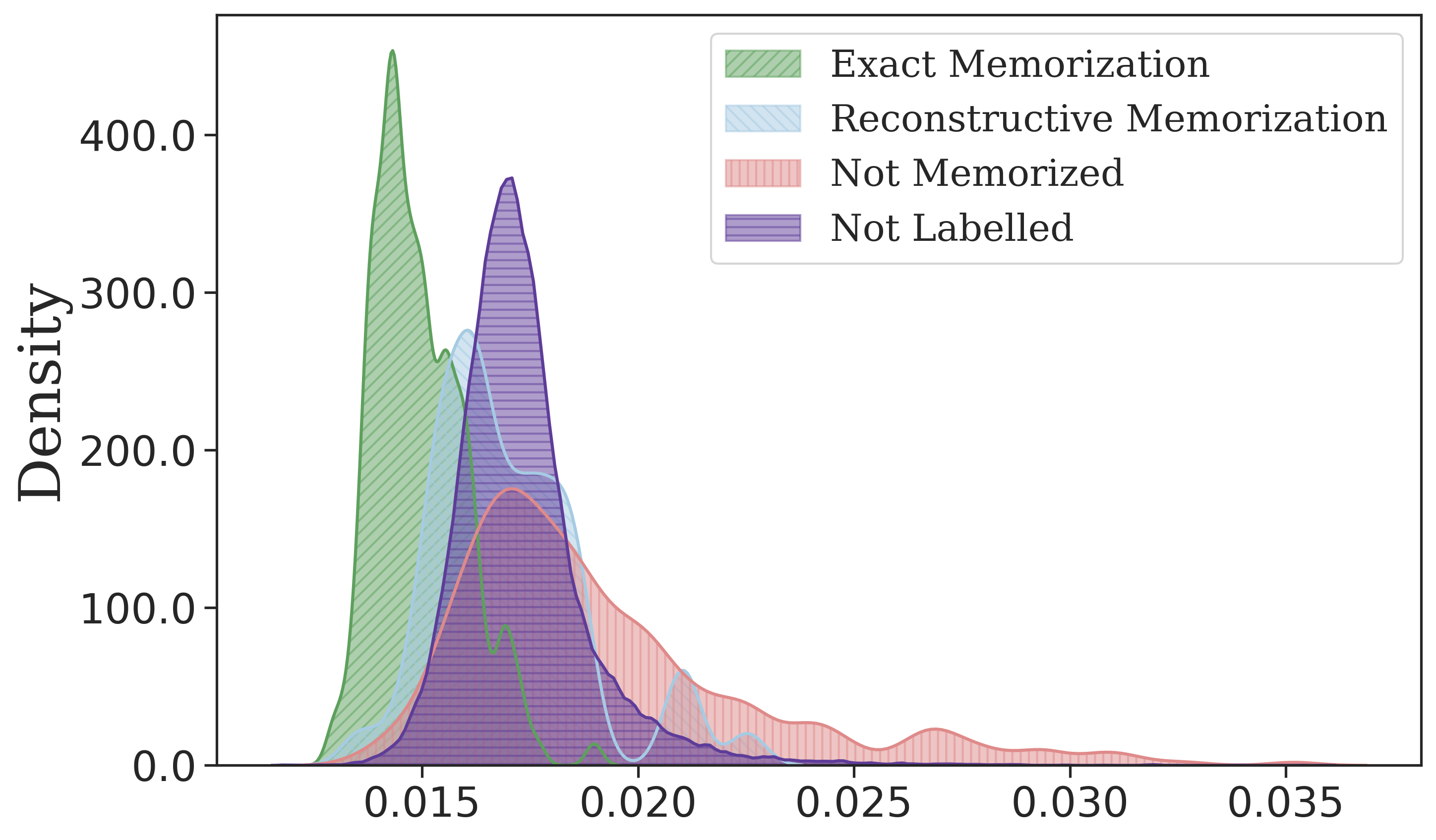}
        \subcaption{\tiny $\widehat{\text{LID}}_\theta$ adjusted by PNG for iDDPM.\\ \,}
        \label{fig:hist_lid_normalized_iddpm}
    \end{subfigure}%
    \hfill
    \begin{subfigure}{0.32\textwidth}
        \centering
        \includegraphics[width=\linewidth]{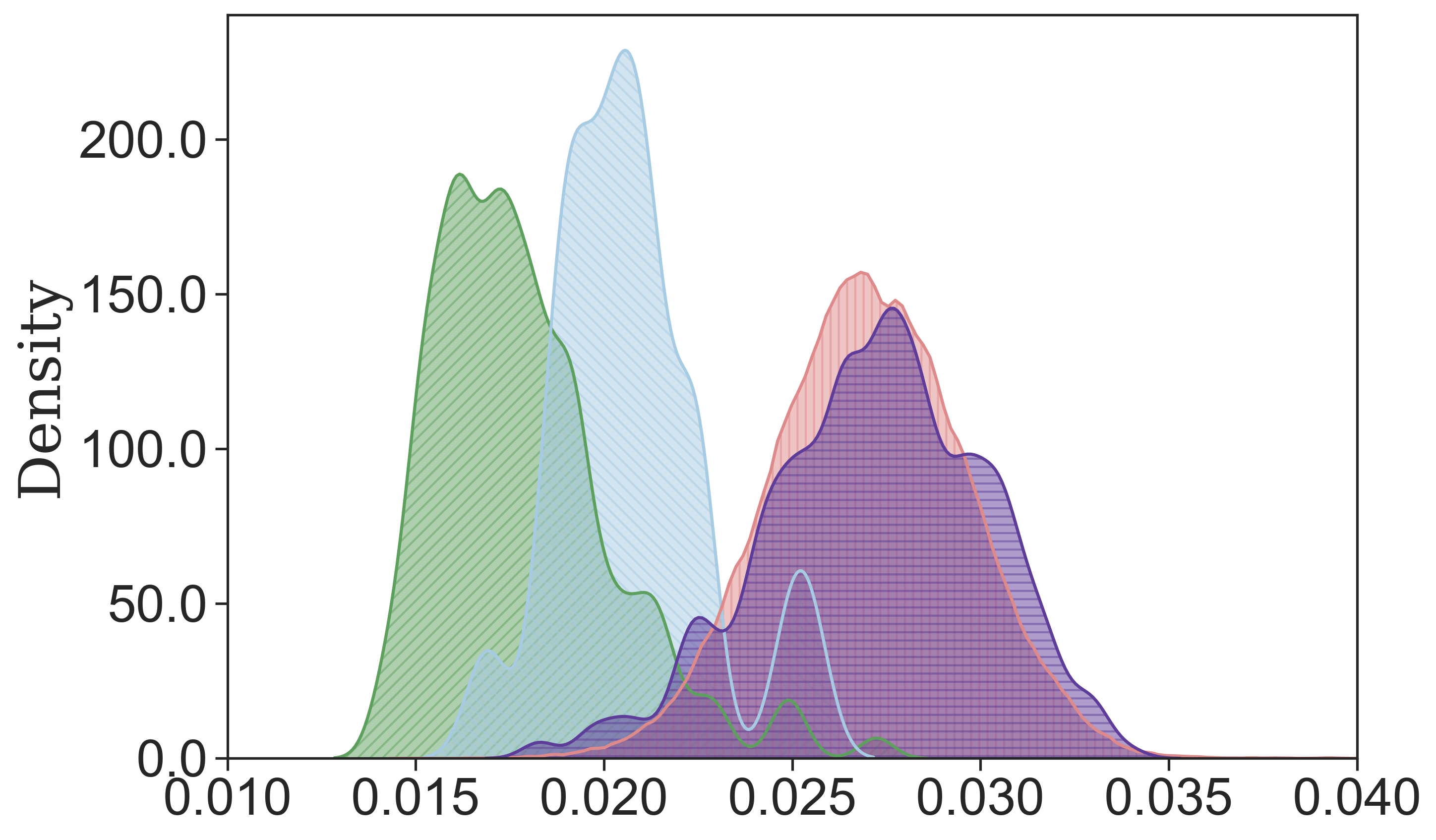}
        \subcaption{\tiny$\widehat{\text{LID}}_\theta$ adjusted by PNG for StyleGAN-ADA2.\\ \,}
        \label{fig:hist_lid_normalized_gan}
    \end{subfigure}%
    \hfill
    \begin{subfigure}{0.32\textwidth}
        \centering
        \includegraphics[width=\linewidth]{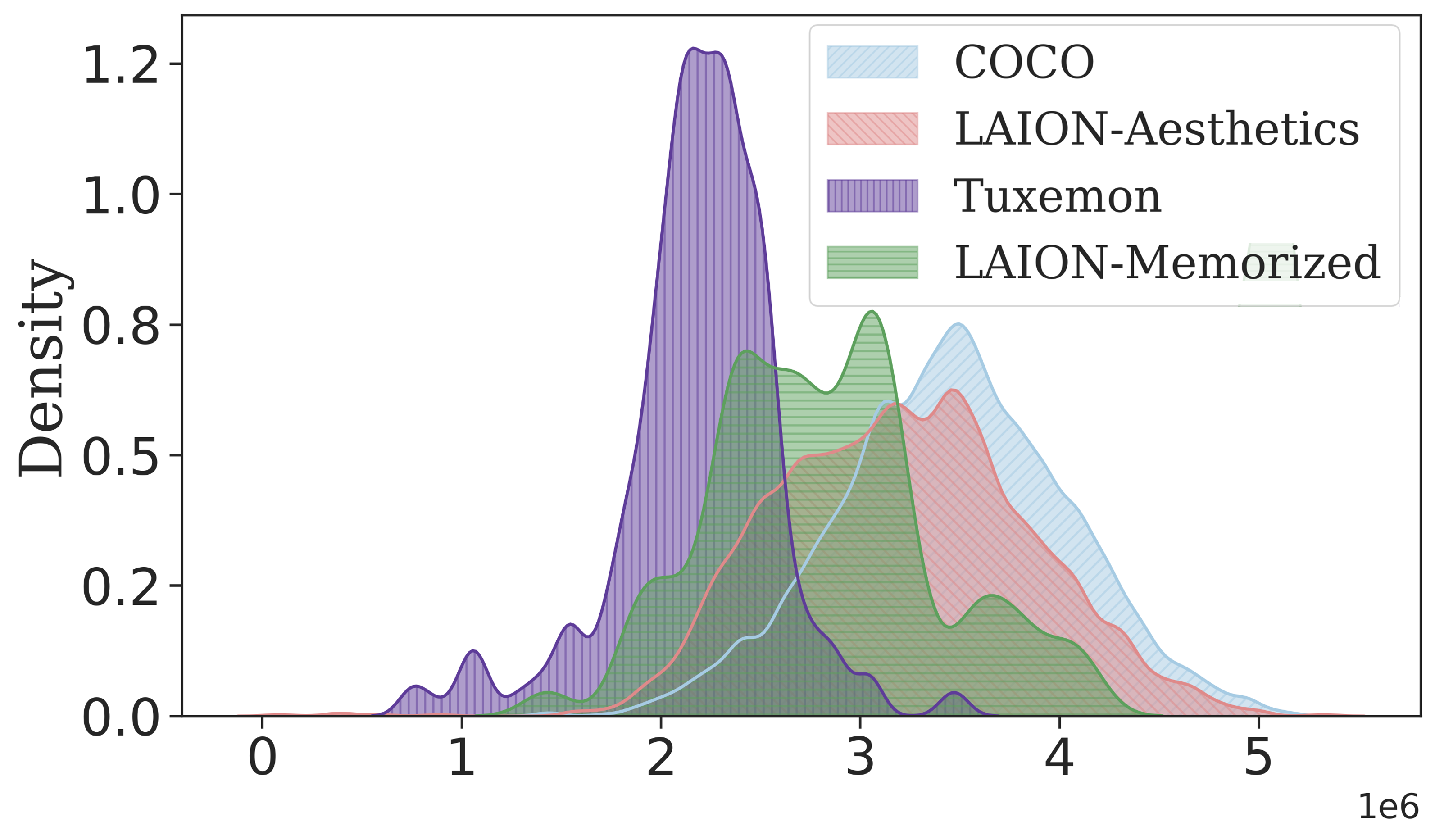}
        \subcaption{\tiny PNG compression length for Stable Diffusion images.}
        \label{fig:hist_png}
    \end{subfigure}%
    \vspace{-2pt}
    \caption{Removing and analyzing image complexity as a confounding factor in memorization detection for CIFAR10 (a-b) and Stable Diffusion (c).}
    \label{fig:hist_comparison_normalized}
\end{figure*}

$\lidmodel$ is correlated with image complexity (\autoref{sec:mem_lid}; \cite{kamkari2024geometric2}), which raises a valid concern: the correlation, combined with the fact that simpler images are more likely to be memorized, suggests that image complexity may confound our analysis. This is evident in \autoref{fig:cifar10_matching} (right panel), where GAN-generated images with the lowest $\widehat{\text{LID}}_\theta$ values are the simplest ones, not necessarily the memorized ones. To address this confounding factor, we draw inspiration from \citet{kamkari2024geometric2} and normalize it by PNG compression length, using it as a proxy for image complexity. 
We use the maximum compression level of 9 with the cv2 package \citep{opencv_library}.
According to this adjusted metric, the smallest values now correspond to memorized images that are not necessarily simple, such as the cars in CIFAR10.
\autoref{fig:hist_lid_normalized_iddpm} and \autoref{fig:hist_lid_normalized_gan} show these adjusted LID estimated values, which achieve a slightly improved separation between memorized and not memorized images (as well as between exactly memorized and reconstructively memorized images) than the non-PNG-normalized results in the main text. 
It is worth noting that complexity did not appear to be a confounding factor in the Stable Diffusion analysis shown in \autoref{fig:hist_comparison}. In fact, as depicted in \autoref{fig:hist_png}, the Tuxemon images are relatively simpler than the LAION memorized images, as measured by their PNG compression length. However, despite their simplicity, Tuxemon images have consistently higher $\widehat{\text{LID}}_\theta$ values compared to the memorized images in \autoref{fig:hist_cond_lid} and \autoref{fig:hist_uncond_lid}.

\newpage
\section{Text Conditioning and Memorization in Stable Diffusion}\label{appx:stable_diffusion}

\subsection{Adapting Denoising Diffusion Probabilistic and Implicit Models}

Following \citet{wen2023detecting}, we use denoising diffusion probabilistic models (DDPMs) \citep{ho2020denoising}. 
This model can be seen as a discretization of the forward SDE process of a score-based DM \citep{song2020score}. 
Here, instead of continuous timesteps $t$, a timestep $\pired{t}$ instead belongs to a sequence $\{0, \hdots \pired{T}\}$ with $\pired{T}$ being the largest timescale; we use $\pired{T}=50$. We use the colour \pired{red} to denote the discretized notation used by \citet{ho2020denoising}. 

With that in mind, DDPMs can be seen as a Markov noising process with the following transition kernel, parameterized by $\pired{\bar{\alpha}_t}$, mirroring the notation of \citet{ho2020denoising}:
\begin{equation} \label{eq:ddim_transition_kernel}
    \pired{p_{t\mid 0}}(\pired{x_t} \mid \pired{x_0}) \coloneqq \mathcal{N}(x_t; \sqrt{\pired{\bar{\alpha}_t}} \cdot \pired{x_0}, (1 - \pired{\bar{\alpha}_t}) \mathbf{I}_d).
\end{equation}

DDPMs do not directly parameterize the score function, but rather use a neural network $\pired{\epsilon_\theta}(\pired{x_t}, \pired{t})$, which relates to the score function as:
\begin{equation} \label{eq:ddim_link_to_scores}
    s_\theta(x, \pired{t}/\pired{T}) = -\pired{\epsilon_\theta}(x, \pired{t}) / \sqrt{1 - \pired{\bar{\alpha}_t}}\text{, or equivalently, } - \sigma(\pired{t}/\pired{T}) s_\theta(x, \pired{t}/\pired{T}) =  \pired{\epsilon_\theta}(x, \pired{t}).
\end{equation}

Note that in this context, we have $\sigma^2(\pired{t} / \pired{T}) = 1 - \pired{\bar{\alpha}_t}$ and $\psi(\pired{t} / \pired{T}) = \sqrt{\pired{\bar{\alpha}_t}}$. 
\autoref{eq:ddim_transition_kernel} (the transition kernel) and \autoref{eq:ddim_link_to_scores} (the score function) provide us with the recipe for estimating $\lidmodel$ using FLIPD with DDPMs (recall \autoref{eq:flipd_formula}).

When sampling from DMs we use the DDIM sampler \citep{song2020denoisingddim}, mirroring the setup in \citet{wen2023detecting}. In our notation, this sampler defines $\tilde{x}_t \coloneqq x_t / \psi(t)$, where $x_t$ is given as in \autoref{eq:sde}. In turn, $\tilde{x}_t$ obeys the forward SDE:
\begin{equation}\label{eq:ddim_sde}
    \rd \tilde{x}_t = \tilde{g}(t) \rd W_t,\quad \tilde{x}_0 \sim p_*(x),
\end{equation}
where $\tilde{g}(t) = g(t)/\psi(t)$. This SDE has a corresponding score function
\begin{equation}\label{eq:ddim_score}
    \tilde{s}_\theta(x, t) = \psi(t) s_\theta\big(\psi(t)x, t\big),
\end{equation}
and DDIM uses this score function to sample from the model. The transition kernel corresponding to \autoref{eq:ddim_sde} has $\tilde{\psi}(t)=1$ and a $\tilde{\sigma}(t)$ which can be computed in closed form. Analogously to \autoref{eq:ddim_link_to_scores}, we can define $\pired{\tilde{\epsilon}_\theta}$ as
\begin{equation}\label{eq:tilde_epsilon}
    \pired{\tilde{\epsilon}_\theta}(x,\pired{t}) = -\tilde{\sigma}(\pired{t}/\pired{T})\tilde{s}_\theta(x, \pired{t}/\pired{T}).
\end{equation}
We highlight that FLIPD (\autoref{eq:flipd_formula}) can be applied using the forward SDE in \autoref{eq:ddim_sde} along with its corresponding score function in \autoref{eq:ddim_score}, resulting in the estimate
\begin{equation}\label{eq:ddim_flipd}
    \widetilde{\text{FLIPD}}(x, t) = d + \tilde{\sigma}^2(t)\Big(\text{tr}\left(\nabla \tilde{s}_\theta \big(x, t\big)\right) + \lVert \tilde{s}_\theta \big(x, t\big) \rVert^2 \Big).
\end{equation}
Note that this estimate can be computed when having access to $\pired{\tilde{\epsilon}_\theta}$ thanks to \autoref{eq:tilde_epsilon}. We also note that in our text-conditioning analysis, we are interested in the probabilities conditioned by the text prompt, thus, these score functions are extended by the conditioning variable $\conditioning$, resulting in the modified forms $\pired{\epsilon_\theta}(x; \pired{t}, \conditioning)$, $\pired{\tilde{\epsilon}_\theta}(x; \pired{t}, \conditioning)$, $s_\theta(x; \pired{t}/\pired{T}, c)$, and $\tilde{s}_\theta(x; \pired{t}/\pired{T}, c)$.

\subsection{Unifying Differentiable Metrics for Text-conditioned Memorization} 

We begin by revisiting the differentiable memorization metric used by \cite{wen2023detecting} for detecting and mitigating memorization, reformulating it within the continuous, score-based framework of diffusion models. Building on this, we perform an analysis, making minimal modifications to the original formulation to derive alternative metrics that remain effective and are theoretically-grounded. As a result, here we will formally derive three differentiable metrics: $\mathcal{A}^{\text{CFG}}(c)$, $\mathcal{A}^{s_\theta^{\text{CFG}}}(c)$, and finally $\mathcal{A}^{\text{FLIPD}}(c)$.
We show that the value \cite{wen2023detecting} compute in their paper is in fact an estimator of $\mathcal{A}^{\text{CFG}}(c)$, rescaled by a constant. We then make minor modifications to introduce the two new metrics $\mathcal{A}^{s_\theta^{\text{CFG}}}(c)$ and $\mathcal{A}^{\text{FLIPD}}$ and interpret them through the lens of the MMH.

\paragraph{The Differentiable Metric of \citet{wen2023detecting}}
For any text condition $\conditioning$, \cite{wen2023detecting} generate multiple samples $(\pired{\tilde{x}_0}^{(n)})_{n=1}^N$, with the $n$th sample following the (DDIM) trajectory
$\{ \pired{\tilde{x}_{T}}^{(n)}, \pired{\tilde{x}_{T-1}}^{(n)}, \hdots, \pired{\tilde{x}_{0}}^{(n)} \}$ from noise to data through the denoising process. 
They then introduce the following metric, which we have slightly reformulated to match our notation:
\begin{equation} \label{eq:wen_metric_first}
\begin{aligned}
    \pired{\mathcal{A}^{\text{CFG}}}(c; N, \pired{T}) &= \frac{1}{\pired{T} N} \sum_{n=1}^N \sum_{\pired{t}=0}^{\pired{T}} \Big\Vert \pired{\tilde{\epsilon}_\theta}\left(\pired{\tilde{x}_t}^{(n)}; \pired{t}, \conditioning \right) - \pired{\tilde{\epsilon}_\theta}\left(\pired{\tilde{x}_t}^{(n)}; \pired{t}, \emptyset \right) \Big\Vert^2.
\end{aligned}
\end{equation}

We colour-code the metric in \pired{red} to distinguish between it and the analogous metric that we will shortly derive at the end of this section. 

Let $\tilde{p}_t^{\text{CFG}}$ represent the marginal probability at time $t$ induced by the DDIM sampler conditioned on $\conditioning$ with the addition of the CFG term. Recall from \autoref{eq:guidance_def}
 that the score used for sampling from $\tilde{p}_t^{\text{CFG}}(\cdot \mid \conditioning)$ with CFG is
 \begin{equation}
    \tilde{s}_\theta^\text{CFG}(x; t, c) = \tilde{s}_\theta(x; t, \emptyset) + \lambda (\tilde{s}_\theta(x; t, \conditioning) - \tilde{s}_\theta(x; t, \emptyset))\,.
\end{equation}
Using  \autoref{eq:tilde_epsilon} and \autoref{eq:guidance_def}, we can rewrite $\pired{\mathcal{A}^{\text{CFG}}}(c; N, \pired{T})$ as follows:
\begin{equation} \label{eq:wen_metric_intermediate_1}
\begin{aligned}
    \pired{\mathcal{A}^{\text{CFG}}}(c; N, \pired{T}) \coloneqq \frac{1}{\pired{T} N} \sum_{n=1}^N \sum_{t=0}^{\pired{T}} \Big\Vert -\frac{\tilde{\sigma}(\pired{t}/\pired{T})}{\lambda} \left[ \tilde{s}_\theta^{\text{CFG}}(\pired{\tilde{x}_t}^{(n)}; \pired{t}/\pired{T}, \conditioning) - \tilde{s}_\theta(\pired{\tilde{x}_t}^{(n)}; \pired{t}/\pired{T}, \emptyset)\right]\Big\Vert^2.
\end{aligned}
\end{equation}
We now assume $\pired{T} \to \infty$, which will reformulate \autoref{eq:wen_metric_intermediate_1} with an integral that we will replace with an expectation:
\begin{align}
    \pired{\mathcal{A}^{\text{CFG}}}(\conditioning; N) &\coloneqq \lim_{\pired{T} \to \infty} \pired{\mathcal{A}^{\text{CFG}}}(c; N, \pired{T}) \\
    & = \lambda^{-2} \cdot \frac{1}{N} \sum_{n=1}^N \int_{0}^1 \tilde{\sigma}^2(t) \lVert \tilde{s}_\theta^{\text{CFG}}(\pired{\tilde{x}_t}^{(n)}; t, \conditioning) - \tilde{s}_\theta(\pired{\tilde{x}_t}^{(n)}; t, \emptyset) \rVert^2 \text{d}t\\
    & = \lambda^{-2} \cdot \frac{1}{N} \sum_{n=1}^N \mathbb{E}_{t \sim \mathcal{U}(0, 1)} \left[\tilde{\sigma}^2(t) \lVert  \tilde{s}_\theta^{\text{CFG}}(\pired{\tilde{x}_t}^{(n)}; t, \conditioning) - \tilde{s}_\theta(\pired{\tilde{x}_t}^{(n)}; t, \emptyset)\rVert^2 \right]\\
    & = \lambda^{-2} \cdot \mathbb{E}_{t \sim \mathcal{U}(0, 1)} \left[ \frac{1}{N} \sum_{n=1}^N \tilde{\sigma}^2(t) \cdot \lVert  \tilde{s}_\theta^{\text{CFG}}(\pired{\tilde{x}_t}^{(n)}; t, \conditioning) - \tilde{s}_\theta(\pired{\tilde{x}_t}^{(n)}; t, \emptyset)\rVert^2 \right].\label{eq:wen_metric_intermediate_2}
\end{align}
Here, $\mathcal{U}(0, 1)$ denotes the uniform distribution. Next, we observe that the inner term of the expectation on the right-hand-side of \autoref{eq:wen_metric_intermediate_2} is in fact a Monte-Carlo estimator. By the law of large numbers, we have the following:
\begin{align}
\pired{\mathcal{A}^{\text{CFG}}}(\conditioning) & \coloneqq \lim_{N \to \infty}  \pired{\mathcal{A}^{\text{CFG}}}(\conditioning; N)\\
&=  \lambda^{-2} \cdot \mathbb{E}_{t \sim \mathcal{U}(0, 1)} \mathbb{E}_{\Tilde{x}_t \sim \Tilde{p}^{\text{CFG}}_t(\cdot \mid c)} \left[ \tilde{\sigma}^2(t) \cdot \lVert  \tilde{s}_\theta^{\text{CFG}}(\Tilde{x}_t; t, \conditioning) - \tilde{s}_\theta(\Tilde{x}_t; t, \emptyset)\rVert^2 \right]. \label{eq:wen_metric_intermediate_3}
\end{align}

We now see that with the new formulation, all the \pired{red} terms in \autoref{eq:wen_metric_intermediate_3}, have gone away, making it fully amenable to the score-based formulation of diffusion models. 
The $\lambda$ factor merely scales the metric, and for the purposes of detection and mitigation, this scaling is inconsequential: if a metric effectively predicts memorization, rescaling it will not diminish its effectiveness as a predictor. We thus disregard the scaling factor $\lambda$ to make the derivation cleaner and replace the uniform distribution $\mathcal{U}(0, 1)$ with a general ``scheduling'' distribution $\mathcal{T}(0, 1)$ of timesteps in $(0, 1]$; this would allow our metric to be a generalization of the one proposed by \citet{wen2023detecting}:
\begin{equation}\label{eq:cumulative_cfg}
    \mathcal{A}^{\text{CFG}}(c) \coloneqq \mathbb{E}_{t \sim \mathcal{T}(0, 1)} \mathbb{E}_{\Tilde{x}_t \sim \Tilde{p}^{\text{CFG}}_t(\cdot \mid c)} \left[ \Tilde{\sigma}^2(t) \cdot \lVert  \Tilde{s}_\theta^{\text{CFG}}(\Tilde{x}_t; t, \conditioning) - \Tilde{s}_\theta(\Tilde{x}_t; t, \emptyset)\rVert^2 \right].
\end{equation}

\paragraph{Simplifying Further} 
We have shown that the CFG vector norm and the CFG adjusted score norm behave similarly in \autoref{fig:cfg_correlation}.
If, instead of considering the CFG vector norm in \autoref{eq:cumulative_cfg}, we consider the CFG-adjusted score $\Tilde{s}_\theta^{\text{CFG}}(\cdot; t, c)$, we arrive at the following metric:
\begin{equation} \label{eq:cumulative_score}
\begin{aligned}
    \mathcal{A}^{s_\theta^{\text{CFG}}}(c) &\coloneqq 
   \mathbb{E}_{t \sim \mathcal{T}(0, 1) } \mathbb{E}_{\Tilde{x}_{t} \sim \Tilde{p}^{\text{CFG}}_{t}(\cdot \mid \conditioning)}  \left[ \Tilde{\sigma}^2(t) \lVert \Tilde{s}_\theta^{\text{CFG}}(\Tilde{x}_t; t, \conditioning) \rVert^2 \right].
\end{aligned}
\end{equation}
We have shown this to be a viable memorization metric, able to detect tokens driving memorization in \autoref{fig:stable_diffusion_attributions_full}, and behaving comparable to $\mathcal{A}^{\text{CFG}}(c)$, the original metric proposed by \citet{wen2023detecting}. However, a nice property of $\mathcal{A}^{s_\theta^{\text{CFG}}}(c)$ is that it can now be linked to MMH: for a memorized prompt where $\lidmodel(\cdot \mid c)$ is small, the score function $\Tilde{s}_\theta^{\text{CFG}}(\cdot; t, c)$, especially for small $t$, tends to become large, causing the metric in \autoref{eq:cumulative_score} to increase significantly.

\paragraph{Linking to FLIPD} We now propose a more direct proxy for $\lid$ based on the FLIPD estimate of $\lidmodel$.
Recalling \autoref{eq:flipd_formula}, we can define the class-conditional $\lidmodel(\cdot \mid c)$ estimate based on FLIPD as follows, analoguously to \autoref{eq:ddim_flipd}: 
\begin{equation} \label{eq:flipd_conditional}
    \widetilde{\text{FLIPD}}^{\text{CFG}}(x; t, \conditioning) = d + \Tilde{\sigma}^2(t) \cdot \Big( \text{tr}\big(\nabla \Tilde{s}_\theta^{\text{CFG}} (x; t, \conditioning)\big) + \lVert \Tilde{s}_\theta^{\text{CFG}} (x; t, \conditioning) \rVert^2 \Big).
\end{equation}
Noting that $\widetilde{\text{FLIPD}}^{\text{CFG}}$ has a similar term to \autoref{eq:cumulative_score}, we add $d$ and the trace term from \autoref{eq:flipd_conditional} into \autoref{eq:cumulative_score}, and propose the following MMH-based metric: 
\begin{align}
    d + \mathcal{A}^{s_\theta^{\text{CFG}}}(c) + \mathbb{E}_{t \sim \mathcal{T}(0, 1) } \mathbb{E}_{\Tilde{x}_t \sim \Tilde{p}^{\text{CFG}}_t(\cdot \mid \conditioning)}  \left[  \Tilde{\sigma}^2(t) \cdot \text{tr}\left(\nabla \Tilde{s}_\theta^{\text{CFG}} \big(\Tilde{x}_t; t, \conditioning\big)\right)\right] & =\\
    \mathbb{E}_{t \sim \mathcal{T}(0, 1) } \mathbb{E}_{\Tilde{x}_t \sim \Tilde{p}^{\text{CFG}}_t(\cdot \mid \conditioning)}  \left[ \widetilde{\text{FLIPD}}^{\text{CFG}}(\Tilde{x}_t; t, \conditioning) \right] & =: \mathcal{A}^{\text{FLIPD}}(c).\label{eq:cumulative_flipd}
\end{align}
Despite the fact that $\mathcal{A}^\text{FLIPD}(c)$ can be expressed in terms of $\mathcal{A}^{s_\theta^{\text{CFG}}}(c)$, the former indicates memorization when it is small, while the latter indicates memorization when it is large.

Note that while \autoref{eq:cumulative_flipd} averages FLIPD values over (potentially) all the timesteps $t \in (0, 1]$, the theory linking FLIPD and $\lidmodel$ is only rigorously justified when $t\to 0$ \citep{kamkari2024geometric2}.
Hence, we set the scheduling distribution $\mathcal{T}$ such that it primarily samples $t$ close to zero. 
As such, $\mathcal{A}^{\text{FLIPD}}$ will average FLIPD estimate terms that are closely linked to $\lidmodel(\cdot \mid c)$. 
Notably, our experiments also revealed that although setting $t$ as small as possible makes sense from a mathematical perspective, the score function, and as a result, FLIPD estimates, become unstable as $t \to 0$ \citep{pidstrigach2022score, kamkari2024geometric2}.
Therefore, in practice, we choose $\mathcal{T}$ as a uniform supported on $(0.0, 0.2]$; therefore, putting more emphasis on these small $t$ values but at the same time avoiding instabilities in $\mathcal{A}^{\text{FLIPD}}(\conditioning)$.

The scheduling is a small, but important distinction between $\mathcal{A}^{\text{FLIPD}}$ on one hand, and $\mathcal{A}^{s_\theta^{\text{CFG}}}$ and $\mathcal{A}^{\text{CFG}}$ on the other hand; while $\mathcal{A}^{\text{FLIPD}}$ sets $\mathcal{T}$ as a uniform on $(0.0, 0.2]$, 
$\mathcal{A}^{s_\theta^{\text{CFG}}}$ and $\mathcal{A}^{\text{CFG}}$ set $\mathcal{T}$ to a uniform distribution on $(0, 1]$, to mirror the setup in \citet{wen2023detecting}.

\subsection{Increasing Image Complexity By Optimizing $\mathcal{A}^{\text{FLIPD}}$} \label{appx:direct_optimization}

\cite{wen2023detecting} have an experiment where they optimize the prompt (embedding) $c$ directly to minimize $\pired{\mathcal{A}^{\text{CFG}}}(c)$, and as a result decrease $\mathcal{A}^{\text{CFG}}(c)$, with the purpose of obviating memorization. Here, we take a similar approach but instead optimize $c$ to \textit{maximize} $\mathcal{A}^{\text{FLIPD}}(c)$.

In \autoref{fig:prompt_optimization_progression}, we optimize $c$ with Adam using multiple steps, and as we increase $\mathcal{A}^{\text{FLIPD}}(c)$, we sample images using the prompt embedding which is being optimized.
We see that images sampled from these prompts indeed increase in complexity. This is fully consistent with our expectations and understanding of LID. We see, however, that while at a certain range the images are relatively less memorized, the method tends to introduce excessively chaotic textures to artificially increase $\lidmodel(\cdot \mid c)$, often at the expense of the image's semantic coherence. Despite this, we still find this to be an interesting result and invite future work on using different scheduling approaches for 
$\mathcal{A}^{\text{FLIPD}}(c)$ that can stabilize the optimization process of $c$.

\begin{figure*}[!htp]\captionsetup{font=footnotesize, labelformat=simple, labelsep=colon}
    \centering
    \begin{subfigure}{\textwidth}
        \centering
        \includegraphics[width=\textwidth]{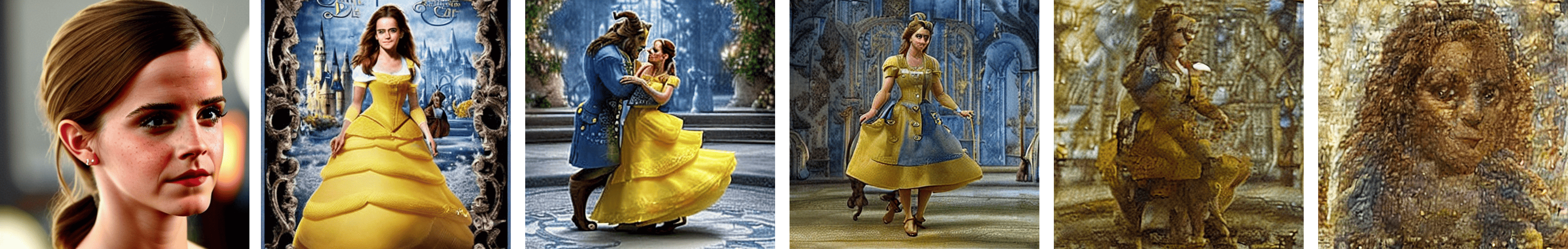}
        \subcaption{\tiny \texttt{Emma Watson Set to Star Alongside Tom Hanks in Film Adaptation of Dave Eggers' <i>The Circle</i>}}
        \vspace{5pt}
    \end{subfigure}%
    \\
    \begin{subfigure}{\textwidth}
        \centering
        \includegraphics[width=\textwidth]{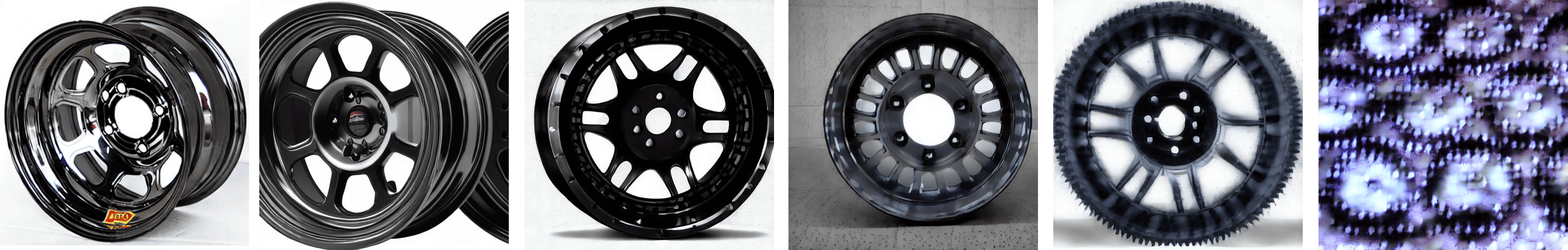}
        \subcaption{\tiny \texttt{Aero 31-984210BLK 31 Series 13x8 Wheel, Spun 4 on 4-1/4 BP 1 Inch BS}}
        \vspace{5pt}
    \end{subfigure}%
    \\
    \begin{subfigure}{\textwidth}
        \centering
        \includegraphics[width=\textwidth]{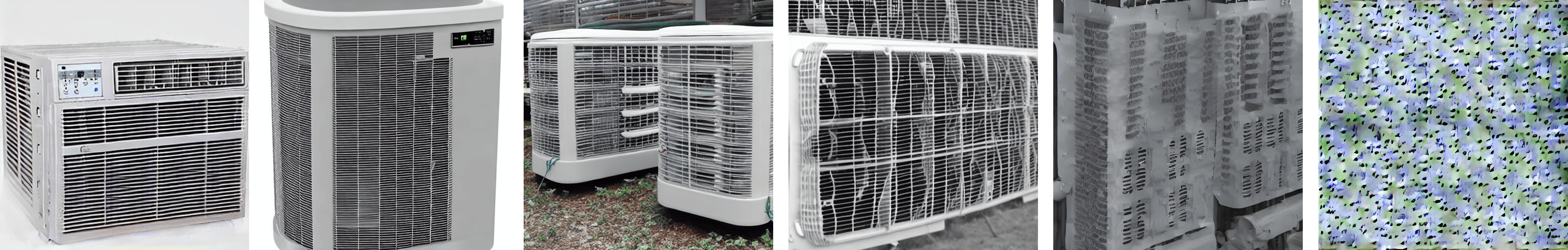}
        \subcaption{\tiny \texttt{Air Conditioners \& Parts}}
        \vspace{5pt}
    \end{subfigure}%
    \\
    \begin{subfigure}{\textwidth}
        \centering
        \includegraphics[width=\textwidth]{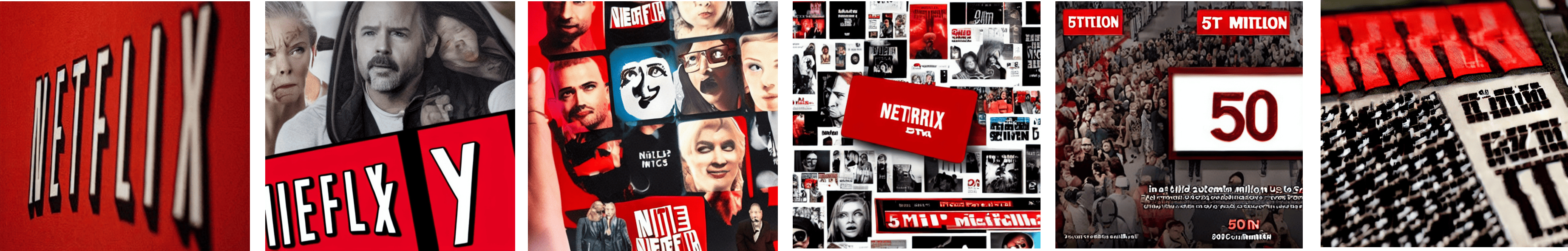}
        \subcaption{\tiny \texttt{Netflix Hits 50 Million Subscribers}}
        \vspace{5pt}
    \end{subfigure}%
    \\
    \begin{subfigure}{\textwidth}
        \centering
        \includegraphics[width=\textwidth]{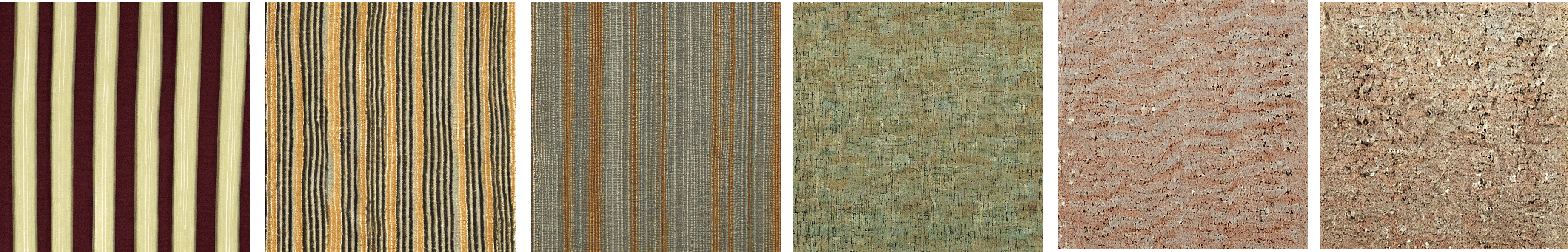}
        \subcaption{\tiny \texttt{Waterford Sand Silk Stripe Swatch}}
        \vspace{5pt}
    \end{subfigure}%
    \\
    \begin{subfigure}{\textwidth}
        \centering
        \includegraphics[width=\textwidth]{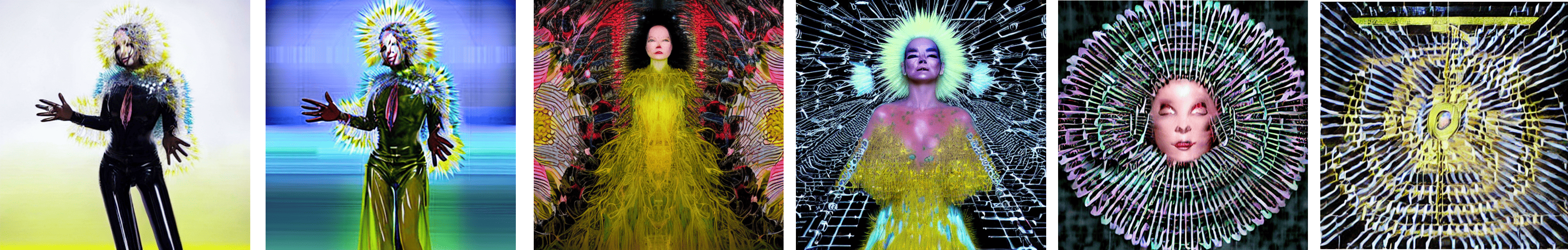}
        \subcaption{\tiny \texttt{Björk Explains Decision To Pull <i>Vulnicura</i> From Spotify}}
        \vspace{5pt}
    \end{subfigure}%
    \\
    \begin{subfigure}{\textwidth}
        \centering
        \includegraphics[width=\textwidth]{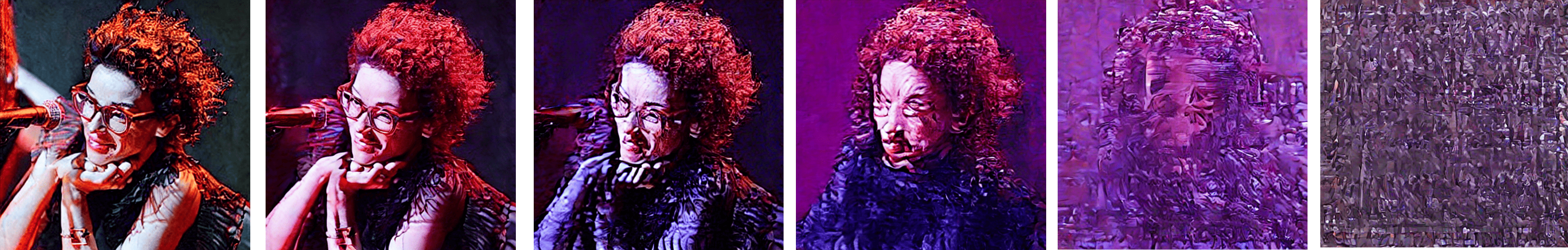}
        \subcaption{\tiny \texttt{Here's What You Need to Know About St. Vincent's Apple Music Radio Show}}
        \vspace{5pt}
    \end{subfigure}%
    \caption{Samples and their original memorized captions from directly optimizing text conditioning to increase $\mathcal{A}^{\text{FLIPD}}$ and reduce memorization. The images progress through different stages of the optimization process from left to right. While increasing $\lidmodel$ helps reduce memorization, uncontrolled increases often introduce chaotic textures, resulting in unrealistic images.}
    \label{fig:prompt_optimization_progression}
\end{figure*}

\subsection{Text Perturbation Approaches} \label{appx:gpt_details}

\paragraph{GPT-based Perturbations} As outlined in the main text, we sample $k$ tokens without replacement from a categorical distribution obtained by normalizing the token attributions, then use GPT-4 to replace these tokens; this ensures that tokens with the highest attributions are replaced more frequently. 
After selecting these $k$ tokens, we ask GPT to follow the instructions provided in Box~\ref{box:gpt_instruction} and use the output of the conversation as the new prompt. 
This process is repeated five times in our analysis to account for any randomness in the output of GPT.
It is important to note that these perturbations are designed to preserve the semantic structure of the prompt. To ensure this, the instruction specifically asks GPT not to replace names of places or characters, and to keep the new prompt as semantically close to the original as possible.

\noindent
\refstepcounter{boxcounter}
\begin{tcolorbox}[colback=gray!5!white, colframe=gray!75!black, title=Box \theboxcounter: Instruction for perturbing a caption prompt using GPT-4, width=\textwidth, left=0mm, right=0mm, boxrule=0.5pt, label={box:gpt_instruction}]
I have the following caption as a sequence of tokens:\\
\vspace{10pt}
$\langle$ \texttt{original\_tokens} $\rangle$\\
I want to create a new caption based on this one, but I want to perturb the following tokens:\\
\vspace{10pt}
$\langle$ \texttt{selected\_tokens} $\rangle$\\
These are the rules to follow for perturbing tokens:
\begin{enumerate}
    \item If the token is a special character or punctuation without significant semantics, you can remove it or change it to any special character
    \item If the token is a number, you can replace it with another number that is close to it
    \item If the token is a special name, such as the name of someone or some place or some culture, it should not be replaced
    \item If the token is any other word, you can replace and rephrase it with any synonym that makes sense in the context
\end{enumerate}
Given these requirements, please provide me with a new caption, not as a sequence of tokens, but as a natural language sentence that semantically matches closely with the original caption except for the perturbed tokens. Do not say anything else in response, only provide the new caption.
\end{tcolorbox}

\paragraph{Qualitative Comparison} 
\autoref{fig:appendix_mitigation_comparison} presents a qualitative comparison of our GPT-based perturbations applied to three memorized prompts. We have selected these specific examples to illustrate how the prompt perturbations function in practice. In this case, we set $k = 4$ and randomly perturb the tokens based on attributions derived from $\mathcal{A}^{\text{FLIPD}}$. Additionally, \autoref{fig:appendix_mitigation_comparison} includes a column demonstrating the random token addition (RTA) approach proposed by \cite{somepalli2023diffusion}, where $k$ random tokens from the CLIP library are inserted into the prompt. We see that the GPT-based perturbations better preserve the semantic integrity of the text caption, resulting in images that are not memorized and significantly more coherent.
\newpage
\begin{figure*} [t!]\captionsetup{font=footnotesize, labelformat=simple, labelsep=colon}
\centering
\fontsize{8.5}{9.5}
\selectfont
\hspace*{-0.3cm}
\begin{tabular}
{p{0.31\linewidth}p{0.31\linewidth}p{0.31\linewidth}}
    \begin{center} \textbf{Memorized Prompt} \end{center} & 
        \begin{center} \textbf{GPT +  $\mathcal{A}^{\text{FLIPD}}$ Mitigation} \end{center} & 
        \begin{center} \textbf{RTA \citep{somepalli2023understanding}} \end{center} \\ 
   \includegraphics[width=4.5cm]{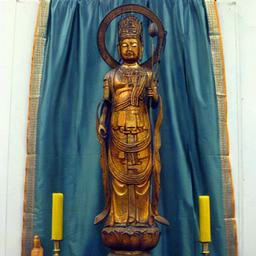} & \includegraphics[width=4.5cm]{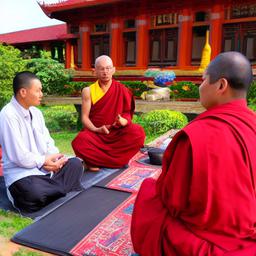}& \includegraphics[width=4.5cm]{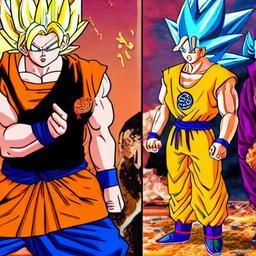}\\[-0.05cm]
   Talks on the Precepts and Buddhist Ethics & \textcolor{red}{discussions} \textcolor{red}{about} the \textcolor{red}{p}recepts and Buddhist \textcolor{red}{principles} & Talks \textcolor{red}{mellon dragonball} on the \textcolor{red}{villar} Precepts and \textcolor{red}{reformed} Buddhist Ethics \\[0.5cm]
   
   \includegraphics[width=4.5cm]{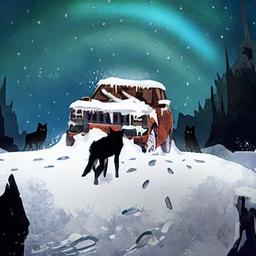} & \includegraphics[width=4.5cm]{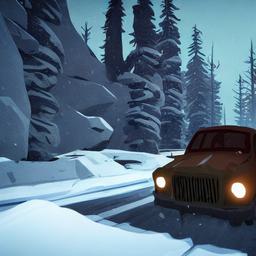}& \includegraphics[width=4.5cm]{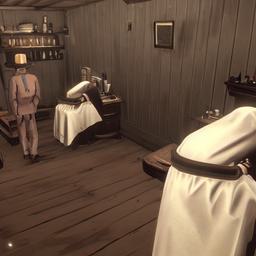}\\[-0.05cm]
   \texttt{<i>}The Long Dark\texttt{</i>} Gets First Trailer, Steam Early Access & \textcolor{red}{\sout{\texttt{<i>}}}The Long Dark\textcolor{red}{\sout{\texttt{</i>}}} Gets First Trailer\textcolor{red}{;} Steam Early Access & \textcolor{red}{barbershop relying} \texttt{<i>}The \textcolor{red}{idal} Long Dark\texttt{</i>} Gets First Trailer, Steam Early \textcolor{red}{ghorn} Access \\[0.5cm]

   \includegraphics[width=4.5cm]{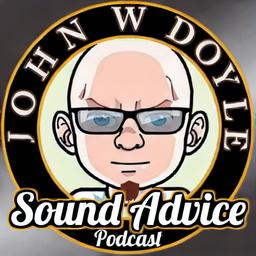} & \includegraphics[width=4.5cm]{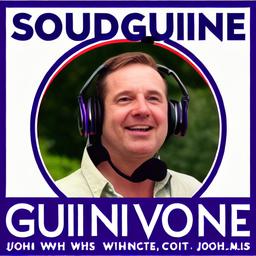}& \includegraphics[width=4.5cm]{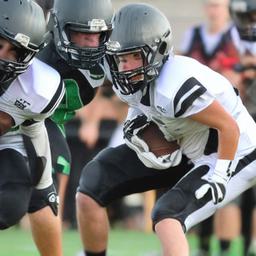}\\[-0.05cm]
   Sound Advice with John W Doyle & \textcolor{red}{s}ound \textcolor{red}{guidance} with \textcolor{red}{j}ohn \textcolor{red}{w} \textcolor{red}{d}oyle & Sound Advice with John \textcolor{red}{payments grill hsfb acadi} W Doyle \\[0.5cm]
\end{tabular}
\caption{Comparison of mitigation approaches. The tokens highlighted in red indicate the changes and perturbations made by each approach. }
\label{fig:appendix_mitigation_comparison}
\end{figure*}

\clearpage

\subsection{More Examples of Token Attributions} \label{appx:stable_diffusion_attributions_full}
\begin{figure*}[!htp]\captionsetup{font=footnotesize, labelformat=simple, labelsep=colon}
    \centering
    \vspace{-10pt}
    \begin{subfigure}{\textwidth}
        \centering
        \includegraphics[width=\textwidth]{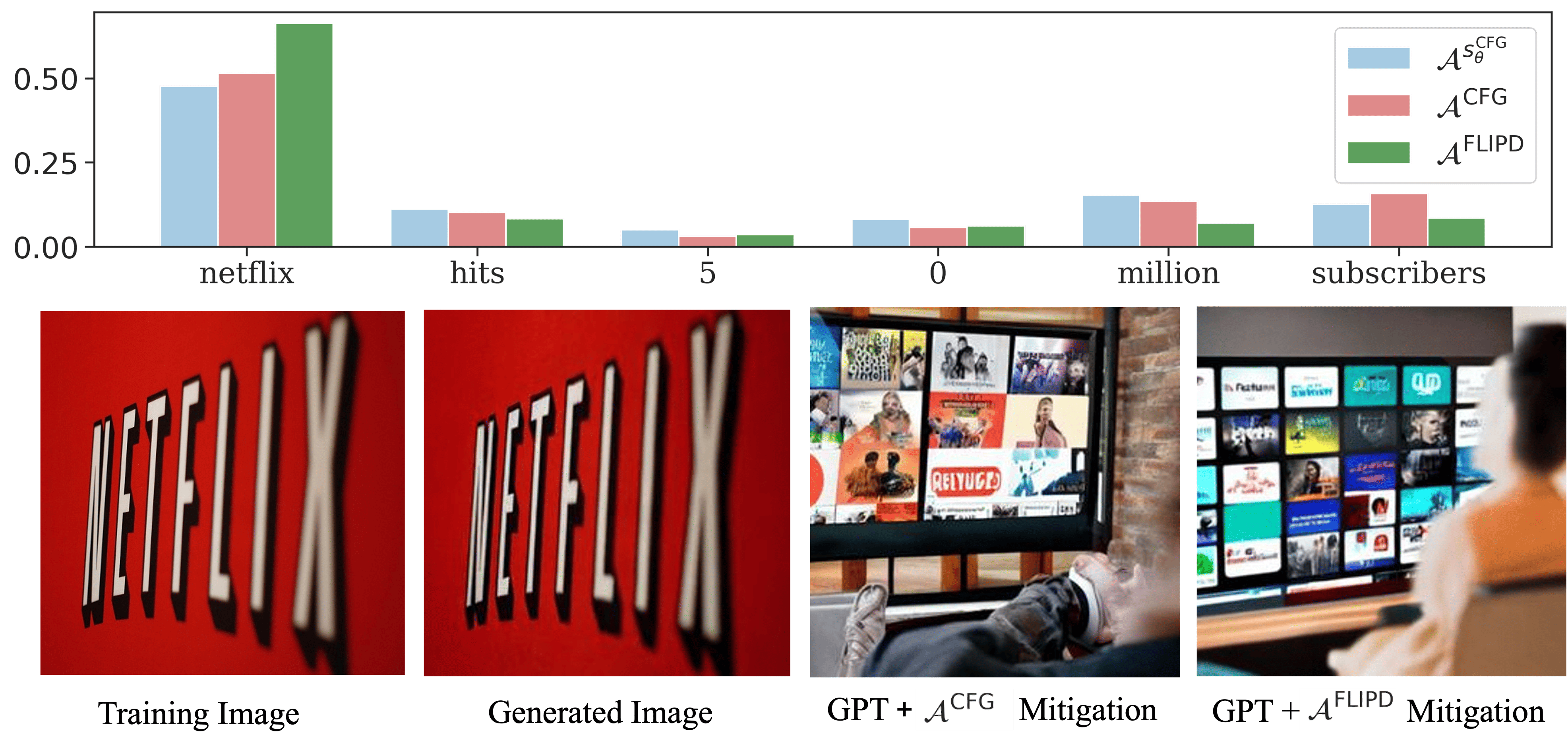}
        \subcaption{All the methods detect ``Netflix'' a private trademark driving memorization.}
    \end{subfigure}%
    \\
    \begin{subfigure}{\textwidth}
        \centering
        \includegraphics[width=\textwidth]{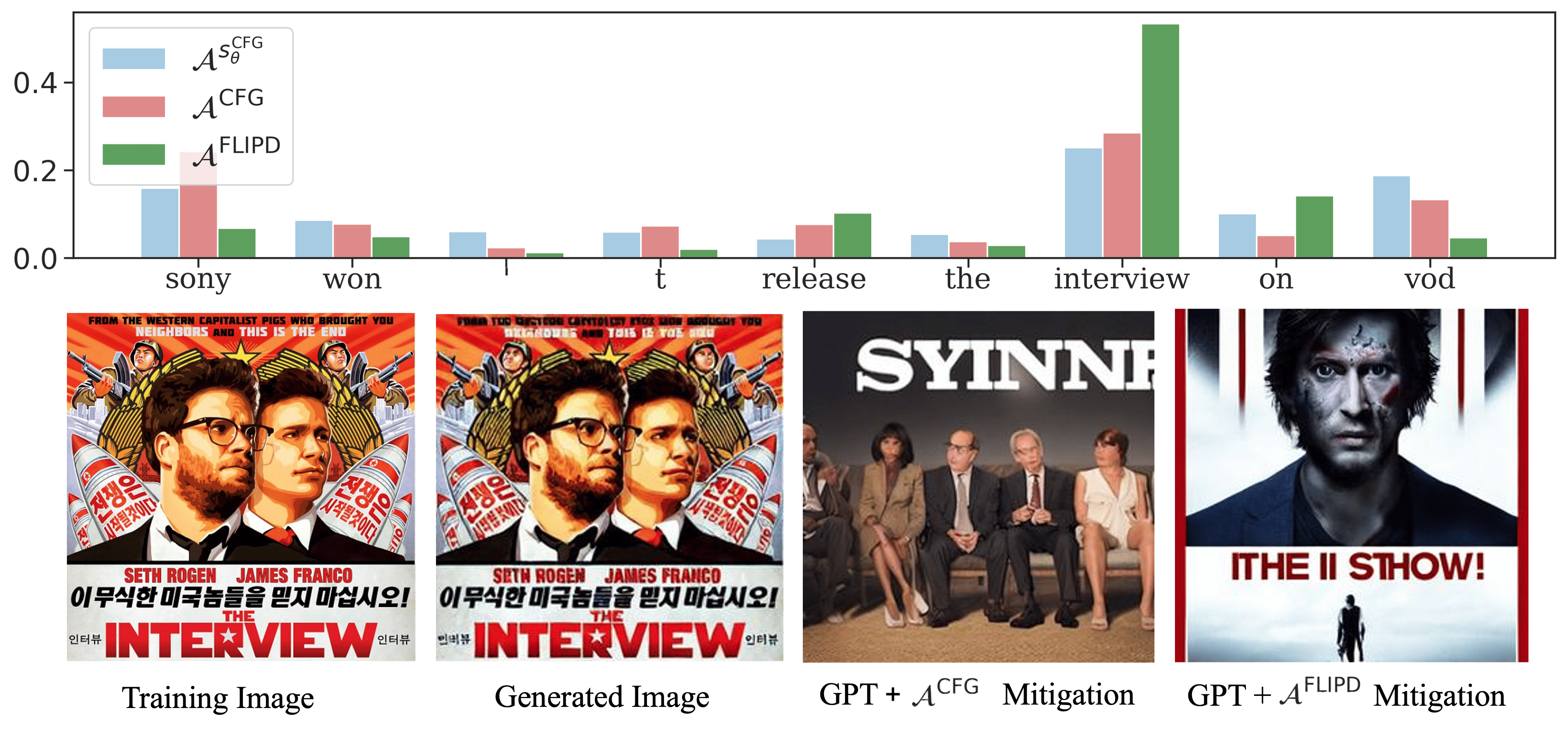}
        \subcaption{All the methods detect ``interview'' (the movie title) as the driver for memorization, $\mathcal{A}^{\text{CFG}}$ also detects ``Sony'' as a significant token.}
    \end{subfigure}%
    \\
    \begin{subfigure}{\textwidth}
        \centering
        \includegraphics[width=\textwidth]{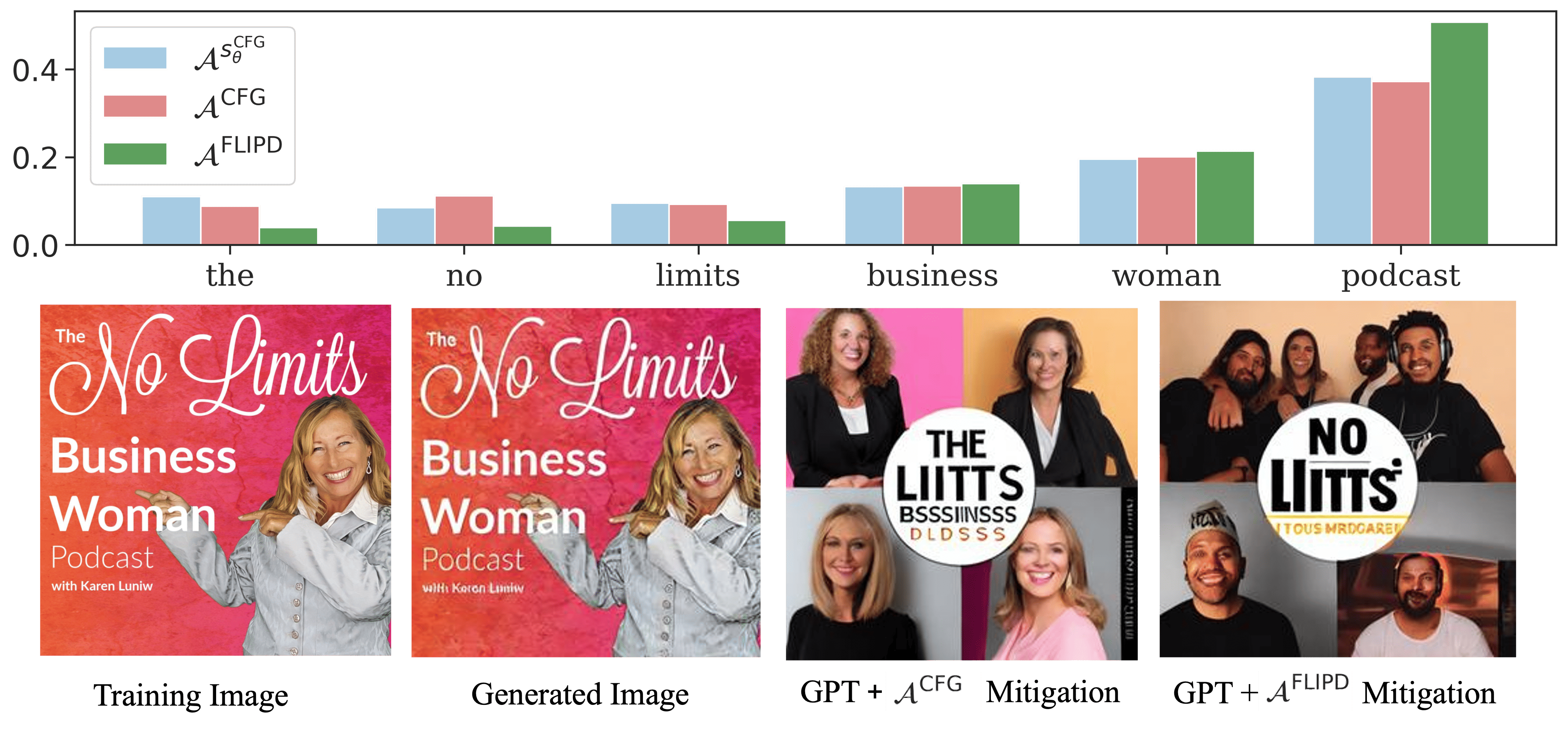}
        \subcaption{All the methods detect ``podcast'' as the token driving memorization.}
    \end{subfigure}%
    \caption{Memorized Stable Diffusion samples with a comparison of token attributions based on three different memorization metrics: the CFG vector norm proposed by \cite{wen2023detecting}, the CFG-adjusted score $\scorecfg$ norm, and the FLIPD estimate for $\lidmodel(\cdot \mid c)$.}
    \label{fig:stable_diffusion_attributions_full}
    \vspace{-20pt}
\end{figure*}

\newpage
\section{Proofs}\label{appx:proofs}
We restate each theorem in full formality below along with their proofs.

Throughout this section, we let $\Pgt$ and $\Pmodel$ be the probability measures of the ground truth data and model, respectively. We assume that the respective supports of $\Pgt$ and $\Pmodel$ are $\Mgt, \Mmodel \subset \dataspace$, smooth Riemannian submanifolds of the Euclidean space $\dataspace$ with metrics $\ggt$ and $\gmodel$ respectively. We denote the Riemannian measures on $\Mgt$ and $\Mmodel$ as $\mugt$ and $\mumodel$, respectively, so that $\pgt = \rd \Pgt / \rd \mugt (x)$ and $\pmodel = \rd \Pmodel / \rd \mumodel (x)$. As mentioned in Section \ref{sec:mem_lid}, we take a lax definition of manifold which allows them to vary in dimensionality in different components. A single manifold under our definition is equivalent to a disjoint union of manifolds under the more standard definition.

\subsection{\autoref{thm:duplication}} 
\begin{restatable}{lemma}{prob_lid}\label{lem:duplication}
Assume that $\pgt > 0$ for every $x \in \Mgt$, and let $x_0 \in \Mgt$. Then, the following are equivalent:
\begin{enumerate}
    \item $P_*\left(\{x_0\}\right) > 0$, and
    \item $\lidgt\left(x_0\right) = 0$.
\end{enumerate}
\end{restatable}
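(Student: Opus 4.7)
The plan is to prove the two implications separately, using the Radon--Nikodym relation $\pgt = d\Pgt/d\mugt$ together with the elementary fact that the Riemannian measure on a manifold component of dimension $k$ assigns unit mass to points when $k=0$ and zero mass to points when $k \geq 1$.

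For the direction $\lidgt(x_0)=0 \Rightarrow \Pgt(\{x_0\})>0$: under the paper's generalized definition of manifold, $\lidgt(x_0)=0$ means that the connected component of $\Mgt$ containing $x_0$ is the singleton $\{x_0\}$, i.e.\ $x_0$ is an isolated point of $\Mgt$. The Riemannian measure of a 0-dimensional manifold is the counting measure, so $\mugt(\{x_0\})=1$. Combining this with the hypothesis $\pgt(x_0)>0$ yields
\begin{equation*}
\Pgt(\{x_0\}) = \int_{\{x_0\}} \pgt \, d\mugt = \pgt(x_0) \cdot \mugt(\{x_0\}) > 0.
\end{equation*}

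For the direction $\Pgt(\{x_0\})>0 \Rightarrow \lidgt(x_0)=0$, I would argue by contrapositive. Assume $\lidgt(x_0) = k \geq 1$. Then, in some neighbourhood $U$ of $x_0$ in $\Mgt$, the manifold is $k$-dimensional and the Riemannian measure pulls back locally to Lebesgue measure on $\mathbb{R}^k$ via a chart, which assigns zero mass to any singleton. Hence $\mugt(\{x_0\})=0$, and the absolute continuity $\Pgt \ll \mugt$ encoded by the existence of the density $\pgt$ immediately gives $\Pgt(\{x_0\}) = \int_{\{x_0\}} \pgt\, d\mugt = 0$, contradicting the assumption.

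The main obstacle I anticipate is making the generalized-manifold bookkeeping at $x_0$ rigorous: specifically, arguing that $\lidgt(x_0) = 0$ really forces $\{x_0\}$ to be an isolated component rather than, say, a limit of higher-dimensional pieces. Under the paper's convention that $\Mgt$ is treated as a disjoint union of submanifolds of possibly varying dimensions, this is essentially built in; one only needs to note that the local dimension is constant on each component and that a 0-dimensional component of a submanifold of $\dataspace$ is by definition an isolated point. No hard analysis is needed beyond this unpacking, so the proof should be short.
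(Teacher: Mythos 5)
Your proof is correct and follows essentially the same route as the paper's: in both directions you reduce the question to whether $\mugt(\{x_0\})$ is positive, then read off the answer from the local structure of the Riemannian measure (counting measure on a 0-dimensional component, Lebesgue-via-chart otherwise). The paper organizes $(1)\Rightarrow(2)$ directly (positivity of $P_*$ forces $\mugt(\{x_0\})>0$, which is inconsistent with $\lidgt(x_0)>0$) whereas you do it by contrapositive, and for $(2)\Rightarrow(1)$ the paper argues via $\{x_0\}$ being open in the subspace topology and $x_0 \in \operatorname{supp}\mugt$ rather than asserting $\mugt(\{x_0\})=1$ outright from the counting-measure convention; both are minor stylistic differences, not a different argument. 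The subtlety you flag about a 0-dimensional $\lid$ forcing $\{x_0\}$ to be isolated in all of $\Mgt$ (not merely in its own component) is real and the paper's proof relies on the same implicit assumption when it asserts $\{x_0\}$ is open in $\Mgt$; neither of you proves it, and both implicitly lean on the disjoint-union-of-submanifolds convention being well behaved.
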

\begin{proof} 

$\ $

\begin{itemize}
\item[] $(1) \implies (2)$ 
Assume $P_*\left(\{x_0\}\right) > 0$. 
\begin{equation}
0 <P_*\left(\{x_0\}\right) = \int_{\{x_0\}}\pgt \rd\mugt(x),
\end{equation}
which necessitates $\mugt(\{x_0\}) > 0$. If we had $\lidgt(x_0) > 0$ this would incur a contradiction: letting $(U, \phi)$ be a chart around $x_0$, then by the definition of $\mugt$,
\begin{equation}\label{eq:c1}
    0 < \mugt(\{x_0\}) = \int_{\phi(\{x_0\})}\sqrt{\det(\ggt)}d\lambda,
\end{equation}
where $\lambda$ is the Lebesgue measure on $\R^{\lidgt(x_0)}$ or the counting measure if $\lidgt(x_0)=0$. Due to the singleton domain of integration, positivity of the integral in \autoref{eq:c1} would be impossible \emph{unless} $\lidgt(x_0) = 0$.

\item[] $(2) \implies (1)$ Suppose $\lid\left(x_0\right) = 0$. This implies that $\{x_0\}$ is an open set in the subspace topology of $\Mgt$. Since $x_0 \in \text{supp }\mugt$, any open set containing $x_0$ must have positive measure under $\mugt$, so that $\mugt(\{x_0\})>0$. Then, since $\Pgt(\{x_0\}) = p_*(x_0) \mugt(\{x_0\})$ and $p_*(x_0) > 0$, it follows that $\Pgt(\{x_0\}) > 0$.
    
\end{itemize}

\end{proof}

\begin{restatable}[Formal Restatement of \autoref{thm:duplication}]{proposition}{duplication}
Assume that $\pgt > 0$ for every $x \in \Mgt$. Let $\dataset$ be a training dataset drawn independently from $\pgt$. Then:
\begin{enumerate}
    \item If duplicates occur in $\dataset$ with positive probability, then they will occur at a point $x_0$ such that $\lidgt(x_0) = 0$.
    \item If $\lidgt(x_0) = 0$ for some $x_0 \in \Mgt$, then the probability of duplication in $\dataset$ will converge to 1 as $\numdatapoints \to \infty$.
\end{enumerate}
\end{restatable}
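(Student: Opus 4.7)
The plan is to reduce both statements to the equivalence established in \autoref{lem:duplication}, which relates $\lidgt(x_0) = 0$ to $\Pgt(\{x_0\}) > 0$. Once phrased in terms of atoms of $\Pgt$, both claims become standard facts about i.i.d.\ sampling.

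For part (1), I would argue the contrapositive: assume $\Pgt$ has no atoms, i.e.\ $\Pgt(\{x\}) = 0$ for all $x \in \dataspace$, and show duplicates occur with probability zero. By independence and Fubini,
\begin{equation}
    \Pr(X_i = X_j) = \int \Pgt(\{y\})\, \rd\Pgt(y) = 0,
\end{equation}
so a union bound over the $\binom{\numdatapoints}{2}$ pairs gives $\Pr(\exists i\neq j: X_i=X_j)=0$. Hence if duplicates occur with positive probability, $\Pgt$ must have at least one atom $x_0$, and \autoref{lem:duplication} then delivers $\lidgt(x_0)=0$. A small subtlety is identifying \emph{at which} point a duplicate occurs; for this it suffices to note that the set of atoms of $\Pgt$ is at most countable, so by countable additivity some atom $x_0$ must be visited by colliding samples with positive probability.

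For part (2), \autoref{lem:duplication} gives $p \coloneqq \Pgt(\{x_0\}) > 0$. Let $K_n$ be the number of samples in $\dataset$ equal to $x_0$; then $K_n \sim \mathrm{Binomial}(\numdatapoints, p)$, so
\begin{equation}
    \Pr(\text{no duplication at } x_0) \le \Pr(K_n \le 1) = (1-p)^{\numdatapoints} + \numdatapoints\, p\,(1-p)^{\numdatapoints-1},
\end{equation}
which tends to $0$ as $\numdatapoints \to \infty$. Therefore the probability of duplication in $\dataset$ converges to $1$.

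I do not anticipate a serious obstacle: the main work was done in \autoref{lem:duplication}, and the remaining content is purely measure-theoretic/combinatorial. The most delicate step is verifying in part (1) that collisions under a non-atomic $\Pgt$ truly have probability zero (requiring the Fubini step above rather than a naive ``measure of a point is zero'' appeal), and cleanly handling the fact that the support of $\Pgt$ may still be uncountable while atoms are countable.
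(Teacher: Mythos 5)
Your proof is correct and follows essentially the same strategy as the paper: invoke \autoref{lem:duplication} to translate $\lidgt(x_0)=0$ into $\Pgt(\{x_0\})>0$, then apply Fubini for part (1) and a direct probability computation for part (2). One small remark: your Binomial formulation of part (2), bounding $\Pr(K_n \le 1) = (1-p)^n + n p (1-p)^{n-1} \to 0$, is actually tighter and cleaner than the paper's pairwise bound, whose intermediate inequality
\begin{equation}
\Pgt(x_i = x_j = x_0 \text{ for some } i<j) \;\geq\; 1 - \Pgt(x_i \neq x_0 \text{ for all } i \geq 2)
\end{equation}
is reversed as written (the right-hand side overcounts: it includes the event that exactly one $x_i$ with $i \geq 2$ hits $x_0$ while $x_1$ does not); your version avoids this issue entirely while reaching the same limit.
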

\begin{proof}

$\ $

\begin{itemize}
\item[] $(1)$  Due to \autoref{lem:duplication}, it suffices to show that any duplicates in $\dataset$ must occur at a point $x_0$ such that $\Pgt(\{x_0\}) > 0$. It is thus enough to show that if $\Pgt(\{x_0\})=0$ for every $x_0 \in \Mgt$, then $\Pgt(x_1 = x_2) = 0$. Assume that $\Pgt(\{x_0\})=0$ for every $x_0 \in \Mgt$. Since $x_1$ and $x_2$ are independent, $\Pgt(x_1 = x_2) = \Pgt \times \Pgt (D)$, where $D = \{(x, x) \in \Mgt \times \Mgt \mid x \in \Mgt\}$. We then have:
\begin{align}
    \Pgt \times \Pgt (D) & = \int_D \rd \Pgt \times \Pgt (x_1, x_2) = \int_{\Mgt} \int_{\{x_2\}} \rd \Pgt(x_1) \rd \Pgt(x_2)\\
    & = \int_{\Mgt} \Pgt(\{x_2\}) \rd \Pgt(x_2) = 0,
\end{align}
where the second equality follows from Fubini's theorem (see e.g.\ Theorem 7.26 in \citet{folland2013real}), and the last equality follows by assumption. This finishes this part of the proof.
\item[] $(2)$ Suppose $\lidgt(x_0) = 0$ for some $x_0 \in \Mgt$. By \autoref{lem:duplication}, we have $\Pgt\left(\{x_0\}\right) > 0$. In this case, $P_*\left(x_i = x_0\right) > 0$ for all $i \in \{1, \ldots, \numdatapoints\}$, meaning that 
\begin{align}
    \hspace{-3pt}P_*(x_i=x_j\text{ for some }1\leq i < j \leq n)
    &\geq \Pgt(x_i=x_j = x_0 \text{ for some }1\leq i < j \leq n) \\
    &\geq 1 - \Pgt(x_i \neq x_0\text{ for all $i \geq 2$})\\
    &= 1 -\Pgt(x_2 \neq x_0)\cdots \Pgt(x_\numdatapoints \neq x_0)\\
    &= 1 - (1 - \Pgt(\left\{x_0\right\}))^{\numdatapoints - 1}\\
    &\longrightarrow 1\,,
\end{align}
where the last line depicts the limiting behaviour as $\numdatapoints \to \infty$.
\end{itemize}
\end{proof}

\subsection{\autoref{thm:conditioning}}

Here we presume the joint distribution of model samples and $\cDim$-dimensional conditioning inputs $(x, c) \in \jointdataspace$ has support $S \subset \jointdataspace$ such that $\{x : (x, \conditioning) \in S\text{ for some $\conditioning\in \condspace$}\} = \Mmodel$. We define the \emph{conditional support} of $x$ given $\conditioning$ to be $S(\conditioning) = \{x: (x, \conditioning) \in S\}$.

 \begin{restatable}[Formal Restatement of \autoref{thm:conditioning}]{proposition}{duplication}
Let $x_0 \in \Mmodel$ and $\conditioning \in \condspace$. Suppose that $S(\conditioning)$ is also a submanifold of $\dataspace$ and denote its $\lid$ at $x_0$ by $\lidmodel(x_0 \mid \conditioning)$. We then have
 \begin{equation}
     \lidmodel(x_0 \mid \conditioning) \leq \lidmodel(x_0).
 \end{equation}
 \end{restatable}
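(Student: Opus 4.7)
The plan is to reduce the statement to the elementary fact that a smooth submanifold cannot be contained in a smooth submanifold of strictly smaller dimension, by working with tangent spaces in the ambient Euclidean space $\dataspace$.

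First I would observe the key set-theoretic containment: by the definitions of $\Mmodel$ and $S(\conditioning)$, we have
\begin{equation}
    S(\conditioning) = \{x \in \dataspace : (x, \conditioning) \in S\} \subseteq \{x \in \dataspace : (x, \conditioning') \in S \text{ for some } \conditioning' \in \condspace\} = \Mmodel.
\end{equation}
So near $x_0$, the conditional support is a subset of the model manifold. Intuitively, conditioning on $\conditioning$ restricts the set of admissible $x$, hence the inclusion.

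Next I would pass to tangent spaces at $x_0$. Since we assume $S(\conditioning)$ and $\Mmodel$ are smooth submanifolds of $\dataspace$, in a sufficiently small neighbourhood $U$ of $x_0$ they are each of constant dimension $\lidmodel(x_0 \mid \conditioning)$ and $\lidmodel(x_0)$ respectively. Identifying tangent spaces with subspaces of $\dataspace$ via the embedding, every tangent vector at $x_0$ is of the form $\gamma'(0)$ for a smooth curve $\gamma:(-\varepsilon,\varepsilon)\to\dataspace$ lying in the submanifold. Any smooth curve that stays inside $S(\conditioning)$ automatically stays inside $\Mmodel$ by the inclusion above, and it remains smooth as a map into $\dataspace$. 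Therefore
\begin{equation}
    T_{x_0} S(\conditioning) \subseteq T_{x_0} \Mmodel
\end{equation}
as linear subspaces of $\dataspace$.

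Taking dimensions yields $\lidmodel(x_0 \mid \conditioning) = \dim T_{x_0} S(\conditioning) \leq \dim T_{x_0} \Mmodel = \lidmodel(x_0)$, as claimed. The only subtle step is the tangent-space inclusion, which I expect to be the main obstacle to state cleanly: the paper permits ``generalized manifolds'' whose dimension may vary across components, so I would explicitly note that near $x_0$ the local dimensions are locally constant and argue inside a single coordinate neighbourhood, invoking the fact that an embedded submanifold of $\dataspace$ has its tangent space canonically realized as a subspace of $\dataspace$, so that set-theoretic containment of manifolds forces subspace containment of tangent spaces. An equivalent argument via invariance of domain (an injective continuous map from an open subset of $\R^k$ into an $\ell$-manifold requires $k \leq \ell$) would also work if smoothness of the inclusion is considered delicate.
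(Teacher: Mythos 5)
Your proof is correct and follows essentially the same route as the paper's: the paper simply asserts that $S(\conditioning) \subseteq \Mmodel$ makes $S(\conditioning)$ a submanifold of $\Mmodel$, whence the dimension inequality is immediate, while you spell out the same fact at the level of tangent spaces ($T_{x_0}S(\conditioning) \subseteq T_{x_0}\Mmodel$ as subspaces of $\dataspace$, using that both are embedded so curves automatically restrict). This is the same argument with the implicit step made explicit.
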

\begin{proof} If $S(\conditioning)$ is a submanifold of $\dataspace$, then it is also a submanifold of $\Mmodel$. The inequality follows directly.\end{proof}

\subsection{\autoref{thm:od_mem_copy}}\label{app:thm_copy}

Here we show that OD-mem implies data-copying under the definition of \citet{bhattacharjee2023data}.

\begin{restatable}{lemma}{submanifold_balls}\label{lem:submanifold_balls}
Suppose $(\M, g)$ is a $d_0$-dimensional smooth Riemannian submanifold of Euclidean space $\R^d$. Let $\mu$ be the Riemannian measure of $\M$. If $B_r^d(x_0)$ denotes the $d$-dimensional ball of radius $r$ centred at $x_0$ in $\dataspace$, then there exist constants $C_1^\M > 0$ and $C_2^\M > 0$ not depending on $r$ such that for all small enough $r$:
\begin{equation}
    C_1^\M r^{d_0} \leq \mu\left(B_r^d(x_0) \cap \M \right) \leq C_2^\M r^{d_0}.
\end{equation}
\end{restatable}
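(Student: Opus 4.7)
The plan is to reduce to a local computation via a normal-form parametrization of $\M$ near $x_0$. Using the standard local graph representation of an embedded submanifold, I would choose orthonormal coordinates on $\R^d$ that split as $T_{x_0}\M \oplus (T_{x_0}\M)^\perp \cong \R^{d_0} \times \R^{d - d_0}$ (with the tangent space understood via the Euclidean embedding), and write $\M$ near $x_0$ as the graph $\{x_0 + (v, f(v)) : v \in V\}$ for an open neighborhood $V$ of $0$ in $\R^{d_0}$ and a smooth map $f : V \to \R^{d - d_0}$ satisfying $f(0) = 0$ and $Df(0) = 0$ (the latter because the tangent space of the graph at $x_0$ must be exactly $T_{x_0}\M$). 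Taylor's theorem then yields a constant $K > 0$ and a smaller neighborhood of $0$ on which $\|f(v)\| \leq K \|v\|^2$.

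Next, I would use this Taylor bound to sandwich the Euclidean ball intersected with $\M$ between two preimage sets in the parameter space. Since $\|(v, f(v))\|^2 = \|v\|^2 + \|f(v)\|^2$, we have $\|v\|^2 \leq \|(v, f(v))\|^2 \leq \|v\|^2 + K^2 \|v\|^4$. Hence for all sufficiently small $r$,
\begin{equation}
\{x_0 + (v, f(v)) : \|v\| \leq r/2\} \;\subseteq\; B_r^d(x_0) \cap \M \;\subseteq\; \{x_0 + (v, f(v)) : \|v\| \leq r\},
\end{equation}
where the upper inclusion is immediate from $\|v\| \leq \|(v, f(v))\|$, and the lower inclusion requires $r$ small enough that $K^2(r/2)^2 \leq 3$ (so that $(r/2)^2 + K^2(r/2)^4 \leq r^2$).

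Finally, I would compute $\mu(B_r^d(x_0) \cap \M)$ by pulling back the Riemannian volume form under the parametrization $\psi(v) = x_0 + (v, f(v))$: it has density $\sqrt{\det g_{ij}(v)}\, dv$, where $g_{ij}$ denotes the metric components in these coordinates. Since $g$ is a smooth Riemannian metric, $\det g_{ij}$ is positive and continuous at $0$, so there exist positive constants $c_1, c_2$ with $c_1 \leq \sqrt{\det g_{ij}(v)} \leq c_2$ on a small enough neighborhood of $0$. Combining this uniform bound with the Lebesgue volume $\omega_{d_0}\rho^{d_0}$ of a radius-$\rho$ ball in $\R^{d_0}$ and the sandwich above, one obtains
\begin{equation}
c_1 \omega_{d_0} (r/2)^{d_0} \;\leq\; \mu\!\left(B_r^d(x_0) \cap \M\right) \;\leq\; c_2 \omega_{d_0}\, r^{d_0},
\end{equation}
yielding the claim with $C_1^\M = c_1 \omega_{d_0} 2^{-d_0}$ and $C_2^\M = c_2 \omega_{d_0}$.

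The main obstacle is purely technical: picking one $r$ small enough so that all local ingredients hold simultaneously, namely that (i) the graph parametrization covers $B_r^d(x_0) \cap \M$ (as opposed to missing parts of $\M$ that loop back near $x_0$), (ii) the quadratic Taylor bound on $\|f\|$ is valid, and (iii) the Jacobian factor $\sqrt{\det g_{ij}(v)}$ stays within $[c_1, c_2]$. Each follows from a standard continuity/compactness argument, so no deep ingredient is needed beyond the local embedding theorem and smoothness of $g$.
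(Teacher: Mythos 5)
Your proof is correct and follows essentially the same route as the paper's: both pass to a local graph parametrization over the tangent space, sandwich $B_r^d(x_0) \cap \M$ between graphs over $d_0$-balls, and bound $\sqrt{\det g}$ uniformly near $x_0$. The only cosmetic difference is in establishing the lower inclusion: you invoke Taylor's theorem to get $\|f(v)\| \leq K\|v\|^2$ and then pick $r$ with $K^2(r/2)^2 \leq 3$, whereas the paper uses continuity of the ratio $\|u(z)\|/\|z\|$ (which extends continuously to $0$ since $Du(0)=0$) and shrinks the inner radius by the resulting factor $1/\sqrt{1+K^2}$; both are elementary and equivalent in spirit.
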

\begin{proof}
Without loss of generality, by rotating and translating, we assume $x_0 = 0 \in \mathbb{R}^d$ and that the tangent plane of $\M$ in $\mathbb{R}^d$ at $x_0$ is $\mathbb{R}^{d_0}\times \{0\}^{d-d_0}$.

As $\M$ is smooth, in a neighbourhood $U$ of $x_0=0$, $\M$ can be written of a graph of a function $u:U\subset\mathbb{R}^{d_0} \rightarrow \mathbb{R}^{d-d_0}$, such that $u(0) = 0$ and all first derivatives vanish at $0$. Then, for small enough $r > 0$, we have
\begin{equation}
    B_r^d(x_0) \cap \M = \{(z, u(z))\in\mathbb{R}^d \mid z\in \mathbb{R}^{d_0}, \|z\|^2 + \|u(z)\|^2 < r^2\}.
\end{equation}
Let 
\begin{equation}
    G(r) = \{(z, u(z))\in\mathbb{R}^d \mid z\in\mathbb{R}^{d_0}, \|z\| < r\}
\end{equation}
be the graph of $u$ in the open $d_0$-ball $B_r^{d_0}(0)$, and 
\begin{equation}
    \overline{G(r)} = \{(z, u(z))\in\mathbb{R}^d \mid z\in\mathbb{R}^{d_0}, \|z\| \leq r\}
\end{equation}
be the graph of $u$ in the closed $d_0$-ball $\overline{B_r^{d_0}(0)}$. Note that both are defined when $u$ is defined, i.e. small enough $r$, and are subset of $\M$. Then it is clear that we have 
\begin{equation}
    B_r^d(x_0) \cap \M \subseteq G(r).
\end{equation}
Now, consider the function $v(z) = \frac{\|u(z)\|}{\|z\|}$.  Note that $v(z)$ is continuous everywhere in $B_r^{d_0}(0)\setminus \{0\}$. Since $u$ and its derivatives vanish at $z=0$, from the definition, we have $\lim_{z\rightarrow 0} v(z) = 0$ as well. Thus $v(z)$ can be extended to a continuous function in $B_r^{d_0}(0)$. Fix $R>0$, and let $K$ be the maximum of $v$ over $\overline{B_R^{d_0}}$. Then, if $\|z\| < ar$ where $a = \frac{1}{\sqrt{1 + K^2}}$, we have
\begin{equation}
    \|z\|^2 + \|u(z)\|^2 = \|z\|^2 (1 + v(z)^2) \leq \|z\|^2 (1 + K^2) < r^2.
\end{equation}
This shows that $G(ar) \subseteq B_r^d(x_0) \cap \M$ for $0 < r < R$. Thus we have
\begin{equation}
    \mu(G(ar)) \leq \mu(B_r^d(x_0) \cap \M)\leq \mu(G(r)).
\end{equation}
Let $K_1$ and $K_2$ be the minimum and maximum of $\sqrt{\det g}$ over $\overline{B_r^{d_0}(0)}$, respectively. Then we have
\begin{equation}
    \mu(G(r)) = \int_{B_r^{d_0}(0)} \sqrt{\det g} \, \rd z \leq \int_{B_r^{d_0}(0)} K_2 \rd z = K_2 V_{d_0} r^{d_0}
\end{equation}
and
\begin{equation}
    \mu(G(r)) = \int_{B_r^{d_0}(0)} \sqrt{\det g} \, \rd z \geq \int_{B_r^{d_0}(0)} K_1 \rd z = K_1 V_{d_0} r^{d_0},
\end{equation}
where $V_{d_0}$ is the Euclidean volume of the unit $d_0$-ball. Combining the above results, we have
\begin{equation}
    K_1 V_{d_0} a^{d_0} r^{d_0} \leq \mu(G(ar)) \leq \mu(B_r^d(x_0) \cap \M)\leq \mu(G(r)) \leq K_2 V_{d_0} r^{d_0},
\end{equation}
which finishes the proof with $C_1^\M = K_1 V_{d_0} a^{d_0}$ and $C_2^\M = K_2 V_{d_0}$.
\end{proof}

\begin{restatable}[Formal Restatement of \autoref{thm:od_mem_copy}]{theorem}{duplication}
Assume that $\pgt$ and $\pmodel$ are continuous and that $\pmodel$ is strictly positive. Let $x_0 \in \Mmodel \cap \Mgt$ and let $\pmodel$ be a model undergoing OD-mem at $x_0$, i.e.\ $ 0 \leq \lidmodel(x_0) < \lidgt(x_0)$. Then for any $\lambda>1$ and $0 < \gamma < 1$, 
there exists a radius $r_0$ such that any $x \in B_{r_0}^d(x_0)$ is a $(\lambda, \gamma)$-copy of $x_0$ according to \autoref{def:copy}, and 
if $\{x_j\}_{j=1}^m$ is generated independently from $\pmodel$, then the probability of $(\lambda, \gamma)$-copying $x_0$ converges to $1$ as $m \to \infty$.
 \end{restatable}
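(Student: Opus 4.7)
The plan is to leverage Lemma~\ref{lem:submanifold_balls} together with continuity of $\pgt$ and $\pmodel$ to obtain matching polynomial-in-$r$ estimates of $\Pmodel(B_r^d(x_0))$ and $\Pgt(B_r^d(x_0))$, with different exponents $\lidmodel(x_0)$ and $\lidgt(x_0)$ respectively. Since these exponents differ, driving $r$ to zero will simultaneously make condition (iii) of Definition~\ref{def:copy} hold (because $\Pgt$ vanishes as a high power of $r$) and make the ratio in condition (ii) blow up (because $\Pmodel$ vanishes only as a lower power). Fixing a small enough radius $r_0$ then yields a neighborhood in which every point is a $(\lambda,\gamma)$-copy of $x_0$. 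The asymptotic claim then follows from a straightforward geometric-probability argument.

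\textbf{Step 1: local density bounds.} Let $d_\theta = \lidmodel(x_0)$ and $d_* = \lidgt(x_0)$, so by hypothesis $d_\theta < d_*$. Since $\pmodel$ is continuous and strictly positive at $x_0$, and $\pgt$ is continuous (hence locally bounded) on $\Mgt$, there exist constants $m, M > 0$ and a radius $R > 0$ such that $\pmodel(x) \geq m$ for all $x \in B_R^d(x_0) \cap \Mmodel$ and $\pgt(x) \leq M$ for all $x \in B_R^d(x_0) \cap \Mgt$, and such that the conclusion of Lemma~\ref{lem:submanifold_balls} applies to both $\Mmodel$ and $\Mgt$ on scale $R$.

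\textbf{Step 2: the measure estimates.} For any $r < R$, by the definition of $\Pmodel$ and $\Pgt$ as integrals of $\pmodel$ and $\pgt$ against the Riemannian measures $\mumodel$ and $\mugt$, and then invoking Lemma~\ref{lem:submanifold_balls},
\begin{align}
\Pmodel(B_r^d(x_0)) &\geq m \cdot \mumodel\!\left(B_r^d(x_0) \cap \Mmodel\right) \geq m\, C_1^{\Mmodel}\, r^{d_\theta} =: K_1\, r^{d_\theta}, \\
\Pgt(B_r^d(x_0)) &\leq M \cdot \mugt\!\left(B_r^d(x_0) \cap \Mgt\right) \leq M\, C_2^{\Mgt}\, r^{d_*} =: K_2\, r^{d_*}.
\end{align}
Since $d_\theta - d_* < 0$, the ratio $\Pmodel(B_r^d(x_0)) / \Pgt(B_r^d(x_0)) \geq (K_1/K_2)\, r^{d_\theta - d_*}$ diverges to infinity as $r \to 0$, while the upper bound $K_2 r^{d_*}$ on $\Pgt(B_r^d(x_0))$ tends to zero.

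\textbf{Step 3: choosing the witness radius and concluding.} Pick $r_0 \in (0, R]$ small enough that both $(K_1/K_2)\, r_0^{d_\theta - d_*} \geq \lambda$ and $K_2\, r_0^{d_*} \leq \gamma$; by Step~2 this is possible. Then for any $x \in B_{r_0}^d(x_0)$, taking $r = r_0$ in Definition~\ref{def:copy} verifies (i) trivially, (ii) because $\Pmodel(B_{r_0}^d(x_0)) \geq \lambda\, \Pgt(B_{r_0}^d(x_0))$, and (iii) because $\Pgt(B_{r_0}^d(x_0)) \leq \gamma$. For the asymptotic part, note that $\Pmodel(B_{r_0}^d(x_0)) \geq K_1\, r_0^{d_\theta} > 0$, so the probability that none of $m$ independent draws from $\pmodel$ lands in $B_{r_0}^d(x_0)$ is $(1 - \Pmodel(B_{r_0}^d(x_0)))^m \to 0$ as $m \to \infty$, giving convergence to $1$ of the probability of generating a $(\lambda, \gamma)$-copy.

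\textbf{Main obstacle.} The routine-but-delicate part is aligning all the local quantitative bounds on a single neighborhood: one must pick $R$ small enough that the two continuity bounds ($m, M$) and the two Lemma~\ref{lem:submanifold_balls} estimates (with their constants $C_1^{\Mmodel}, C_2^{\Mgt}$) simultaneously apply. A minor subtlety is the degenerate case $d_\theta = 0$, where $r^{d_\theta} = 1$ means $\Pmodel(B_r^d(x_0))$ is bounded below by a positive constant for all small $r$ --- this actually simplifies the argument rather than complicating it, since the ratio blow-up then comes entirely from the $\Pgt$ side. The hypothesis $\lidgt(x_0) > \lidmodel(x_0) \geq 0$ ensures $d_* \geq 1$, so $\Pgt(B_r^d(x_0)) \to 0$ as needed.
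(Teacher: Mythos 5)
Your proof is correct and follows essentially the same route as the paper's: lower-bound $\Pmodel(B_r^d(x_0))$ and upper-bound $\Pgt(B_r^d(x_0))$ via Lemma~\ref{lem:submanifold_balls} together with continuity bounds on the densities, observe that the mismatched exponents $\lidmodel(x_0) < \lidgt(x_0)$ make the ratio diverge and $\Pgt$ vanish as $r \to 0$, choose $r_0$ accordingly, and finish with the $(1-\Pmodel(B_{r_0}^d(x_0)))^m \to 0$ argument. The only cosmetic difference is that you package the $\sup/\inf$ density bounds as explicit constants $m, M$ and derive $\Pmodel(B_{r_0}^d(x_0))>0$ directly from the lower bound $K_1 r_0^{d_\theta}$ rather than from the open-set-in-subspace-topology argument the paper uses.
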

\begin{proof} 
For an arbitrary value of $r>0$, we have that
\begin{equation}
    \Pmodel(B_r^d(x_0)) \geq \mumodel(B_r^d(x_0) \cap \Mmodel ) 
  \inf_{x\in B_r^d(x_0) \cap \Mmodel} \pmodel
\end{equation}
and similarly,
\begin{equation}
    \Pgt(B_r^d(x_0)) \leq \mugt(B_r^d(x_0) \cap \Mgt ) 
 \sup_{x\in B_r^d(x_0) ) \cap \Mgt} \pgt.
\end{equation}
Using \autoref{lem:submanifold_balls},
\begin{align}
    \frac{\Pgt(B_r^d(x_0))}{\Pmodel(B_r^d(x_0))} & \leq \frac{\mugt(B_r^d(x_0) \cap \Mgt ) 
 }{\mumodel(B_r^d(x_0) \cap \Mmodel)} 
 \cdot
 \frac{\sup_{x\in B_r^d(x_0) \cap \Mgt} \pgt}{ 
  \inf_{x\in B_r^d(x_0) \cap \Mmodel} \pmodel} \\ 
  & \leq \dfrac{C_2^{\Mgt}}{C_1^{\Mmodel}}r^{\lidgt(x_0) - \lidmodel(x_0)} \frac{\sup_{x\in B_r^d(x_0) \cap \Mgt} \pgt}{ 
  \inf_{x\in B_r^d(x_0) \cap \Mmodel} \pmodel}. \label{eq:almost_limit}
\end{align}
Note that by continuity and positivity of $\pmodel$, $\inf_{x\in B_r^d(x_0) \cap \Mmodel} \pmodel$ is bounded away from $0$ as $r \rightarrow 0$, and by continuity of $\pgt$, $\sup_{x\in B_r^d(x_0) \cap \Mgt}$ is bounded. In turn, since by assumption $\lidgt(x_0) > \lidmodel(x_0)$, \autoref{eq:almost_limit} converges to $0$ as $r \rightarrow 0$. 
As a result, there exists some ${r_0}$ sufficiently small enough for both
\begin{equation}
    \frac{\Pgt(B_{r_0}^d(x_0))}{\Pmodel(B_{r_0}^d(x_0))}  \leq \frac{1}{\lambda}
\end{equation}
and 
\begin{equation}
    \Pgt(B_{r_0}^d(x_0)) \leq \gamma
\end{equation}
to be true (the latter arising from the fact that $\Pgt(B_r^d(x_0)) \to 0$ as $r \to 0$, which follows from $\Pgt$ being absolutely continuous with respect to $\mugt$ and $\mugt$ not assigning positive measure to singletons because $\lidgt(x_0) > 0$).

Thus, any $x \in B_{r_0}^d(x_0)$ is a $(\lambda, \gamma)$-copy of $x_0$. Since $B_{r_0}^d(x_0) \cap \Mmodel$ contains an open set in the subspace topology of $\Mmodel$, it follows that $\mumodel(B_{r_0}^d(x_0) \cap \Mmodel) > 0$. Then, since $\pmodel$ is strictly positive, $\Pmodel(B_{r_0}^d(x_0)) > 0$, so that
\begin{align}
    \Pmodel( x_j & \text{ is a $(\lambda, \gamma)$-copy of $x_0$ for some }1\leq j \leq m) \\
    & = 1 - \Pmodel(x_j \text{ is not a $(\lambda, \gamma)$-copy of $x_0$ for every }1\leq j \leq m)\\
    & = 1 - \Pmodel(x_1 \text{ is not a $(\lambda, \gamma)$-copy of $x_0$}) \cdots \Pmodel(x_m \text{ is not a $(\lambda, \gamma)$-copy of $x_0$})\\
    & \geq 1 - (1 - \Pmodel(B_{r_0}^d(x_0)))^m
\end{align}
converges to $1$ as $m \to \infty$.

\end{proof}
\newpage
\section{Memorized Images from CIFAR-10}\label{appx:cifar10}
We generate 50,000 images from each model. Since StyleGAN2-ADA is class-conditioned, we generate an equal number of samples per class. 
We then retrieve nearest neighbors from the training dataset using both $(i)$ SSCD distance \citep{pizzi2022self} and $(ii)$ calibrated $\ell_2$ distance \citep{carlini2023extracting}. We find that each metric produces very different results, so we combine results from both to maximize our probability of identifying as many memorized images as possible. Furthermore, both metrics produce many false negatives, so we follow a manual process to produce accurate labels.
 For StyleGAN2-ADA, we take the closest 250 neighbours according to each distance, and for iDDPM, we take the top 300, producing a set of just under 500 or 600 images to visually examine for each model.\footnote{The cutoffs of 250 and 300 were chosen by inspection; from these ranks onwards, images ceased to appear memorized.}
We then label all of these instances as either not memorized, exactly memorized, or reconstructively memorized \citep{somepalli2023diffusion}. All other images are not labelled, and have a low chance of being memorized.

 Here we show each generated image we identified (odd rows) along with its matched training image (even rows below) for iDDPM and StyleGAN2-ADA on CIFAR10. For StyleGAN2-ADA, we labelled no pairs as reconstructive under the calibrated $\ell_2$ distance.

\begin{figure}[h!]
    \raggedright
    \includegraphics[scale=0.7]{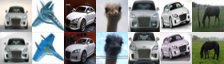}
    \caption{iDDPM: human-labelled reconstructive pairs in the top 300 according to calibrated $\ell_2$.}
\end{figure}

\begin{figure}[h!]
    \raggedright
    \includegraphics[scale=0.7]{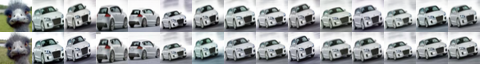}
    \caption{iDDPM: human-labelled exact pairs in the top 300 according to calibrated $\ell_2$.}
\end{figure}

\begin{figure}[h!]
    \raggedright
    \includegraphics[scale=0.7]{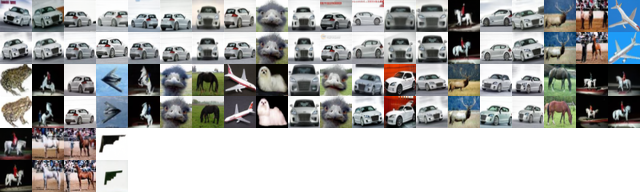}
    \caption{iDDPM: human-labelled reconstructive pairs in the top 300 according to SSCD.}
\end{figure}

\begin{figure}[h!]
    \raggedright
    \includegraphics[scale=0.7]{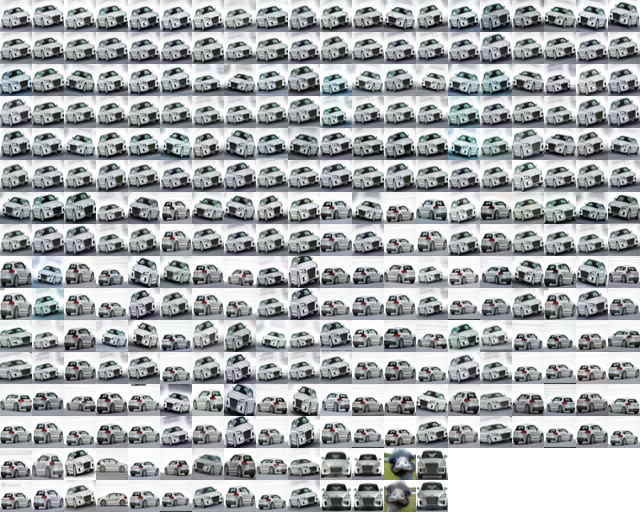}
    \caption{iDDPM: human-labelled exact pairs in the top 300 according to SSCD.}
\end{figure}

\newpage

\begin{figure}[h!]
    \raggedright
    \includegraphics[scale=0.7]{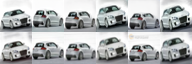}
    \caption{StyleGAN2-ADA: human-labelled exact pairs in the top 250 according to calibrated $\ell_2$.}
\end{figure}

\begin{figure}[h!]
    \raggedright
    \includegraphics[scale=0.7]{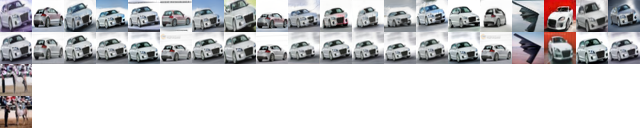}
    \caption{StyleGAN2-ADA: human-labelled reconstructive pairs in the top 250 according to SSCD.}
\end{figure}

\begin{figure}[h!]
    \raggedright
    \includegraphics[scale=0.7]{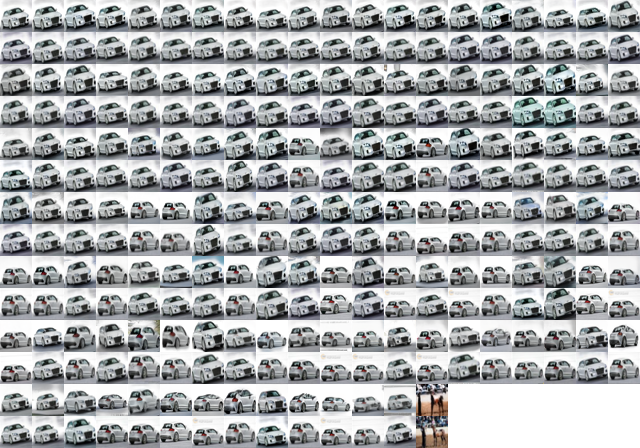}
    \caption{StyleGAN2-ADA: human-labelled exact pairs in the top 250 according to SSCD.}
\end{figure}

\end{document}